\makeatletter \newcommand{\gettikzxy}[3]{%
  \tikz@scan@one@point\pgfutil@firstofone#1\relax \edef#2{\the\pgf@x}%
  \edef#3{\the\pgf@y}%
} \makeatother
    \definecolor{ColorFPT}{RGB}{100, 190, 50}
	\definecolor{ColorXP}{RGB}{255, 200, 100}
	\definecolor{ColorNPH}{RGB}{255, 100, 100}
\newcommand{\headerM}[2]{\node[rectangle, rounded corners, draw=black, thick, inner sep=4pt, fill=black!05,yshift=-1pt,anchor=south] at (#1.north) {#2};}
\newcommand{\bottomM}[2]{\node[rectangle, rounded corners, draw=black, thick, inner sep=4pt, fill=black!05,yshift=1pt,anchor=north] at (#1.south) {#2};}
\newcommand{\paramBox}[4]{
		\node[inner sep = 2pt,anchor = south west] (A#1) at (#2) {
% 		\node[draw=black,fill=black,inner sep = 1pt,rounded corners] (#1) at (#2) {
%		\begin{tabular}{C{\CellWidth}|C{\CellWidth}|C{\CellWidth}}
%			#4
%		\end{tabular}
		\begin{tabularx}{\TableWidth}{m|m}
			%&  \\ [-10.5pt] % dirty hack... see what happens if this line is commented out...
			#4
		\end{tabularx}
		};
		\node[inner sep=-0.6pt,draw=white,ultra thick,rounded corners,fit=(A#1)] () {};
		\node[inner sep=-1.4pt,draw=white,ultra thick,rounded corners,fit=(A#1)] () {};
		\node[inner sep=-2pt,draw=black,thick,rounded corners,fit=(A#1)] (#1) {};
		\begin{pgfonlayer}{foreground}
			\headerM{#1}{#3}
			%\headerR{#1}{#5}
		\end{pgfonlayer}
}
\newcommand{\paramBoxBelow}[4]{
		\node[inner sep = 2pt,anchor = south west] (A#1) at (#2) {
% 		\node[draw=black,fill=black,inner sep = 1pt,rounded corners] (#1) at (#2) {
%		\begin{tabular}{C{\CellWidth}|C{\CellWidth}|C{\CellWidth}}
%			#4
%		\end{tabular}
		\begin{tabularx}{\TableWidth}{m|m}
			%&  \\ [-10.5pt] % dirty hack... see what happens if this line is commented out...
			#4
		\end{tabularx}
		};
		\node[inner sep=-0.6pt,draw=white,ultra thick,rounded corners,fit=(A#1)] () {};
		\node[inner sep=-1.4pt,draw=white,ultra thick,rounded corners,fit=(A#1)] () {};
		\node[inner sep=-2pt,draw=black,thick,rounded corners,fit=(A#1)] (#1) {};
		\begin{pgfonlayer}{foreground}
			\bottomM{#1}{#3}
			%\headerR{#1}{#5}
		\end{pgfonlayer}
}
\newcommand{\paramBoxXXL}[5]{
		\node[inner sep = 2pt,anchor = south west] (A#1) at (#2) {
% 		\node[draw=black,fill=black,inner sep = 1pt,rounded corners] (#1) at (#2) {
%		\begin{tabular}{C{\CellWidth}|C{\CellWidth}|C{\CellWidth}}
%			#4
%		\end{tabular}
		\begin{tabularx}{\TableWidth}{m|m}
			%&  \\ [-10.5pt] % dirty hack... see what happens if this line is commented out...
			#4
		\end{tabularx}
		};
		\node[inner sep=-0.6pt,draw=white,ultra thick,rounded corners,fit=(A#1)] () {};
		\node[inner sep=-1.4pt,draw=white,ultra thick,rounded corners,fit=(A#1)] () {};
		\node[inner sep=-2pt,draw=black,thick,rounded corners,fit=(A#1)] (#1) {};
		\begin{pgfonlayer}{foreground}
			\headerM{#1}{#3}
			\bottomM{#1}{#5}
			%\headerR{#1}{#5}
		\end{pgfonlayer}
}
\newif\ifjour
\newif\ifarxiv
\Crefname{theorem}{Theorem}{Theorems}
\crefname{theorem}{Thm.}{Thms.}
\Crefname{proposition}{Proposition}{Propositions}
\crefname{proposition}{Prop.}{Props.}
\Crefname{section}{Section}{Sections}
\crefname{section}{Sec.}{Secs.}
\newcommand{\appsymb}{$\bigstar$}
\newcommand{\appref}[1]{{\appsymb}}
\newcommand{\W}[1][abcde]{\text{\normalfont W[#1]}}
\theoremstyle{plain}
\newtheorem{theorem}{Theorem}[section]
\newtheorem{lemma}[theorem]{Lemma}
\newtheorem{corollary}[theorem]{Corollary}
\newtheorem{proposition}[theorem]{Proposition}
\theoremstyle{definition}
\newenvironment{customquote}
  {\list{}{\leftmargin=5pt\rightmargin=0pt}\item\relax}
  {\endlist}
\newcommand{\prob}[6]{%
  \needspace{3\baselineskip}
  \begin{customquote}
    \begin{labeling}{#6}%
    \item[#1]
      % \textit{#2} #3\\
      % \textit{#4} #5
    \item[\emph{#2}]#3
    \item[\emph{#4}]#5
    \end{labeling}%
  \end{customquote}%
}
\newcommand{\probdef}[3]{\prob{#1}{Instance:}{#2}{Question:}{#3}{as}}
\newcommand{\OO}{\mathcal{O}}
\newcommand{\MRais}{\textsc{DTRais}${}_{\geq}$\xspace}
\newcommand{\Rais}{\textsc{DTRais}${}_{=}$\xspace}
\newcommand{\MRaisc}{\textsc{DTRais}${}_{\geq}^c$\xspace}
\newcommand{\Raisc}{\textsc{DTRais}${}_{=}^c$\xspace}
\newcommand{\MRaisct}{\textsc{DTRais}${}_{\geq}^{c_T}$\xspace}
\newcommand{\Raisct}{\textsc{DTRais}${}_{=}^{c_T}$\xspace}
\newcommand{\Cut}{\textsc{DTRep}\xspace}
\newcommand{\lpos}{\ensuremath{\textsf{blue}}}
\newcommand{\lneg}{\ensuremath{\textsf{red}}}
\newcommand{\default}{\textsf{default}}
\newcommand{\dimn}{\textsf{feat}}
\newcommand{\thr}{\textsf{thr}}
\newcommand{\Thr}{\textsf{Thr}}
\newcommand{\cla}{\ensuremath{\textsf{cla}}}
\DeclarePairedDelimiter{\ceil}{\lceil}{\rceil}
\DeclarePairedDelimiter{\floor}{\lfloor}{\rfloor}
\icmltitlerunning{Optimal Decision Tree Pruning Revisited: Algorithms and Complexity}
\begin{document}

\twocolumn[
\icmltitle{Optimal Decision Tree Pruning Revisited: Algorithms and Complexity}

% It is OKAY to include author information, even for blind
% submissions: the style file will automatically remove it for you
% unless you've provided the [accepted] option to the icml2025
% package.

% List of affiliations: The first argument should be a (short)
% identifier you will use later to specify author affiliations
% Academic affiliations should list Department, University, City, Region, Country
% Industry affiliations should list Company, City, Region, Country

% You can specify symbols, otherwise they are numbered in order.
% Ideally, you should not use this facility. Affiliations will be numbered
% in order of appearance and this is the preferred way.
\icmlsetsymbol{equal}{*}

\begin{icmlauthorlist}
\icmlauthor{Juha Harviainen}{equal,hel}%,fun1}
\icmlauthor{Frank Sommer}{equal,wien}%,fun2}
\icmlauthor{Manuel Sorge}{equal,wien}
\icmlauthor{Stefan Szeider}{equal,wien}
\end{icmlauthorlist}

\icmlaffiliation{hel}{Department of Computer Science, University of Helsinki,
Helsinki, Finland. }
\icmlaffiliation{wien}{Institute of Logic and Computation, TU Wien, Austria}
%\icmlaffiliation{fun1}{Supported by the Research Council of Finland, Grant 351156.}%\icmlaffiliation{fun2}{Supported by the Alexander von Humboldt Foundation.}

\icmlcorrespondingauthor{Juha Harviainen}{juha.harviainen@helsinki.fi}
\icmlcorrespondingauthor{Frank Sommer}{fsommer@ac.tuwien.ac.at}
\icmlcorrespondingauthor{Manuel Sorge}{manuel.sorge@ac.tuwien.ac.at}
\icmlcorrespondingauthor{Stefan Szeider}{sz@ac.tuwien.ac.at}

% You may provide any keywords that you
% find helpful for describing your paper; these are used to populate
% the "keywords" metadata in the PDF but will not be shown in the document
\icmlkeywords{Machine Learning, ICML}

\vskip 0.3in
]

% this must go after the closing bracket ] following \twocolumn[ ...

% This command actually creates the footnote in the first column
% listing the affiliations and the copyright notice.
% The command takes one argument, which is text to display at the start of the footnote.
% The \icmlEqualContribution command is standard text for equal contribution.
% Remove it (just {}) if you do not need this facility.

%\printAffiliationsAndNotice{}  % leave blank if no need to mention equal contribution
\printAffiliationsAndNotice{\icmlEqualContribution} % otherwise use the standard text.

\begin{abstract}
  % \looseness=-1
  We present a comprehensive classical and parameterized complexity analysis of decision tree pruning operations, extending recent research on the complexity of learning small decision trees. Thereby, we offer new insights into the computational challenges of decision tree simplification, a crucial aspect of developing interpretable and efficient machine learning models. We focus on fundamental pruning operations of subtree replacement and raising, which are used in heuristics. Surprisingly, while optimal pruning can be performed in polynomial time for subtree replacement, the problem is NP-complete for subtree raising. Therefore, we identify parameters and combinations thereof that lead to fixed-parameter tractability or hardness, establishing a precise borderline between these complexity classes. For example, while subtree raising is hard for small domain size $D$ or number $d$ of features, it can be solved in $D^{2d} \cdot |I|^{\OO(1)}$ time, where $|I|$ is the input size. 
  We complement our theoretical findings with preliminary experimental results, demonstrating the practical implications of our analysis.
\end{abstract}

\section{Introduction}
\looseness=-1
Decision trees are fundamental data structures used to describe, classify, and generalize data \cite{Larose05,Murthy98,Quinlan86}.
They are widely used in machine learning due to their interpretability and efficiency \cite{BreimanFOS84}.
% Given the general aims of explainable AI,
Towards explainable AI, one prefers small decision trees as they provide more concise and understandable models \cite{Rudin19,HolzingerSMBS20}.
Recent advancements in algorithms have made it feasible to compute decision trees that are optimal with respect to various optimization goals \citep[e.g.,][]{NarodytskaIPM18,DemirovicLHCBLR22,McTavishZAKCRS22}.
With that, the algorithmics and parameterized complexity of the underlying optimization problems were studied intensively~\cite{OrdyniakS21,Kobourov23,EibenOrdyniakPaesaniSzeider23,KKSS23,OPRS24,GZ24}, feeding back into practical advances~\cite{SKSS24}.

\looseness=-1
Large datasets still require heuristic optimization techniques, however.
Commonly used heuristics to compute decision trees for given data recursively split the input data based on certain criteria such as reduction in entropy~\cite{Quinlan86,BreimanFOS84,Mingers89}.
% ID3, C4.5, CART
The resulting large trees often overfit, and so the heuristics then prune them, that is, they delete nodes to decrease the size while maintaining good classification performance.
In other words, they heuristically solve optimization problems in which they balance some form of the two goals of maximizing the number of pruned nodes and minimizing the number of introduced errors.
This motivates studying these optimization problems themselves.

\looseness=-1
While the algorithmics of computing optimal decision trees from scratch is reasonably well understood, the algorithmics of optimally pruning a given decision tree has received scant attention.
Our goal is to initiate a rigorous algorithmic study of the latter, highlighting what properties make the underlying problems hard or tractable, and pointing to promising algorithmic approaches that may be developed further into practical implementations.

There are two main operations that heuristics apply, subtree replacement and subtree raising (see below for details), and we study optimally pruning trees under each of these.
An overview of our results is as follows.
It was known that optimally replacing subtrees is polynomial-time solvable, so we study the running time in more detail and give an improved algorithm that is linear instead of quadratic in the tree size if the number $k$ of pruned nodes or number $t$ of misclassifications is small.
As a side result, we give a quicker algorithm for classifying examples with a given tree based on heavy--light decompositions \citep{Sleator83}.
% On the other hand, extensions to pruning ensembles instead of trees turns out to be NP-hard.
In contrast, we show that optimally pruning a decision tree with subtree raising is NP-complete.
In general we thus cannot expect efficient algorithms and therefore investigate which aspects of the problem make it hard or tractable.
In this regard, we completely classify the influence of natural single parameters, such as $k$, $t$, the number $d$ of features, or the domain size $D$ on the complexity of raising subtrees optimally.
Further, we almost completely classify all pairs and triples of parameters.
For instance, we show that we cannot expect efficient algorithms for optimally raising subtrees if $D$ or $d$ is small, however, there is a prospect for an efficient algorithm if they are both small at the same time.
This latter algorithm might be relevant for practice:
We provide a proof-of-concept implementation and use it on standard benchmark data to show that heuristics achieve an almost optimal tradeoff between the number of pruned nodes and introduced classification errors.

Our results offer new insights into the computational challenges of decision tree simplification and provide a theoretical foundation for developing more efficient pruning algorithms.
Our results contribute to the growing body of work on the theoretical foundations of interpretable ML, which is crucial for developing trustworthy AI systems.

\begin{figure}
\centering
  \includegraphics[width=6.8cm]{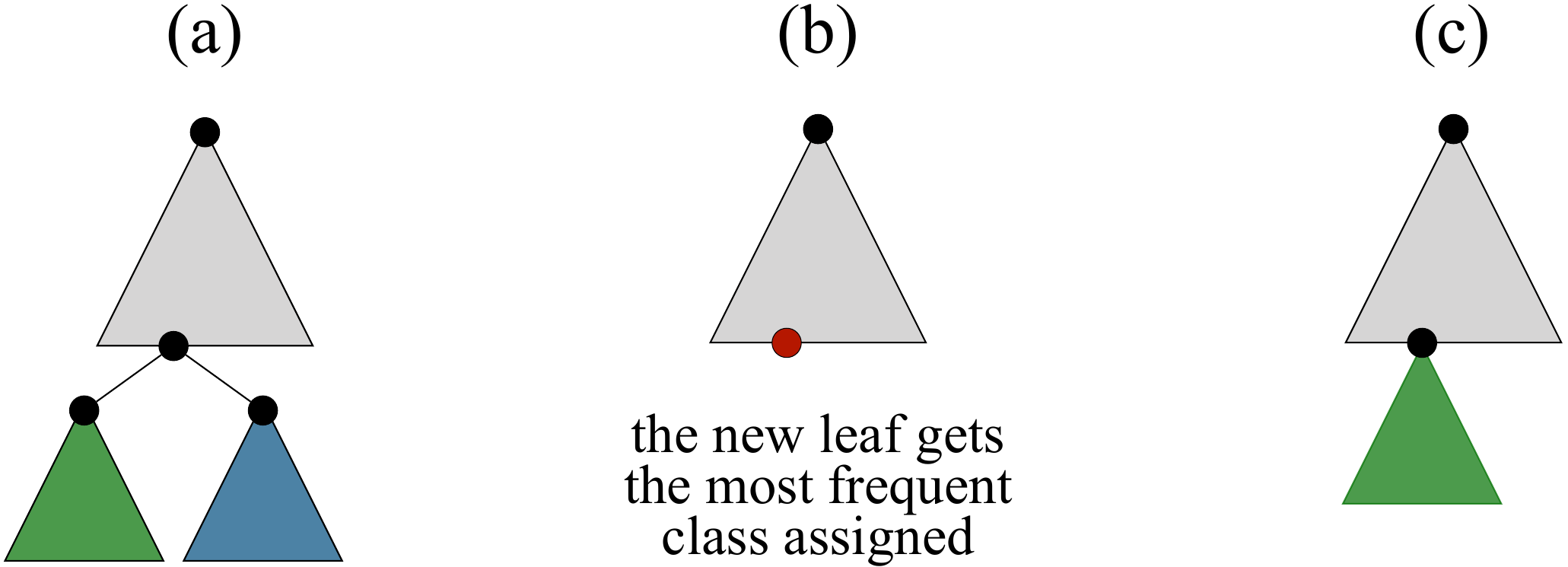}  
  \caption{Illustration of the two pruning operations. 
  (a) shows the input tree~$T$.
  (b) shows the result of one subtree replacement operation.
  (c) shows the result of one subtree raising operation.}
  \label{fig-visualization-pruning-operations}
\end{figure}

\paragraph{Problem statement.}
We study two types of pruning operations that are used by implementations in well-established machine-learning libraries (see \cref{fig-visualization-pruning-operations} for illustrations).
Let $T$ be a decision tree for a set $E \subseteq \mathbb{R}^d$ of examples labeled via $\lambda \colon E \to \{\lpos, \lneg\}$ by two classes $\lpos$ and $\lneg$ and let $w \in V(T)$ an inner (non-leaf) node of $T$.

A subtree \emph{replacement operation} applied to~$w$ removes~$w$ and its entire subtree from~$T$ and replaces it by a new leaf~$u$ which has the most frequent class label of all examples in the subtree of~$w$, that is, $u$ receives color~$\lpos$ if the set~$E[T,w]$ of examples classified in the subtree rooted at $w$ contains at least as many $\lpos$~examples as $\lneg$~examples, and otherwise~$u$ receives color~$\lneg$.
Replacement is a basic pruning operations and used in CART~\cite{BreimanFOS84} and C4.5~\cite{Quinlan93}, for example.

A subtree \emph{raising operation} applied to~$w$ removes~$w$ and its entire left or right subtree from~$T$.
In other words, we choose a child~$u$ of~$w$ and then we remove the subtree rooted at~$w$ and replace it by the subtree rooted at~$u$.
Subtree raising is implemented in the well-known decision tree heuristics C4.5 \cite{Quinlan93}, C5.0, and J48 \cite{WittenFH11}.

\looseness=-1
We now formulate the optimization problems implicitly solved by the tree-pruning heuristics as search problems:
We aim to prune~$k$ inner nodes while satisfying an upper bound~$t$ on the number of resulting classification errors.\footnote{Throughout, we use the following intuitive equivalence between the number of operations and the number of pruned nodes: 
  A replacement operation that removes $k$~inner nodes can be simulated by $k$~replacement operations applied to inner nodes that have 2~leaves as children.
  Similarly, a raising operation that removes $k$~inner nodes can be simulated by $k$~raising operations applied to inner nodes that have 2~children and at least one of them is a leaf.
}
Algorithms solving these problems can also perform error minimization.
For $\textsf{optype} \in \{\textsc{Replacement}, \textsc{Raising}\}$ define:
\probdef{\textsc{Decision Tree} \textsf{optype}}{A training data set~$(E,\lambda)$, a decision tree~$T$ for $(E,\lambda)$, and~$k,t\in\mathds{N}$.}{Can we perform \textsf{optype} operations that prune exactly~$k$ inner nodes such that the resulting tree~$T'$ has at most $t$~errors?}

For $\textsf{optype} = \textsc{Replacement}$ we abbreviate the problem as \Cut\ and for $\textsf{optype} = \textsc{Raising}$ as \Rais.
For technical reasons (see the preliminaries), replacing \emph{at least} $k$ nodes has the same complexity as replacing \emph{exactly} $k$.
This is not so for raising, and thus we also study the variant \MRais where we perform at least~$k$ raising operations.

\begin{figure}[t]
  \centering
  \scalebox{0.65}{
  \begin{tikzpicture}[node distance = 1.5cm]
    \tikzset{paramresult/.style = {draw, rectangle split, rectangle split parts = 3, rectangle split horizontal, minimum size = 4ex}}    
    \tikzset{textstyle/.style = {align = center, text height = 1.5ex, text depth = .25ex}}
    \tikzset{complexity/.style = {draw, rectangle, minimum size = 4ex, minimum width = 2.1cm}}
    \tikzset{complexitycirc/.style = {circle, minimum size = 2ex}}

    % left: prune exactly k, right: prune >= k

    \newcommand\placeholder{\phantom{T.4.4}}
    \newcommand\result[3]{%
      \ifx\relax#1\relax
        \placeholder
      \else
        #1%
      \fi
      \nodepart{two}$#2$
      \nodepart{three}%
      \ifx\relax#3\relax
        \placeholder
      \else
        #3%
      \fi
    }
    
    \node[paramresult, rectangle split part fill={ColorFPT, white, ColorFPT}] (n) at (1.0,0) {\result{\,\,\,\,BF\,\,\,\,}{n}{\,\,\,\,BF\,\,\,\,}};
    \node[paramresult, below left of = n, xshift = -2.0cm, rectangle split part fill={ColorFPT, white, ColorFPT}] (s)  {\result{\,\,\,\,BF\,\,\,\,}{s}{\,\,\,\,BF\,\,\,\,}};
    \node[paramresult, below right of = n, rectangle split part fill={red!50, white, red!50}] (t) {\result{\cref{thm-rais-w-hard-k}}{t}{\cref{thm-rais-w-hard-k}}};
    
    \node[paramresult, below left of = s, xshift = -2cm, rectangle split part fill={orange!50, white, red!50}] (k) {\result{\cref{thm-rais-w-hard-k}}{k}{\cref{thm-rais-w-hard-k-zero}}};
    %\node[paramresult, below of = s, yshift=-.5cm, xshift = -.25, rectangle split part fill={orange!50, white, orange!50}] (c) {\result{T\ref{thm-rais-w-hard-k}}{c}{T\ref{thm-rais-w-hard-k}}};
    \node[paramresult, below right of = s, xshift = 0.5cm, rectangle split part fill={orange!50, white, orange!50}] (d) {\result{\cref{thm-rais-w-hard-d+l}}{d}{\cref{prop-mrais-w-hard-k=0}}};
    
    \node[paramresult, below right of = k, xshift=-0.5cm, rectangle split part fill={orange!50, white, orange!50}] (l) {\result{\cref{thm-rais-w-hard-l}}{\ell}{\cref{thm-rais-w-hard-l}}};
    
    \node[paramresult, right = 0.7cm of d, rectangle split part fill={red!50, white, red!50}] (D) {\result{\cref{thm-rais-w-hard-k}}{D}{\cref{thm-rais-w-hard-k}}};
    %\node[paramresult, below of = c, rectangle split part fill={orange!50, white, orange!50}] (cT) {\result{T\ref{thm-rais-xp-d}}{c_T}{T\ref{thm-rais-xp-d}}};

    \node[paramresult, below right of = d, xshift = 2.9cm, rectangle split part fill={red!50, white, red!50}] (delta) {\result{\cref{thm-rais-w-hard-k}}{\delta_{\max}}{\cref{thm-rais-w-hard-k}}};
    \node[paramresult, below left of = d, xshift = 0.9cm, rectangle split part fill={orange!50, white, orange!50}] (dT) {\result{\cref{thm-rais-xp-d-T}}{d_T}{\cref{thm-rais-xp-d-T}}};

    %\node[complexity, left = 9.5cm of n, yshift = -.25cm, xshift = 1cm, fill=green!50] (fpt) {FPT};
    %\node[complexity, below = 0.1cm of fpt, fill=orange!50] (w) {W[1]-hard};
    %\node[complexity, below = 0.1cm of w, fill=red!50] (np) {paraNP-hard};

    \node[complexitycirc, left = 6.3cm of n] (dummy) {};
    \node[complexitycirc, below = 0.1cm of dummy, fill=ColorFPT, label={right:FPT}] (fpt) {};
    \node[complexitycirc, below = 0.1cm of fpt, fill=orange!50, label={right:XP and \W[1]-hard}] (w) {};
    \node[complexitycirc, below = 0.1cm of w, fill=red!50, label={right:paraNP-hard}] (np) {};

    \tikzset{paramresultt/.style = {draw, rectangle split, rectangle split parts = 3, rectangle split horizontal, minimum size = 4ex}, text width=1.4cm}        
     \node[paramresultt, rectangle split part fill={white, white, white}] (def) at (-4.4, 0.4) {\result{result for \Rais}{\text{parameter}}{result for \MRais}};  

    \path
    (n) edge (s) edge (t)
    (s) edge [bend left=20] (l) edge (k) edge (d) edge (D)  %edge (c)
    %(c) edge (cT)
    (d) edge (delta) edge (dT) %edge[in=45, out=-90] (cT)
    %(k) edge [dashed] (c)
    ;

  \end{tikzpicture}
  }
  \caption{A Hasse diagram of the single parameter relations and results for \Rais, \MRais: A parameter $p$ has an edge to a lower parameter $q$ if there is a function $f$ such that after straightforward preprocessing we have $q \leq f(p)$.
    %The dashed edge means that this relation holds only for \Rais.
    %For each parameter the left box indicates the parameterized complexity for \Rais\ and the right for \MRais. Green means fixed-parameter tractable, orange means in XP and \W[1]-hard, and red means paraNP-hard.
    The corresponding theorems and propositions are given in the boxes; for hardness the reference is in the highest box for which hardness holds, for (FPT or XP) tractability the reference is in the lowest box for which tractability holds.
    BF is for brute-force algorithm.
    %The references for the trivial tractability for $s$, $n$, $\ell$, and~$k$ (for \Rais) are omitted.
  }
  \label{fig:resultsdiagram}
\end{figure}

\begin{figure}[t]
\centering

	\newcolumntype{C}[1]{>{\centering\let\newline\\\arraybackslash\hspace{0pt}}p{#1}}
	\newcolumntype{D}{>{\centering\let\newline\\\arraybackslash\hspace{0pt}}X}

	\newcolumntype{b}{>{\hsize=1.3\hsize}D}
	\newcolumntype{m}{>{\hsize=1\hsize}D}
	\newcolumntype{s}{>{\hsize=.7\hsize}D}

	\newcommand{\columnDist}{2.5cm}
	\newcommand{\rowDist}{4cm}

	\newcommand{\TableWidth}{3.5cm}
	
\scalebox{0.60}{
\begin{tikzpicture}
\tikzset{complexity/.style = {draw, rectangle, minimum size = 4ex, minimum width = 2.1cm}}
\tikzset{complexitycirc/.style = {circle, minimum size = 2ex}}

%%%%%%%%%%%%%%%%%%%%%%%%%%%%%%%%% LEGEND %%%%%%%%%%%%%%%%%%%%%%%%%%%%%%%%%%%%%%%%%%%%%%%%%%%%%%%%
\newcommand{\legX}{1.0*\rowDist}
\newcommand{\legY}{-2.9*\columnDist}
\paramBoxXXL{legend}{\legX, \legY}{largest parameter~$p$}{
				  & \\ \hline   &   \\ }{smallest parameter~$q$}	
\node at (\legX, \legY + 14.4pt) {
			\begin{tabularx}{3.5cm}{l}
				Hardness: \\ \hline
				Algorithm:
			\end{tabularx}};
\node[label=left:{\Rais\ \ \Large $|$\normalsize\ \MRais}](start) at (\legX + 3.7cm + 1pt, \legY - 22pt) {};

\node[complexitycirc] at (0.05*\rowDist, -2.67*\columnDist) (dummy) {};
    \node[complexitycirc, below = 0.1cm of dummy, fill=ColorFPT, label={right:FPT}] (fpt) {};
    \node[complexitycirc, below = 0.1cm of fpt, fill=orange!50, label={right:XP and \W[1]-hard}] (w) {};
    \node[complexitycirc, below = 0.1cm of w, fill=red!50, label={right:paraNP-hard}] (np) {};

%%%%%%%%%%%%%%%%%%%%%%%%%%%%% PARMAETERS %%%%%%%%%%%%%%%%%%%%%%%%%%%%%%%%%%%%%%%%%%%%%%%%%%%%%%%%
\paramBoxBelow{kl}{0*\rowDist, 0*\columnDist}{$s$,\, $k+\ell$}{
				\multicolumn{2}{m}{\cellcolor{ColorFPT}} \\ \arrayrulecolor{ColorFPT}\hline
				\multicolumn{2}{m}{\multirow{-2}{*}{\cellcolor{ColorFPT} BF}} \\ }
				
\paramBoxBelow{dD}{1*\rowDist,  0*\columnDist}{$d_T+D$}{
				\multicolumn{2}{m}{\cellcolor{ColorFPT}} \\ \arrayrulecolor{ColorFPT}\hline
				\multicolumn{2}{m}{\multirow{-2}{*}{\cellcolor{ColorFPT} \cref{thm-rais-xp-d-T}}} \\ }%{$d_T+D$}
				
%\paramBoxBelow{ldeltaDt}{2*\rowDist, 0*\columnDist}{$\ell+\delta_{\max}+D+t$}{
%				\multicolumn{2}{m}{\cellcolor{ColorFPT}} \\ \arrayrulecolor{ColorFPT}\hline
%				\multicolumn{2}{m}{\multirow{-2}{*}{\cellcolor{ColorFPT} \cref{thm-fpt-ell-delta-D-t}}} \\ }

\paramBoxBelow{ldeltaDt}{2*\rowDist, 0*\columnDist}{$\ell+\delta_{\max}+D+t$}{
		\multirow{2}{*}{?}		  &  \multirow{2}{*}{\cellcolor{ColorFPT}} \\ %\cline{1-1}%\hhline{-~}
			\multirow{2}{*}{}	& \multirow{-2}{*}{\cellcolor{ColorFPT}\cref{thm-fpt-ell-delta-D-t}} \\ \cline{1-2} }{}
			% nice dirty hack :)

\paramBoxXXL{kd}{0*\rowDist,  -1*\columnDist}{$k+d+\delta_{\max}+d_T$}{
				\multirow{2}{*}{\cellcolor{ColorFPT}}  & \cellcolor{orange!50}\cref{prop-mrais-w-hard-k=0}  \\ %\hhline{-~}
			\multirow{-2}{*}{\cellcolor{ColorFPT}\cref{thm-rais-fpt-k-d}}	& \cellcolor{orange!50}\cref{thm-rais-xp-d-T} \\ }{$k+d_T$}
			% nice dirty hack :)
\paramBox{kd}{0*\rowDist,  -1*\columnDist}{$k+d+\delta_{\max}+d_T$}{
				& \\ \cline{2-2}%\hhline{-~}
			 & \\  }

\paramBoxXXL{ddelta}{1*\rowDist,  -1*\columnDist}{$d+\delta_{\max}+d_T+t$}{
				\multicolumn{2}{m}{\cellcolor{orange!50} \cref{thm-rais-w-hard-d-t=0}  } \\ \hline 
				\multicolumn{2}{m}{\cellcolor{orange!50} \cref{thm-rais-xp-d-T}} \\ }{$d_T$}

\paramBoxXXL{kt}{2*\rowDist,  -1*\columnDist}{$k+\delta_{\max}+D+t$}{
			\multicolumn{2}{c}{\cellcolor{orange!50} \cref{thm-rais-w-hard-k}} \\ \hline 
				\cellcolor{orange!50} BF	& \cellcolor{orange!50}\cref{thm-mrais-xp-k-t} \\ }{$k+t$}

\paramBoxXXL{lD}{0*\rowDist, -2*\columnDist}{$\ell+\delta_{\max}+D$,\,\, $\ell+D+t$}{
				\multicolumn{2}{m}{\cellcolor{orange!50} \cref{thm-rais-w-2-hard-l,thm-rais-w-hard-l}} \\ \hline 
				\multicolumn{2}{m}{\cellcolor{orange!50} BF } \\ }	{$\ell$}

\paramBoxXXL{ld}{1*\rowDist, -2*\columnDist}{$\ell+d+\delta_{\max}+d_T$}{
				\multicolumn{2}{m}{\cellcolor{orange!50} \cref{thm-rais-w-hard-d+l}} \\ \hline 
				\multicolumn{2}{m}{\cellcolor{orange!50} BF} \\ }{$\ell$}

\paramBoxXXL{kdelta}{2*\rowDist,  -2*\columnDist}{$k+\delta_{\max}+D$}{
				\cellcolor{orange!50}\cref{thm-rais-w-hard-k}  &  \multirow{2}{*}{\cellcolor{red!50}} \\ \cline{1-1}%\hhline{-~}
			\cellcolor{orange!50} BF	& \multirow{-2}{*}{\cellcolor{red!50}\cref{thm-rais-w-hard-k-zero}} \\ \cline{1-2} }{$k$}
			% nice dirty hack :)
\paramBox{kdelta}{2*\rowDist,  -2*\columnDist}{$k+\delta_{\max}+D$}{
				& \\ \cline{1-1}%\hhline{-~}
			 & \\  }

\paramBox{tdelta}{2*\rowDist,  -3*\columnDist}{$D+\delta_{\max}+t$}{
				\multicolumn{2}{m}{\cellcolor{red!50}} \\ \arrayrulecolor{red!50}\hline
				\multicolumn{2}{m}{\multirow{-2}{*}{\cellcolor{red!50} \cref{thm-rais-w-hard-k}}} \\ }

\end{tikzpicture}
}
\caption{Overview of our results for \Rais, \MRais.
For each box~$q$ is the smallest parameter required to achieve an FPT or XP algorithm, and~$p$ is the largest parameter such that W[1]-hardness or paraNP-hardness holds.
Also, each parameter combination which is not smaller than~$p$ and not larger than~$q$ leads to the same classification result.
Consequently, for parameters~$q$ leading to an FPT-algorithm, all parameters which are not smaller than~$q$ also lead to an FPT-algorithm. 
BF is for brute-force algorithm.}
\label{fig-results-2-combis}
\end{figure}

\paragraph{Results for replacement.} 
It is known that \Cut can be solved in $\OO\big((n + \ell)s\big)$~time~\cite{Almuallim96}, where $n$ is the number of input examples, $s$ the size of the input tree and $\ell= s - k$ the size of the tree after pruning.
This means that the running time is quadratic in the tree size.
We show that one can achieve time linear in the size, that is, $\OO\big((n + \min\{k^2, t^2\}) \cdot s\big)$ time, if $k$ or $t$ is small (\Cref{thm-algo-dtrep}.
As a side result, we show that classifying a given example can be done in time $\OO(d \log^2 s)$ after $\OO(ds)$-time preprocessing, where $d$ is the number of features (\cref{lm-heavy-light}).
This improves on the straightforward $\OO(s)$-time algorithm.
Given these polynomial-time results, it is interesting to extend them to replacing subtrees in decision-tree ensembles, which have received a tremendous amount of attention for their simplicity and improved accuracy over plain decision trees~\cite{Breiman01,Rokach16}.
However, we show that efficiently pruning ensembles is unlikely, since the problem is NP-complete even if they contain only two trees~(\cref{thm-replacement-ensembles-np-h}).

\paragraph{Results for raising.}
In contrast to the tractability of \Cut, surprisingly \Rais\ and \MRais\ turned out to be NP-complete.
Hence, we studied the parameterized complexity, determining the influence of the most natural parameters on the problems' complexity \citep{gottlob_fixedparameter_2002,flum_parameterized_2006,Nie06,CyFoKoLoMaPiPiSa2015,downey_fundamentals_2013}.
There are three main levels of influence that a parameter $p$ can have when a problem is NP-hard:
ideally (1) fixed-parameter tractability (FPT), that is, there is an algorithm with $f(p) \cdot |I|^{O(1)}$ running time, or (2) W[1]-hardness and XP-tractability, that is, there is an algorithm with running time $f(p) \cdot |I|^{f(p)}$ and it is likely not possible to remove the dependence of the exponent on $p$, and (3) paraNP-hardness, that is, even for constant values of $p$ the problem is NP-hard.

Natural parameters for this analysis are the size $s$ of the initial unpruned tree~$T$, the lower bound $k$ on the removed inner nodes, and the upper bound $t$ of errors of the pruned tree.
A dual parameter to $k$ is the upper bound $\ell$ on the size of the tree after pruning ($k + \ell = s$).
Further natural parameters are a priori related to the input dataset: the number $d$ of features, the maximum domain size $D$, and the number $n$ of examples.
Furthermore, we consider the parameter $\delta_{\max}$, the maximum number of features in which two examples of different classes differ.\footnote{See \citet[Table~1]{OrdyniakS21} and \citet[Table~3]{SKSS24} for indication that this parameter is small in practical data.}
Also, we consider the largest number~$d_T$ of different features that occur on a root-to-leaf path in the input tree.

\cref{fig:resultsdiagram} shows an overview over the relations between all parameters together with our complexity results for \Rais and \MRais for individual parameters; in fact, we completely classify the two problems with respect to the three levels of influence that the parameters can have.
Apart from two trivial tractability results for $n$ and $s$, assuming all other individual parameters to be small still yields intractable problems.
Notably, \MRais\ remains NP-hard, even for pruning at least $k = 0$ nodes.
Given such broadly negative results, we also consider combinations of two or more parameters, see \cref{fig-results-2-combis} for an overview.
Indeed, we obtain an almost full classification for pairs and triples of parameters.
Among several tractability results, we obtain an algorithm with $D^{2d_T} \cdot |I|^{\OO(1)}$ running time.

This latter algorithm is particularly interesting in combination with measurements that show that the parameters $D$ and $d_T$ are small in benchmark data for computing optimal decision trees.
Thus we provide a proof-of-concept implementation and use it to compute the complete Pareto-front of the optimal tradeoffs between the number $k$ of pruned nodes and number $t$ of classification errors.
This allows us for the first time to measure the quality of the heuristic pruning techniques, showing that they achieve almost optimal tradeoffs in our data.

\section{Preliminaries}\label{sec:prelim}
%\appendixsection{sec:prelim}

For $m \in \mathds{N}$ we write $[m] \coloneqq \{1, 2, \ldots, m\}$ and~$[0,m]\coloneqq [m]\cup\{0\}$.
For $e \in \mathds{R}^d$ we denote by $e[i]$ the $i$th entry of~$e$. Sometimes, we may slightly abuse the notation by indexing the entries by other objects than the integers $i \in [d]$, such as the set of vertices of a graph. 
We can assume that there is a bijection between these objects and the set $[d]$.

Let $\Sigma$ be a set of class labels; unless stated otherwise, we use $\Sigma = \{\lpos, \lneg\}$.
A decision tree in $\mathds{R}^d$ with set of classes $\Sigma$ consists of an ordered binary tree~$T$, that is, each inner node has a well-defined left and right child.
Let $\dimn \colon V(T) \to [d]$ and $\thr \colon V(T) \to \mathds{R}$
be labelings of each inner node~$v \in V(T)$ by a \emph{feature}
$\dimn(v) \in [d]$ and a \emph{threshold} $\thr(v) \in \mathds{R}$.
Additionally, let $\cla \colon V(T) \to \Sigma$ be a labeling of the leaves of $T$ by class labels.
The tuple $(T, \dimn, \thr, \cla)$ is a \emph{decision tree} in
$\mathds{R}^d$ with set of classes~$\Sigma$.
We often omit the labelings $\dimn, \thr, \cla$ and just refer to the tree $T$.
The \emph{size} of $T$ is the number of its inner nodes, also referred to as \emph{cuts}.

A \emph{training data set} is a tuple $(E, \lambda)$ of a set of \emph{examples} $E \subseteq \mathds{R}^d$ and their class labeling $\lambda \colon E \to \Sigma$.
Given a training data set, we fix for each feature $i$ a minimum-size set~$\Thr(i)$ of thresholds that distinguishes between all values of the examples in the~$i$th feature.
In other words, for each pair of examples~$e$ and~$e'$ with~$e[i] < e'[i]$, there is at least one value~$x\in \Thr(i)$ such that~$e[i] < x < e'[i]$.
%Let~$x \in \mathds{R}$ be some threshold.
For a feature~$i\in[d]$ and a threshold~$x\in \Thr(i)$, we use~$E_{\le}[i, x] \coloneqq \{ e \in E : e[i] \leq x\}$ and~$E_>[i, x] \coloneqq \{ e\in E : e[i] > x\}$ to denote the set of examples of~$E$ whose $i$th feature is less or equal, and strictly greater than~$x$, respectively.

Now, let $T$ be a decision tree.
Each node $v \in V(T)$, including the leaves, defines a subset $E[T, v] \subseteq E$ as follows.
For the root~$v$ of~$T$, we define~$E[T,v] \coloneqq E$.
For each non-root node~$v$, let~$w$ denote the parent of~$v$.
We then define~$E[T,v] \coloneqq E[T,w] \cap E_{\le}[\dimn(w), \thr(w)]$ if $v$ is the left child of~$w$ and~$E[T,v] \coloneqq E[T,w] \cap E_>[\dimn(w), \thr(w)]$ if $v$ is the right child of~$w$.
If the tree $T$ is clear from the context, we simplify $E[T, v]$ to $E[v]$.
Thus for each example $e \in E$ there is a unique leaf $v$ such that $e \in E[v]$.
We also say that $v$ is the \emph{leaf of $e$}.
Note that the sets $E[v]$ at the leaves $v$ of $T$ form a partition of $E$.
If $v$ is the leaf of $e$, we say that $\cla(v)$ is the class \emph{assigned} to $e$ by $T$. % and write $T[e]$ for $\cla(v)$.
An example~$e\in E$ is \emph{correctly classified} by~$T$ if the class assigned to it is~$\lambda(e)$, and otherwise it is referred to as being \emph{misclassified} or an \emph{error}.

\textit{Identical feature values.} Note that we allow our examples to have identical values in all features, and this occurs in our reductions. 
However, they could also be adjusted to have no two identical examples, at the cost of increasing $D$ and $\delta_{\max}$: all our thresholds are integers in the reductions, so changing a value $x$ of some feature to $x'$ with $\floor*{x} < x' \le \ceil*{x}$ will not change the leaf the example ends up at.

\textit{Reasonable trees.} We assume that the input tree~$T$ in \Cut, \Rais, \MRais is \emph{reasonable}, that is, (a) no leaf is empty and (b) every leaf has the label of a most frequent example set in this leaf. 
That is, the set of examples at each cut should be nonempty, and the threshold there should partition that set into two nonempty sets.
Note that all trees computed by standard heuristics are reasonable.
We make this assumption purely for our hardness results to be more relevant to practical situations. 
With the replacement operation, since the input trees are reasonable, the number of errors cannot decrease as more cuts are pruned.
This is not the case with raising operations (see \Cref{thm-rais-k-non-monotone}) and thus we study both \Rais and \MRais.

\textit{Relations between parameters.}
Note that since we are only interested in the classification properties of subtrees of the input tree $T$, we can omit from the examples all features that do not occur in cuts in $T$.
Similarly, we can preprocess the domains in each feature $i \in [d]$: We may look at all the thresholds that occur in feature $i$ in some cut in $T$, say their number is $D'_i$.
Then we can discretize the examples to the values in-between such thresholds.
Accounting for a minimum and maximum value, we may thus assume that the domain of $i$ contains at most $D'_i + 2$ values.
Hence, the maximum domain size $D$ is upper bounded by 2 plus the maximum number of thresholds that occur in a feature in $T$.

\section{Results for Subtree Replacement}
\label{sec:replace}

\looseness=-1
In this section, we present our results for the subtree replacement operation. The problem is solvable in polynomial time with dynamic programming (DP) by a reduction to \textsc{Tree Knapsack}, and an improved version of the algorithm  requires $\OO\big((n + \ell)s\big)$~time~\cite{Almuallim96}. 
However, if only a small number of cuts are pruned, then the time complexity is quadratic in the size. We propose a novel algorithm whose complexity is only linear in the size if the number of pruned cuts or allowed misclassifications is small:

%We start by presenting two dynamic programming algorithms for subtree replacement that are essentially parameterized versions of the algorithm \citep{Johnson83} for \textsc{Tree Knapsack}. If we seek to prune a small number of cuts or our bound for the number of errors of the pruned tree is small, then our algorithms outperforms that of 

\begin{theorem}
\label{thm-algo-dtrep}
\Cut can be solved in time $\OO\big((n + \min\{k^2, t^2\}) \cdot s\big)$.
\end{theorem}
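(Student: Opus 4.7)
The plan is to design a tree-shaped dynamic program with two alternative parameterizations---by the number of pruned cuts (capped at $k$) or by the number of allowed errors (capped at $t$)---and to run whichever of the two has the smaller cap.

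First, in $\OO(ns)$ time I would preprocess by propagating each example from the root down to its leaf. This yields, for every node $v$, the numbers $|E[v]\cap\lpos|$ and $|E[v]\cap\lneg|$, the subtree size $s(v)$, and the quantity $\mathrm{err}(v) := \min(|E[v]\cap\lpos|,|E[v]\cap\lneg|)$, i.e., the number of errors produced by replacement at $v$. Because $T$ is reasonable, the errors of the unpruned subtree at $v$ equal the sum of $\mathrm{err}(u)$ over all leaves $u$ in that subtree, and replacement can only add errors.

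For the first DP, let $F_v(j)$ be the minimum number of errors in the subtree rooted at $v$ when exactly $j\in\{0,\dots,\min(k,s(v))\}$ of its inner nodes are pruned. At a leaf $F_v(0)=\mathrm{err}(v)$; at an internal $v$ with children $v_L,v_R$,
\[
  F_v(j)=\min\Bigl\{\min_{j_L+j_R=j}F_{v_L}(j_L)+F_{v_R}(j_R),\; [j=s(v)]\cdot\mathrm{err}(v)\Bigr\},
\]
where the first option keeps the cut at $v$ and the second applies replacement at $v$ (available only when $s(v)\le k$). The answer is yes iff $F_{\text{root}}(k)\le t$; truncating each subtree table at $j\le k$ is correct since the total pruning budget is $k$, so no subtree contributes more than $k$. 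The min-plus convolution at each internal node has cost $\OO\bigl(\min(k,s(v_L))\cdot\min(k,s(v_R))\bigr)=\OO(k^2)$, so this DP runs in $\OO(sk^2)$. Symmetrically, $G_v(e) :=$ the maximum number of inner nodes prunable in the subtree at $v$ while incurring at most $e$ errors, for $e\in\{0,\dots,t\}$, satisfies the analogous max-plus recurrence and we output yes iff $G_{\text{root}}(t)\ge k$; this variant runs in $\OO(st^2)$. Choosing the cheaper of the two DPs and adding the $\OO(ns)$ preprocessing yields $\OO\bigl((n+\min\{k^2,t^2\})\cdot s\bigr)$ overall.

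The main technical care required is bookkeeping rather than a conceptual obstacle: the ``prune $v$'' option writes to index $s(v)$, which must be skipped whenever it exceeds the truncation bound; the min-plus convolution of two truncated arrays must itself be truncated to the cap; and the monotonicity of $F_v$ (which follows from reasonability) ensures that the ``exactly $k$'' and ``at least $k$'' variants coincide, matching the preliminaries. Once these cases are handled, the standard tree-knapsack style analysis delivers the claimed running time.
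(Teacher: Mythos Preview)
Your proposal is correct and follows essentially the same approach as the paper: an $\OO(ns)$ preprocessing pass to compute subtree sizes and replacement costs, followed by a tree DP indexed either by pruned nodes (capped at $k$) or by errors (capped at $t$), with the cheaper variant chosen. The paper's recurrence and your min-plus convolution formulation are the same, and both rely on the monotonicity argument from reasonability to reconcile the ``exactly $k$'' and ``at least $k$'' variants in the $t$-indexed DP.
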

\begin{proof}
  First, we compute for each node $v$ the number of misclassified examples $t_v$ in the subtree rooted at $v$ if we were to replace the subtree by a red or a blue leaf, whichever minimizes the number of errors. This requires $\OO(ns)$ time in the worst case and less if the tree is not deep. 
  Similarly, we compute the size $s_v$ of the subtree rooted at each $v$.

  We use bottom-up dynamic programming, indexing the recurrence by the current node and the number of pruned nodes or errors, depending on which of $k$ and $t$ is smaller. 
  If~$k \le t$, let $\mathrm{opt}(v, k')$ for~$k'\in[0,k]$ be the smallest possible number of errors in the subtree rooted at $v$ after pruning exactly $k'$ inner nodes from it. Further, let $u$ and $w$ be the children of~$v$. Then, let
  $\mathrm{opt}(v, s_v - 1) = t_v$ if $s_v - 1 \le k$, and, for $k' < s_v - 1$, let
  \[ \mathrm{opt}(v, k') = \min_{k^* \in [k']} \mathrm{opt}(u, k^*) + \mathrm{opt}(w, k' - k^*). \]
  If $v$ is the root of the tree and $\mathrm{opt}(v, k) \le t$, then there exists a feasible pruned tree for the instance of \Cut.
  Otherwise, if~$t < k$, we use a similar algorithm but instead maximize the number of pruned nodes given that we get $t'$ errors in the subtree. A solution exists if we can prune at least $k$ cuts while having at most $t$ errors, since we can prune fewer cuts without introducing more errors.
\end{proof}

%For large $n$ and $s$, the running time may be dominated by computing the number of misclassified examples in each subtree. 
Currently, computing the number of misclassifications in each subtree dominates the time complexity and requires $\OO(ns)$~time.
We next speed up the classification of examples with heavy--light decompositions \citep{Sleator83}, thus obtaining a faster algorithm for \Cut.

\begin{lemma}%[\appref{lm-heavy-light}]
\label{lm-heavy-light}
  After $\OO(ds)$-time preprocessing, we can classify any example in time $\OO(d \log^2 s)$.
\end{lemma}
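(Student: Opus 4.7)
The plan is to apply the heavy-light decomposition~\citep{Sleator83} to~$T$. Recall that in this decomposition each inner node designates the child with the larger subtree as \emph{heavy} and the other as \emph{light}; the heavy edges partition~$T$ into vertex-disjoint downward paths, and any root-to-leaf descent crosses at most $\OO(\log s)$ light edges, hence traverses at most $\OO(\log s)$ such heavy paths. Thus if, for any heavy path $P=v_1,\dots,v_{|P|}$ that an example~$e$ enters at its top~$v_1$, we can find in $\OO(d\log s)$ time the smallest index~$j$ at which $e$ deviates to the light child of~$v_j$ (or detect that $e$ reaches~$v_{|P|}$), the full classification takes $\OO(d\log^2 s)$ time.

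To answer this first-deviation query on a heavy path~$P$, I would preprocess, for each feature~$i\in[d]$, the set of indices~$j$ with $\dimn(v_j)=i$, split by the direction of the heavy edge at $v_j$: \emph{upper} constraints, where the heavy child is the left one so $e$ deviates at $v_j$ iff $e[i]>\thr(v_j)$, and \emph{lower} constraints, where the heavy child is the right one so $e$ deviates iff $e[i]\le\thr(v_j)$. Within each of the two lists I would sort entries by threshold and annotate each threshold-prefix (for the upper list) or threshold-suffix (for the lower list) with the minimum path-index~$j$ it contains. A query with value~$e[i]$ then reduces to a binary search for the prefix or suffix of failing constraints plus a constant-time lookup, returning in $\OO(\log s)$ time the smallest index on~$P$ at which a cut testing feature~$i$ causes a deviation. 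Taking the minimum over the $d$ features gives the overall first-deviation index on~$P$ in $\OO(d\log s)$ time, and the light child reached at that index is the top of the next heavy path, so we iterate.

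Plugging this primitive into the heavy-light descent yields the claimed $\OO(d\log^2 s)$ classification time. The preprocessing consists of computing the decomposition in $\OO(s)$ time and building the sorted annotated lists; since each cut of~$T$ contributes to exactly one (path, feature, direction) list, the total number of entries across all lists is at most~$s$, and collecting and sorting them fits into $\OO(ds)$ time. The main obstacle, compared with textbook uses of heavy-light decomposition for path queries, is that the predicates along a single heavy path test different features in different directions, so no single sorted data structure on~$P$ suffices; stratifying the constraints by feature and direction and indexing by threshold decouples these tests while preserving a logarithmic query per feature.
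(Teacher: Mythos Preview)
Your proof is correct and follows the same overall strategy as the paper: apply the heavy--light decomposition so that any root-to-leaf descent visits $\OO(\log s)$ heavy paths, and handle each heavy path in $\OO(d\log s)$ time. The per-path query mechanism differs, however. The paper precomputes for every node $v$ the axis-aligned box in $\mathds{R}^d$ of examples that reach $v$ from the root; since these boxes are nested along any downward path, a single binary search on the heavy path---each probe testing in $\OO(d)$ whether $e$ lies in the current node's box---locates the last node $e$ reaches before deviating. Your construction instead stratifies the cuts on each heavy path by feature and direction, sorts by threshold, and equips the sorted lists with prefix/suffix minima over path-indices, so that $d$ independent binary searches plus a final minimum yield the first-deviation index. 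Both routes give $\OO(d\log s)$ per heavy path and hence the same query time; the paper's version relies on the global monotonicity of the boxes and needs only a uniform $\OO(d)$-per-node table, while yours decouples the features but needs more elaborate per-path structures.

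One small caveat on preprocessing: your approach requires sorting the per-(path, feature, direction) lists, which costs $\Theta(s\log s)$ in the worst case (for instance when all cuts test a single feature along one long heavy path), and this exceeds $\OO(ds)$ whenever $d<\log s$. The paper's bounding-box table is filled by a single top-down pass in $\OO(ds)$ time without any sorting, which is why it meets the stated preprocessing bound exactly.
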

%\appendixproof{lm-heavy-light}{
\begin{proof}
  We say that an edge~$(v,u)$ from $v$ to $u$ in a decision tree~$T$ is \emph{heavy} if the number of nodes in the subtree rooted at $v$ is less than twice the number of nodes that the subtree rooted at $u$ has. 
  Otherwise, edge~$(v,u)$ is \emph{light}. Now, any root-to-leaf path has at most $\OO(\log s)$ light edges, and the graph edge-induced by the heavy edges is a disjoint union of paths \citep{Sleator83}. Constructing this heavy--light decomposition takes $\OO(s)$~time.

  Now, we compute for each cut $v$ and feature the tightest lower and upper bounds with respect to that feature on the path from the root to $v$. In other words, we precompute a table for characterizing the interval of values on a feature which an example can potentially have if it ends up at that cut.
  This takes $\OO(ds)$ time.

  Suppose now that we want to classify example~$e$. 
  Since the edge-induced subgraph of the heavy edges is a disjoint union of paths, every cut belongs to a unique heavy path, possibly of length $0$. 
  Now, let~$r$ be the root ot~$T$ and let~$P_r$ be the unique heavy path containing~$r$. 
   Next, we compute how far~$e$ goes on the heavy path~$P_r$ by binary search:
   $e$ can only end up at a cut of~$P_r$ if the value of each feature falls in the interval of possible values we precomputed for all cuts. 
   Testing this for a single cut takes $\OO(d)$ time and the binary search thus takes $\OO(d \log s)$ time. 
   Then, the example goes trough a light edge to another heavy path~$P_ss$, and we continue with a binary search on that heavy path~$P_s$, repeating the process until we end up at a leaf. 
   Since~$T$ contains at most $\OO(\log s)$~light edges, we conclude that on every root-to-leaf-path of~$T$ there are at most$\OO(\log s)$~heavy paths on any root-to-leaf path of~$T$. 
   Thus, this process takes $\OO(d \log^2 s)$~time in total.
\end{proof}
%}
\begin{corollary}
  \Cut can be solved in time $\OO\big(\min\{k^2, t^2\} \cdot s + nd\log^2 s\big)$.
\end{corollary}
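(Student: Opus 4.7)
The plan is to revisit the proof of \Cref{thm-algo-dtrep} and replace its bottleneck, namely the $\OO(ns)$-time computation of the per-node misclassification counts $t_v$, by an application of \Cref{lm-heavy-light}. All remaining work -- the tree-knapsack dynamic programming -- already runs within the time bound claimed by the corollary, so only the preprocessing stage needs to be sped up.

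Concretely, I would proceed as follows. First, invoke the $\OO(ds)$-time preprocessing of \Cref{lm-heavy-light} once for the input tree $T$. Then, for each of the $n$ training examples $e \in E$, run the $\OO(d\log^2 s)$-time classification routine of \Cref{lm-heavy-light} to determine the unique leaf $v$ with $e \in E[v]$. This yields, at each leaf $v$, the number of \lpos{} and \lneg{} examples of $E[v]$ in total time $\OO(ds + nd\log^2 s + n)$. A single bottom-up traversal then propagates these counts to every inner node $v$, giving the sizes $|E[v]\cap\lambda^{-1}(\lpos)|$ and $|E[v]\cap\lambda^{-1}(\lneg)|$, from which $t_v = \min\{|E[v]\cap\lambda^{-1}(\lpos)|,|E[v]\cap\lambda^{-1}(\lneg)|\}$ and the subtree sizes $s_v$ are read off in $\OO(s)$ time overall. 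This replaces the $\OO(ns)$ term from the proof of \Cref{thm-algo-dtrep}.

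After this preprocessing, apply verbatim the dynamic programming from the proof of \Cref{thm-algo-dtrep}, which in either case ($k\le t$ or $t < k$) runs in $\OO\big(\min\{k^2,t^2\}\cdot s\big)$ time. Summing the two contributions gives a total running time of $\OO\big(ds + nd\log^2 s + \min\{k^2,t^2\}\cdot s\big)$.

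It remains to observe that the $\OO(ds)$ term is absorbed by $\OO(nd\log^2 s)$: by the reasonability assumption in the preliminaries, every leaf of $T$ is nonempty, so $n$ is at least the number of leaves of $T$, which is $s+1$; hence $ds \le nd \le nd\log^2 s$. The claimed bound $\OO\big(\min\{k^2,t^2\}\cdot s + nd\log^2 s\big)$ follows. There is no real obstacle here: the argument is purely a substitution of the faster classification subroutine into the proof of \Cref{thm-algo-dtrep}, plus a one-line comparison of the two preprocessing terms.
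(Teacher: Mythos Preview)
Your proposal is correct and follows essentially the same approach as the paper: classify all examples via \Cref{lm-heavy-light}, then run the dynamic program from \Cref{thm-algo-dtrep}. You simply spell out in more detail how the per-node counts $t_v$ are obtained from the leaf classifications and why the $\OO(ds)$ preprocessing term is absorbed (using $s\le n$ from reasonability), which the paper leaves implicit.
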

\begin{proof}
  First, classify all $n$ examples by utilizing \Cref{lm-heavy-light}.
  Second, use the algorithm of \Cref{thm-algo-dtrep}.
\end{proof}

Interestingly, pruning with subtree replacement becomes hard if we consider \emph{tree ensembles}. A tree ensemble~$\mathcal{T}$ is a set of decision trees and~$\mathcal{T}$ \emph{classifies} $(E, \lambda)$ if for each example $e \in E$ the majority vote of the trees in $\mathcal{T}$ agrees with the label $\lambda(e)$; ties are broken consistently. %for ties we use a fixed tie-breaking rule.
\ifarxiv
%If~$|\Sigma|=2$ the tie-breaking winning class is denoted as the \emph{dominant label}.
\fi

\looseness=-1
We show this by reducing from the NP-hard \textsc{$\kappa$-Biclique} problem~\cite{Johnson87}.
The constructed ensemble has two trees, one for each partite set.
Each tree consists of a long root-to-leaf path that cannot be pruned without violating the error bound.
To the unspecified children of this fixed path we attach a cut corresponding to a vertex selection.
We then create edge examples~$e$ which are correctly classified only if both cuts corresponding to the endpoints of~$e$ are preserved.
Parameter~$\ell$ is chosen such that we can only preserve $2\cdot\kappa$~cuts.
The desired error bound forces us to select exactly $\kappa$~cuts per tree which correspond to a $\kappa$-biclique.

\begin{theorem}%[\appref{thm-replacement-ensembles-np-h}]
\label{thm-replacement-ensembles-np-h}
\Cut is NP-hard even for an ensemble of 2~trees where both trees are reasonable and~$d=3$.
\end{theorem}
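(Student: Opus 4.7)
The plan is to reduce from the NP-hard \textsc{$\kappa$-Biclique} problem. Given a bipartite graph $G=(A\cup B, E_G)$ with $|A|=|B|=n_G$ and an integer~$\kappa$, I construct in polynomial time an ensemble $\mathcal{T}=\{T_A,T_B\}$ of two reasonable trees in $\mathds{R}^3$ together with a training set $(E,\lambda)$ and integers~$k,t$ such that $(G,\kappa)$ is a yes-instance iff $\mathcal{T}$ admits a pruning by exactly $k$ replacement operations with at most~$t$ errors.

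Each tree $T_X$ (for $X\in\{A,B\}$) has a root-to-leaf ``spine'' of length $n_G$ obtained by cutting on feature~$1$ with strictly increasing thresholds $1,2,\ldots,n_G$. At depth~$i$ on this spine, the ``off-spine'' child is a single ``vertex-gadget cut'' $c_{x_i}$ corresponding to vertex $x_i\in X$; this cut uses feature~$2$ to split between a block of \lpos-labeled and \lneg-labeled examples. Feature~$3$ is reserved for routing \emph{padding examples} that are independent of the vertex gadgets. I attach sufficiently many padding examples at each spine cut so that replacing that cut would incur strictly more than~$t$ additional errors regardless of the leaf relabeling, while balancing labels so that each leaf is reasonable. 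Consequently every feasible pruning keeps all spine cuts intact and only replaces vertex-gadget cuts.

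For each edge $(a_i,b_j)\in E_G$ I introduce a \lpos-labeled edge example $e_{ij}$ whose feature-1 value routes it to depth~$i$ in $T_A$ and depth~$j$ in $T_B$, and whose feature-2 value makes $T_X$ classify $e_{ij}$ correctly iff the vertex-gadget cut for the relevant endpoint is preserved. With ties broken consistently towards \lneg, the ensemble thus classifies $e_{ij}$ as \lpos iff both vertex-gadget cuts $c_{a_i}$ and $c_{b_j}$ are preserved. Setting $k=2(n_G-\kappa)$ so that exactly $\kappa$ vertex gadgets survive in each tree, and $t=|E_G|-\kappa^2$, correctness follows: a $\kappa$-biclique $(S_A,S_B)$ lets us correctly classify all $\kappa^2$ edge examples between $S_A$ and $S_B$, incurring exactly the remaining $|E_G|-\kappa^2$ errors; conversely, any feasible pruning selects vertex sets $S_A,S_B$ of size~$\kappa$ and must correctly classify at least $\kappa^2$ edge examples, which is only possible if all $\kappa^2$ pairs in $S_A\times S_B$ lie in $E_G$, i.e., $(S_A,S_B)$ is a $\kappa$-biclique.

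The main obstacle is squeezing the construction into $d=3$ features while keeping both trees reasonable and the spines genuinely unprunable. In particular, the padding has to be designed so that (i)~every spine leaf and every vertex-gadget leaf is nonempty and majority-labeled, (ii)~replacing any single spine cut turns more than $t$ correctly classified padding examples into errors even after optimally relabeling the resulting leaf, and (iii)~this padding does not perturb the classification of any edge example in either tree. Feature~$3$ will be the key tool for carving out the padding region from the edge-example region, and the interplay of ensemble majority voting with tie-breaking must be calibrated so that exactly the two vertex-gadget preservations (one per tree) jointly determine each edge example's final ensemble label.
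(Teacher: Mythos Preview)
Your high-level plan matches the paper's (reduce from \textsc{$\kappa$-Biclique}, one tree per partite set, an unprunable spine plus per-vertex gadget cuts, edge examples correct iff both endpoints survive, $t=|E_G|-\kappa^2$, and the AM--GM count forces $\kappa$ survivors per side). But there is a genuine gap in how you allocate the three features.

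You let \emph{both} spines use feature~$1$ with the same thresholds $1,\ldots,n_G$. Then any example has a single feature-$1$ value, and the two spines route it identically; there is no way to have the edge example $e_{ij}$ land at depth~$i$ in $T_A$ and depth~$j$ in $T_B$ when $i\neq j$. Your sentence ``whose feature-$1$ value routes it to depth~$i$ in $T_A$ and depth~$j$ in $T_B$'' cannot be realized, and with it the whole correctness argument collapses: you can no longer tie the ensemble label of $e_{ij}$ to the survival of \emph{two independent} vertex gadgets, one per tree. The paper avoids exactly this by spending two of its three features on the spines---one feature $d_P$ for the $T_P$-spine and a separate feature $d_Q$ for the $T_Q$-spine---so that an edge example $e(p_i,q_j)$ can carry $e[d_P]=i$ and $e[d_Q]=j$ independently; the third feature $d_E$ is the shared vertex-gadget feature. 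In that setup the spine is made unprunable not by a separate padding feature but by forcing/enforcing example blocks that collide only at the last spine cut; a dedicated padding feature is unnecessary.

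A smaller point you gloss over but should make explicit in any full proof: from $k=2(n_G-\kappa)$ you only get that the total number of surviving gadgets is $2\kappa$, not $\kappa$ per tree. You need the standard argument that with $x$ survivors in $T_A$ and $z$ in $T_B$, at most $xz$ edge examples can be correct, and $x+z=2\kappa$ with $xz\ge\kappa^2$ forces $x=z=\kappa$.
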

%\appendixproof{thm-replacement-ensembles-np-h}{
\begin{proof}
%We only show the statement for \Cut, the statement for then \MCut follows by setting the lower bound~$k$ of the number of pruned inner nodes of the input trees minus~$\ell$ (these values are specified later).

We reduce from the NP-hard \textsc{$\kappa$-Biclique} problem~\cite{Johnson87}.
The input is a bipartite graph~$G$ with partite sets~$P=\{p_1, \ldots, p_N\}$ and~$Q=\{q_1, \ldots, q_N\}$, $M$~edges, and an integer~$\kappa$ such that~$G$ has no isolated vertices.
The task is to decide whether~$G$ contains a complete bipartite
subgraph with $\kappa$~vertices on each side.

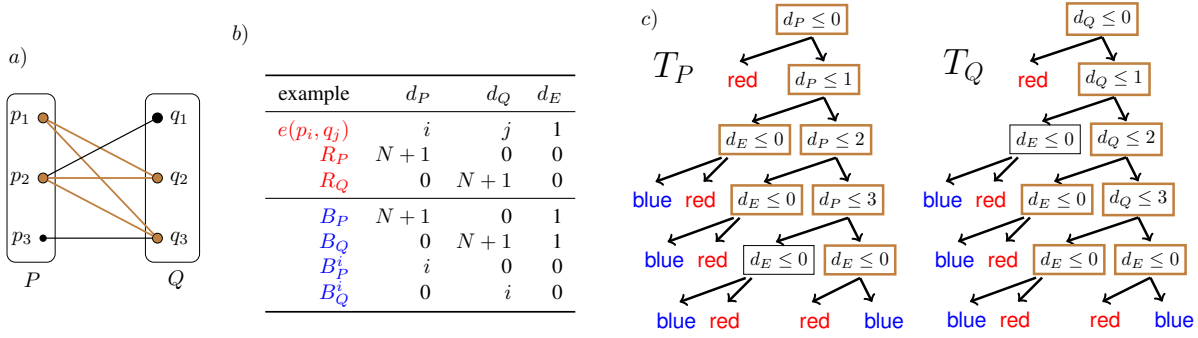
\begin{figure*}[t]

%%%%%%%%%%%%%%%%%%%%%%%%%%%%%%%%%%%%%%%%%%%%%%%%%%%%%%%%%%%%%%%%%%%%%%%
%%% k-Biclique Instance %%%%%%%%%%%%%%%%%%%%%%%%%%%%%%%%%%%%%%%%%%%%%%%
%%%%%%%%%%%%%%%%%%%%%%%%%%%%%%%%%%%%%%%%%%%%%%%%%%%%%%%%%%%%%%%%%%%%%%%
\begin{minipage}{0.18\textwidth}
\centering
\scalebox{0.8}{
\begin{tikzpicture}%[scale=0.50]
\node[label=left:{$a)$}](start) at (-0.2, 1) {};

\node[label=left:{$p_1$}](p1) at (-0.2, 0) [shape = circle, draw, fill=brown, scale=0.11ex]{};
\node[label=left:{$p_2$}](p2) at (-0.2, -1) [shape = circle, draw, fill=brown, scale=0.11ex]{};
\node[label=left:{$p_3$}](p3) at (-0.2, -2) [shape = circle, draw, fill=black, scale=0.07ex]{};
\draw[rounded corners] (-0.8, -2.4) rectangle (0., 0.4) {};
\node[label=left:{$P$}](start) at (-0.0, -2.7) {};

\node[label=right:{$q_1$}](q1) at (1.7, 0) [shape = circle, draw, fill=black, scale=0.11ex]{};
\node[label=right:{$q_2$}](q2) at (1.7, -1) [shape = circle, draw, fill=brown, scale=0.11ex]{};
\node[label=right:{$q_3$}](q3) at (1.7, -2) [shape = circle, draw, fill=brown, scale=0.11ex]{};
\draw[rounded corners] (1.5, -2.4) rectangle (2.4, 0.4) {};
\node[label=left:{$Q$}](start) at (2.4, -2.7) {};

\path [-,brown,line width=0.3mm](p1) edge (q2);
\path [-,brown,line width=0.3mm](p1) edge (q3);
\path [-,brown,line width=0.3mm](p2) edge (q2);
\path [-,brown,line width=0.3mm](p2) edge (q3);
\path [-,line width=0.2mm](p2) edge (q1);
\path [-,line width=0.2mm](p3) edge (q3);
\end{tikzpicture}
}
\end{minipage}
%%%%%%%%%%%%%%%%%%%%%%%%%%%%%%%%%%%%%%%%%%%%%%%%%%%%%%%%%%%%%%%%%%%%%%%
%%% Visualization of data set %%%%%%%%%%%%%%%%%%%%%%%%%%%%%%%%%%%%%%%%%
%%%%%%%%%%%%%%%%%%%%%%%%%%%%%%%%%%%%%%%%%%%%%%%%%%%%%%%%%%%%%%%%%%%%%%%
\begin{minipage}{0.3\textwidth}
\scalebox{0.8}{
$b)$
}

\scalebox{0.8}{
\textcolor{white}{t}
}

\centering
\scalebox{0.8}{
\begin{tabular}{r r r r}
\toprule
example & $d_P$ & $d_Q$ & $d_E$ \\
\midrule
\textcolor{red}{$e(p_i, q_j)$} & $i$ & $j$ & 1 \\
\textcolor{red}{$R_P$} & $N+1$ & 0 & 0 \\
\textcolor{red}{$R_Q$} & 0 & $N+1$ & 0 \\
\midrule
\textcolor{blue}{$B_P$} & $N+1$ & 0 & 1 \\
\textcolor{blue}{$B_Q$} & 0 & $N+1$ & 1 \\
\textcolor{blue}{$B_P^i$} & $i$ & 0 & 0 \\
\textcolor{blue}{$B_Q^i$} & 0 & $i$ & 0 \\
\bottomrule
\end{tabular}
}
\end{minipage}
%%%%%%%%%%%%%%%%%%%%%%%%%%%%%%%%%%%%%%%%%%%%%%%%%%%%%%%%%%%%%%%%%%%%%%%
%%% Visualization of initial tree T_P %%%%%%%%%%%%%%%%%%%%%%%%%%%%%%%%%
%%%%%%%%%%%%%%%%%%%%%%%%%%%%%%%%%%%%%%%%%%%%%%%%%%%%%%%%%%%%%%%%%%%%%%%
\begin{minipage}{0.22\textwidth}
\centering
\scalebox{0.8}{
\begin{tikzpicture}%[scale=0.50]
\def\mydistx{0.2}
\def\mydisty{-1}
\def\myshift{-0.5}

\node[label=left:{$c)$}](start) at (-2.3, 0) {};

\node[label=left:{\LARGE $T_P$}](start) at (-1.7, -0.8) {};

\node[](d1<1) at (0*\mydistx, 0*\mydisty) [shape = rectangle, brown, draw, scale=0.2ex, very thick]{\textcolor{black}{$d_P\le 0$}};
\node[](d1<2) at (1*\mydistx, 1*\mydisty) [shape = rectangle, brown, draw, scale=0.2ex, very thick]{\textcolor{black}{$d_P\le 1$}};
\node[](d2<1) at (2*\mydistx, 2*\mydisty) [shape = rectangle, brown, draw, scale=0.2ex, very thick]{\textcolor{black}{$d_P\le 2$}};
\node[](d2>1) at (3*\mydistx, 3*\mydisty) [shape = rectangle, brown, draw, scale=0.2ex, very thick]{\textcolor{black}{$d_P\le 3$}};
\node[](d3<2) at (4*\mydistx, 4*\mydisty) [shape = rectangle, brown, draw, scale=0.2ex, very thick]{\textcolor{black}{$d_E\le 0$}};

\draw [->,very thick](0*\mydistx, -0.3 + 0*\mydisty) -- (1*\mydistx, -0.7 + 0*\mydisty);
\draw [->,very thick](1*\mydistx, -0.3 + 1*\mydisty) -- (2*\mydistx, -0.7 + 1*\mydisty);
\draw [->,very thick](2*\mydistx, -0.3 + 2*\mydisty) -- (3*\mydistx, -0.7 + 2*\mydisty);
\draw [->,very thick](3*\mydistx, -0.3 + 3*\mydisty) -- (4*\mydistx, -0.7 + 3*\mydisty);
\draw [->,very thick](4*\mydistx, -0.3 + 4*\mydisty) -- (5*\mydistx, -0.7 + 4*\mydisty);
%\draw [->,very thick](5*\mydistx, -0.3 + 5*\mydisty) -- (6*\mydistx, -0.7 + 5*\mydisty);

\draw [->,very thick](0*\mydistx, -0.3 + 0*\mydisty) -- (-1*\mydistx -1, -0.7 + 0*\mydisty);
\draw [->,very thick](1*\mydistx, -0.3 + 1*\mydisty) -- (0*\mydistx -1, -0.7 + 1*\mydisty);
\draw [->,very thick](2*\mydistx, -0.3 + 2*\mydisty) -- (1*\mydistx -1, -0.7 + 2*\mydisty);
\draw [->,very thick](3*\mydistx, -0.3 + 3*\mydisty) -- (2*\mydistx -1, -0.7 + 3*\mydisty);
\draw [->,very thick](4*\mydistx, -0.3 + 4*\mydisty) -- (3*\mydistx -0.6, -0.7 + 4*\mydisty);
%\draw [->,very thick](5*\mydistx, -0.3 + 5*\mydisty) -- (4*\mydistx -1, -0.7 + 5*\mydisty);

\node[](d1<1) at (0*\mydistx-1.15, 1*\mydisty) {\textcolor{red}{$\lneg$}};
\node[](d1<2) at (1*\mydistx-1.15, 2*\mydisty) [shape = rectangle, brown, draw, scale=0.2ex, very thick]{\textcolor{black}{$d_E\le 0$}};
\node[](d2<1) at (2*\mydistx-1.15, 3*\mydisty) [shape = rectangle, brown, draw, scale=0.2ex, very thick]{\textcolor{black}{$d_E\le 0$}};
\node[](d2>1) at (3*\mydistx-1.15, 4*\mydisty) [shape = rectangle, draw, scale=0.2ex]{$d_E\le 0$};
\node[](d3<2) at (4*\mydistx-0.75, 5*\mydisty) {\textcolor{red}{$\lneg$}};
%\node[](d3>2) at (5*\mydistx-1.15, 6*\mydisty) {\textcolor{blue}{$\lpos$}};
\node[](d3>2) at (5*\mydistx+0.2, 5*\mydisty) {\textcolor{blue}{$\lpos$}};

\draw [->,very thick](4*\mydistx -1.3 +\myshift, -0.3 + 4*\mydisty) -- (5*\mydistx -1.9+\myshift, -0.7 + 4*\mydisty);
\node[](d3<2) at (4*\mydistx-0.75 -1.0+\myshift, 5*\mydisty) {\textcolor{red}{$\lneg$}};
\draw [->,very thick](4*\mydistx -1.4+\myshift, -0.3 + 4*\mydisty) -- (5*\mydistx -2.7+\myshift, -0.7 + 4*\mydisty);
\node[](d3<2) at (4*\mydistx-0.75 -1.8+\myshift, 5*\mydisty) {\textcolor{blue}{$\lpos$}};

\draw [->,very thick](3*\mydistx -1.3+\myshift, -0.3 + 3*\mydisty) -- (4*\mydistx -1.9+\myshift, -0.7 + 3*\mydisty);
\node[](d3<2) at (3*\mydistx-0.75 -1.0+\myshift, 4*\mydisty) {\textcolor{red}{$\lneg$}};
\draw [->,very thick](3*\mydistx -1.4+\myshift, -0.3 + 3*\mydisty) -- (4*\mydistx -2.7+\myshift, -0.7 + 3*\mydisty);
\node[](d3<2) at (3*\mydistx-0.75 -1.8+\myshift, 4*\mydisty) {\textcolor{blue}{$\lpos$}};

\draw [->,very thick](2*\mydistx -1.3+\myshift, -0.3 + 2*\mydisty) -- (3*\mydistx -1.9+\myshift, -0.7 + 2*\mydisty);
\node[](d3<2) at (2*\mydistx-0.75 -1.0+\myshift, 3*\mydisty) {\textcolor{red}{$\lneg$}};
\draw [->,very thick](2*\mydistx -1.4+\myshift, -0.3 + 2*\mydisty) -- (3*\mydistx -2.7+\myshift, -0.7 + 2*\mydisty);
\node[](d3<2) at (2*\mydistx-0.75 -1.8+\myshift, 3*\mydisty) {\textcolor{blue}{$\lpos$}};
\end{tikzpicture}
}
\end{minipage}
%%%%%%%%%%%%%%%%%%%%%%%%%%%%%%%%%%%%%%%%%%%%%%%%%%%%%%%%%%%%%%%%%%%%%%%
%%% Visualization of initial tree T_Q %%%%%%%%%%%%%%%%%%%%%%%%%%%%%%%%%
%%%%%%%%%%%%%%%%%%%%%%%%%%%%%%%%%%%%%%%%%%%%%%%%%%%%%%%%%%%%%%%%%%%%%%%
\begin{minipage}{0.22\textwidth}
\centering
\scalebox{0.8}{
\begin{tikzpicture}%[scale=0.50]
\def\mydistx{0.2}
\def\mydisty{-1}
\def\myshift{-0.5}

\node[label=left:{\LARGE $T_Q$}](start) at (-1.7, -0.8) {};

\node[](d1<1) at (0*\mydistx, 0*\mydisty) [shape = rectangle, brown, draw, scale=0.2ex, very thick]{\textcolor{black}{$d_Q\le 0$}};
\node[](d1<2) at (1*\mydistx, 1*\mydisty) [shape = rectangle, brown, draw, scale=0.2ex, very thick]{\textcolor{black}{$d_Q\le 1$}};
\node[](d2<1) at (2*\mydistx, 2*\mydisty) [shape = rectangle, brown, draw, scale=0.2ex, very thick]{\textcolor{black}{$d_Q\le 2$}};
\node[](d2>1) at (3*\mydistx, 3*\mydisty) [shape = rectangle, brown, draw, scale=0.2ex, very thick]{\textcolor{black}{$d_Q\le 3$}};
\node[](d3<2) at (4*\mydistx, 4*\mydisty) [shape = rectangle, brown, draw, scale=0.2ex, very thick]{\textcolor{black}{$d_E\le 0$}};

\draw [->,very thick](0*\mydistx, -0.3 + 0*\mydisty) -- (1*\mydistx, -0.7 + 0*\mydisty);
\draw [->,very thick](1*\mydistx, -0.3 + 1*\mydisty) -- (2*\mydistx, -0.7 + 1*\mydisty);
\draw [->,very thick](2*\mydistx, -0.3 + 2*\mydisty) -- (3*\mydistx, -0.7 + 2*\mydisty);
\draw [->,very thick](3*\mydistx, -0.3 + 3*\mydisty) -- (4*\mydistx, -0.7 + 3*\mydisty);
\draw [->,very thick](4*\mydistx, -0.3 + 4*\mydisty) -- (5*\mydistx, -0.7 + 4*\mydisty);
%\draw [->,very thick](5*\mydistx, -0.3 + 5*\mydisty) -- (6*\mydistx, -0.7 + 5*\mydisty);

\draw [->,very thick](0*\mydistx, -0.3 + 0*\mydisty) -- (-1*\mydistx -1, -0.7 + 0*\mydisty);
\draw [->,very thick](1*\mydistx, -0.3 + 1*\mydisty) -- (0*\mydistx -1, -0.7 + 1*\mydisty);
\draw [->,very thick](2*\mydistx, -0.3 + 2*\mydisty) -- (1*\mydistx -1, -0.7 + 2*\mydisty);
\draw [->,very thick](3*\mydistx, -0.3 + 3*\mydisty) -- (2*\mydistx -1, -0.7 + 3*\mydisty);
\draw [->,very thick](4*\mydistx, -0.3 + 4*\mydisty) -- (3*\mydistx -0.6, -0.7 + 4*\mydisty);
%\draw [->,very thick](5*\mydistx, -0.3 + 5*\mydisty) -- (4*\mydistx -1, -0.7 + 5*\mydisty);

\node[](d1<1) at (0*\mydistx-1.15, 1*\mydisty) {\textcolor{red}{$\lneg$}};
\node[](d1<2) at (1*\mydistx-1.15, 2*\mydisty) [shape = rectangle, draw, scale=0.2ex]{$d_E\le 0$};
\node[](d2<1) at (2*\mydistx-1.15, 3*\mydisty) [shape = rectangle, brown, draw, scale=0.2ex, very thick]{\textcolor{black}{$d_E\le 0$}};
\node[](d2>1) at (3*\mydistx-1.15, 4*\mydisty) [shape = rectangle, brown, draw, scale=0.2ex, very thick]{\textcolor{black}{$d_E\le 0$}};
\node[](d3<2) at (4*\mydistx-0.75, 5*\mydisty) {\textcolor{red}{$\lneg$}};
%\node[](d3>2) at (5*\mydistx-1.15, 6*\mydisty) {\textcolor{blue}{$\lpos$}};
\node[](d3>2) at (5*\mydistx+0.2, 5*\mydisty) {\textcolor{blue}{$\lpos$}};

\draw [->,very thick](4*\mydistx -1.3 +\myshift, -0.3 + 4*\mydisty) -- (5*\mydistx -1.9+\myshift, -0.7 + 4*\mydisty);
\node[](d3<2) at (4*\mydistx-0.75 -1.0+\myshift, 5*\mydisty) {\textcolor{red}{$\lneg$}};
\draw [->,very thick](4*\mydistx -1.4+\myshift, -0.3 + 4*\mydisty) -- (5*\mydistx -2.7+\myshift, -0.7 + 4*\mydisty);
\node[](d3<2) at (4*\mydistx-0.75 -1.8+\myshift, 5*\mydisty) {\textcolor{blue}{$\lpos$}};

\draw [->,very thick](3*\mydistx -1.3+\myshift, -0.3 + 3*\mydisty) -- (4*\mydistx -1.9+\myshift, -0.7 + 3*\mydisty);
\node[](d3<2) at (3*\mydistx-0.75 -1.0+\myshift, 4*\mydisty) {\textcolor{red}{$\lneg$}};
\draw [->,very thick](3*\mydistx -1.4+\myshift, -0.3 + 3*\mydisty) -- (4*\mydistx -2.7+\myshift, -0.7 + 3*\mydisty);
\node[](d3<2) at (3*\mydistx-0.75 -1.8+\myshift, 4*\mydisty) {\textcolor{blue}{$\lpos$}};

\draw [->,very thick](2*\mydistx -1.3+\myshift, -0.3 + 2*\mydisty) -- (3*\mydistx -1.9+\myshift, -0.7 + 2*\mydisty);
\node[](d3<2) at (2*\mydistx-0.75 -1.0+\myshift, 3*\mydisty) {\textcolor{red}{$\lneg$}};
\draw [->,very thick](2*\mydistx -1.4+\myshift, -0.3 + 2*\mydisty) -- (3*\mydistx -2.7+\myshift, -0.7 + 2*\mydisty);
\node[](d3<2) at (2*\mydistx-0.75 -1.8+\myshift, 3*\mydisty) {\textcolor{blue}{$\lpos$}};
\end{tikzpicture}
}
\end{minipage}
\caption{
A visualization of the reduction from the proof of \Cref{thm-replacement-ensembles-np-h}. 
$a)$ shows a \textsc{$\kappa$-Biclique} instance; a $\kappa$-biclique is depicted in brown. 
$b)$ shows the corresponding classification instance. $c)$ shows both trees~$T_P$ and~$T_Q$ of the input ensemble~$\mathcal{T}$. 
The brown cuts correspond to cuts which are preserved in the solution ensemble~$\mathcal{T}'$.}
\label{fig-ensemble-replacement-hard}
\end{figure*}

\textbf{Outline:}
The idea is to create an ensemble consisting of two trees, one tree for each partite set.
Each of these trees consists of a long root-to-leaf path which cannot be pruned without violating the error bound, denoted as a \emph{required path}.
To the unspecified children of the required path we attach a further cut which corresponds to a vertex selection.
Furthermore, we create edge examples~$e$ which can only be correctly classified if both cuts corresponding to the endpoints of~$e$ are preserved.
Parameter~$\ell$ is chosen such that we can only preserve $2\cdot\kappa$~cuts, 
Furthermore, the desired error bound forces us to select exactly $\kappa$~cuts per tree which correspond to a $\kappa$-biclique.

\textbf{Construction:}
\emph{Description of the data set:} 
We set~$\lpos$ to the \emph{dominant label}, that is, if some example~$e$ is classified as~$\lpos$ by one tree in the ensemble and as~$\lneg$ by the other tree in the ensemble, then~$e$ is classified as~$\lpos$.
A visualization is shown in \Cref{fig-ensemble-replacement-hard}.
Let~$S\in\{P,Q\}$ be any partite set. 

\begin{itemize}
\item For each edge~$\{p_i,q_j\}\in E(G)$ we add an \emph{edge example~$e(p_i,q_j)$}. 
To all these examples we assign label~$\lneg$.

\item For each~$i\in[N]$ and each partite set~$S$, we add a set~$B_S^i$ of \emph{separation examples}.
Each of these sets consists of $M$~examples having the same value in each feature. 
To all these examples we assign label~$\lpos$.

\item For each partite set~$S$, we add a set~$B_S$ of~$\lpos$ \emph{forcing examples}.
Both of these sets contain exactly $M$~examples and all examples in one of these sets have the same value in each feature.

\item For each partite set~$S$, we add a set~$R_S$ of~$\lneg$ \emph{enforcing examples}.
Both of these sets contain exactly $4 \cdot N \cdot M$~examples and all examples in one of these sets have the same value in each feature.
\end{itemize}

We add three features~$d_P, d_Q$, and~$d_E$.
It remains to describe the coordinates of all examples in these features.

\begin{itemize}
\item For each edge example~$e=e(p_i,q_j)$ we set~$e[d_P]=i$, $e[d_Q]=j$, and~$e[d_E]=1$.

\item For each separation example~$e\in B_P^i$ we set~$e[d_P]=i$, and~$e[d_Q]=0=e[d_E]$.
Similarly, for each separation example~$e\in B_Q^i$ we set~$e[d_Q]=i$, and~$e[d_P]=0=e[d_E]$. 

\item For each forcing example~$e\in B_P$ we set~$e[d_P]=N+1$, $e[d_Q]=0$, and~$e[d_E]=1$.
Similarly, for each forcing example~$e\in B_Q$ we set~$e[d_P]=0$, $e[d_Q]=N+1$, and~$e[d_E]=1$.

\item For each enforcing example~$e\in R_P$ we set~$e[d_P]=N+1$ and $e[d_Q]=0=e[d_E]$.
Similarly, for each enforcing example~$e\in R_Q$ we set~$e[d_Q]=N+1$ and $e[d_P]=0=e[d_E]$.
\end{itemize}

\emph{Description of the input ensemble~$\mathcal{T}$:}
The ensemble~$\mathcal{T}$ consists of two trees~$T_P$ and~$T_Q$.
We only describe~$T_P$.
To obtain~$T_Q$, each cut in~$d_P$ is replaced by the identical cut in~$d_Q$, that is, $d_P\le x$ is replaced by~$d_Q\le x$.

One root-to-leaf path of~$T_P$ consists of the cuts~$d_P\le 0, d_P\le 1, \ldots, d_P\le N, d_E\le 0$.
We call this the \emph{required path of~$T_P$}.
The left child of the last cut is a $\lneg$~leaf and its right child is a $\lpos$~leaf.
Furthermore, the left child of the first cut is a $\lneg$~leaf.
The left child of each remaining cut~$d_P\le i$ for each~$i\in[N]$ is the cut~$d_E\le 0$ and its left child is a $\lpos$~leaf and its right child is a $\lneg$~leaf.
This cut is referred to as \emph{the $p_i$-cut}.
In~$T_Q$ these cuts are denoted as \emph{the $q_i$-cuts}.
Since~$\lpos$ is the dominant label, in~$\mathcal{T}$ each example is correctly classified.
Finally, we set~$\ell\coloneqq 2\cdot(N+2)+ 2\cdot \kappa$ and~$t\coloneqq M- \kappa^2$.

Clearly, this corresponding instance of \Cut can be constructed in polynomial time.
Furthermore, observe that both trees~$T_P$ and~$T_Q$ of the ensemble are reasonable since~$G$ contains no isolated vertices.

\textbf{Correctness:}
We show that~$G$ has a $\kappa$-biclique if and only if~$\mathcal{T}$ can be pruned by replacement operations such that the resulting ensemble~$\mathcal{T}'$ has exactly $\ell$~inner nodes and makes at most $t$~errors.

$(\Rightarrow)$
Let~$P'\subseteq P$ and~$Q'\subseteq Q$ be a $\kappa$-biclique (for example see part~$a)$ of \Cref{fig-ensemble-replacement-hard}).
To obtain~$\mathcal{T}'$, we preserve the required paths of~$T_P$ and~$T_Q$.
Furthermore, for each~$p_i\in P'$ we also preserve the $p_i$-cut, that is, the cut~$d_E\le 0$ which is the left child of the cut~$d_P\le i$.
Analogously, for each~$q_i\in Q'$ we also preserve the $q_i$-cut.
In other words, we prune exactly $N-\kappa$~many $p_i$-cuts (where~$p_i\notin P'$) and exactly $N-\kappa$~many $q_j$~cuts (where~$q_j\notin Q'$).
For an example, see part~$c)$ of \Cref{fig-ensemble-replacement-hard}.
Furthermore, observe that in both the most-frequent tree replacement and the most-frequent ensemble replacement, the label of each new leaf is~$\lpos$.
By~$T_P'$ and~$T_Q'$ we denote the pruned trees.

Observe that~$\mathcal{T}'$ contains exactly $2\cdot(N+2)+ 2\cdot \kappa=\ell$~cuts.
It remains to verify that~$\mathcal{T}'$ makes at most $t=M-\kappa^2$~errors.

Since the classification path of each forcing and enforcing example in~$T_P'$ and~$T_Q'$ stays the same as in~$T_P$ and~$T_Q$, respectively, all these examples are still correctly classified.
Furthermore, each separation example~$e\in B_P^i$ is classified as~$\lpos$ in~$T_P'$: either its classification path is not changed or the last cut~$d_E<0$ which is a $p_i$-cut for some~$p_i\notin P'$ is pruned and it is replaced by a $\lpos$~leaf.
Since~$\lpos$ is the dominant label, $e$ is correctly classified by~$\mathcal{T}'$.
An analog argument applies for each separation example~$e\in B_Q^i$.

To verify the desired error bound it remains to show that at least $\kappa^2$~edge examples are correctly classified by~$\mathcal{T}'$.
More precisely, we show that each edge example corresponding to an edge~$\{p_i,q_j\}$ in the $\kappa$-biclique is correctly classified by~$\mathcal{T}'$.
Observe that for each~$p_i\in P'$ the classification path of all edge examples~$e(p_i,q_z)$, where~$q_z$ is a neighbor of~$a_i$, in~$\mathcal{T}'$ is identical to the one in~$\mathcal{T}$.
Thus, $e(p_i,q_z)$ is classified as~$\lneg$ in~$T_P'$.
An analog argument applies for~$q_i\in Q'$.
Thus, each edge example corresponding to an edge of the $\kappa$-biclique is correctly classified by~$\mathcal{T}'$.
Since any $\kappa$-biclique contains exactly $\kappa^2$~edges the statement follows.

$(\Leftarrow)$
Let~$\mathcal{T}'$ with trees~$T_P'$ and~$T_Q'$ be a solution for the raising problem, that is, $\mathcal{T}'$ has~$\ell=2\cdot(N+2)+ 2\cdot \kappa$~inner nodes and makes at most $M= \kappa^2$~errors.

\emph{Outline:}
First, we show that in both trees the required paths need to be preserved to fulfill the error bound.
Second, we show that any edge example can only be correctly classified by~$\mathcal{T}'$ if its classification path in~$T_P'$ and~$T_Q'$ is identical to the one in~$T_P$ and~$T_Q$, respectively.
Finally, we verify that all correctly classified edge examples correspond to a $\kappa$-biclique.

\emph{Step 1:}
If the last cut~$d_E\le 0$ of the required path of~$T_P$ is pruned, then the classification paths of all forcing examples in~$B_P$ and all enforcing examples in~$R_P$ (independent of all other pruning operations) is identical in~$\mathcal{T}'$.
Since both sets have size at least~$M$, $\mathcal{T}'$ would have at least $M$~errors, a contradiction.
Since pruning any ancestor of this cut implies also pruning this cut, the entire required path of~$T_P$ is not pruned in~$T_P'$.
Analogously, we can show that the required path of~$T_Q$ cannot be pruned.

\emph{Step 2:}
Step~1 implies that only $p_i$-cuts of~$T_P$ and $q_i$-cuts of~$T_Q$ can be pruned.
Furthermore, observe that independent of whether the most-frequent tree replacement or the most-frequent ensemble replacement is used, each new leaf which replaces one of these cuts has label~$\lpos$.
Thus, any edge example~$e=e(p_i,q_j)$ where~$p_i\notin P'$ ends up in a $\lpos$~leaf in~$T_P'$.
Since~$\lpos$ is the dominant label, $e$ is misclassified as~$\lpos$ by~$\mathcal{T}'$, independent of the classification result of~$e$ in~$T_Q'$.
An analogous argument holds for~$q_j\notin Q'$ and~$T_Q'$.
Thus, $e=e(p_i,q_j)$ is correctly classified if and only if~$p_i\in P'$ and~$q_j\in Q'$.

\emph{Step 3:}
By the definition of~$t$, $\mathcal{T}'$ has to classify at least $\kappa^2$~edge examples correctly.
Assume that $x$~many $p_i$-cuts of~$T_P$ are not pruned and that $z$~many $q_i$~cuts of~$T_Q$ are not pruned.
By~$P'$ and~$Q'$ we denote the vertices of~$P$ and~$Q$ which correspond to the not pruned $p_i$-cuts and $q_i$-cuts, respectively.
Note that~$x+z=2\cdot\kappa$.
Since there is at most one edge example for each pair of vertices from~$P$ and~$Q$, $\mathcal{T}'$ can classify at most $x\cdot z$~edge examples correctly.
Hence, we obtain that~$x=\kappa=z$.
Furthermore, for each~$p_i\in P'$ and each~$q_j\in Q'$ graph~$G$ has to contain the edge~$\{p_i,q_j\}$ to fulfill the error bound~$t$.
Thus, $(P',Q')$ is a $\kappa$-biclique in~$G$.
\end{proof}

\section{Algorithms for Subtree Raising}

\label{sec:rais-alg}

Before presenting our main algorithmic results, note that \Rais is trivially in XP with respect to $k$ and $\ell$, and \MRais with respect to $\ell$: we iterate over all $\OO(s^k)$ possible combinations of subtrees that are pruned away or $\OO(s^\ell)$ combinations of unpruned cuts.
\ifjour
Furthermore, \Raisc and \MRaisc, and \Raisct and \MRaisct are FPT with respect to~$c+\ell$, and~$c_T+\ell$ since the depth of the input tree is bounded by~$c+\ell$ and~$c_T+\ell$, respectively.
\fi
FPT for~$s$ follows from there being at most $2^s$~possible pruned trees.
Thus, both \Rais and \MRais are also FPT for~$k+\ell$. 
Similarly, since each input decision tree is reasonable,
%that is, each leaf contains at least one example,
we have~$s\le n$ and thus both are FPT for~$n$.
We start by presenting an XP-algorithm for the number~$d$ of features that serves as a starting point for the rest of the algorithms developed in this subsection.

\begin{theorem}\label{thm-rais-xp-d}
  \Rais\ and \MRais\ can be solved in $\OO(D^{2d}\cdot s^3)$~time.
\end{theorem}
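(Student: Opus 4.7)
The plan is to design a bottom-up dynamic program indexed by a node $v$ of $T$, a hyper-rectangle $R \subseteq \mathds{R}^d$, and an integer $\ell' \in [0,s]$. We define $g(v, R, \ell')$ to be the minimum number of misclassified examples in $R \cap E$ achievable when we apply raising operations within the subtree $S_T(v)$ so that the resulting pruned subtree has $\ell'$ remaining inner nodes and, when inserted into $T'$, receives precisely the examples in $R$. The key structural observation is that the effective region of any node in any pruned tree $T'$ is the intersection of the cut constraints along its root-to-node path in $T'$; hence it is a product of intervals whose endpoints lie in $\Thr(i) \cup \{-\infty, +\infty\}$ for each feature $i$. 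Since $|\Thr(i)| \le D$, the number of relevant hyper-rectangles is at most $(D+1)^{2d} = O(D^{2d})$.

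For the recurrence, let $v$ have children $v_L, v_R$ and cut half-spaces $L(v)=\{x : x[\dimn(v)] \le \thr(v)\}$ and $\bar L(v) = \mathds{R}^d \setminus L(v)$. There are exactly three local options: \emph{(A)} raise $v$ and keep $v$'s left subtree, giving $g(v_L, R, \ell')$; \emph{(B)} raise $v$ and keep $v$'s right subtree, giving $g(v_R, R, \ell')$; or \emph{(C)} keep $v$ as an inner node, which consumes one slot and splits $R$ by $v$'s cut, giving $\min_{\ell_L+\ell_R=\ell'-1} g(v_L, R \cap L(v), \ell_L) + g(v_R, R \cap \bar L(v), \ell_R)$. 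For a leaf $v$, $g(v, R, 0)$ equals the number of examples in $R \cap E$ whose label differs from $\cla(v)$, while $g(v, R, \ell') = \infty$ for $\ell' \ge 1$. To answer the problem, we check $g(\mathrm{root}(T), \mathds{R}^d, s-k) \le t$ for \Rais, and $\min_{\ell^* \le s-k} g(\mathrm{root}(T), \mathds{R}^d, \ell^*) \le t$ for \MRais.

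For the running time, there are $O(s)$ choices of $v$, $O(D^{2d})$ choices of $R$, and $O(s)$ choices of $\ell'$, yielding $O(s^2 D^{2d})$ states. Options (A) and (B) take constant time, whereas (C) takes $O(\ell')=O(s)$ time for the split; in total we spend $O(s^3 \cdot D^{2d})$ time. The class counts $|\{e \in R \cap E : \lambda(e) = c\}|$ needed for the base case can be precomputed over all $O(D^{2d})$ regions within the same budget. The main obstacle in the argument is justifying that options (A)--(C) are exhaustive: any pruned tree $T'$ arising from raisings is fully described by, at each original inner node $v$, either keeping $v$ (case (C)) or raising $v$ in favor of one of its two subtrees (cases (A) and (B)); iterating this top-down and pairing each raising with its immediate local choice shows that every pruned tree is realized by some sequence of these three local choices, and conversely every such sequence yields a valid pruning that the DP correctly evaluates.
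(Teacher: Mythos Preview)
Your proposal is correct and essentially identical to the paper's proof: both use bottom-up dynamic programming indexed by a node of $T$, an axis-aligned box with endpoints in $\Thr(i)$, and a size/budget parameter, with the same three-way recurrence (raise to left child, raise to right child, or keep the cut and split). The only cosmetic differences are that the paper tracks the number $k'$ of nodes pruned (with ``at least $k'$'' semantics for \MRais) whereas you track the number $\ell'$ of nodes remaining (with ``exactly $\ell'$'' semantics and a final minimum for \MRais); these are equivalent reparametrizations yielding the same $\OO(D^{2d}\cdot s^3)$ bound.
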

The algorithm uses bottom-up dynamic programming on the input tree $T$.
Intuitively, for each node $v$ of $T$ we compute the minimum number of errors achievable by raising operations that prune at least (or exactly) $k$ nodes in the subtree of $T$ rooted at $v$.
In order to do that, we need to be able to determine the set $E'$ of examples that are classified in $v$'s subtree after pruning.
Set $E'$ may be different from $E[T, v]$ because in an optimal solution we may have to prune some nodes on the path $P$ in $T$ from the root to $v$.
Think of the nodes on $P$ as successively cutting away examples from~$E'$, that is, if a node $w$'s successor on $P$ is a left child, $w$ cuts away examples on the left in its feature and if it is a right child, $w$ cuts away examples on the right.
To find $E'$ it is thus sufficient, for each feature $i$, to determine the two strongest cuts that remain after raising.
That is, among all cuts that cut away examples on the left the strongest cut would be the rightmost one and among all cuts that cut away examples on the right, the strongest cut would be the leftmost one.
Therefore we index the table, in addition to $v$ and the remaining budget $k'$, in each feature with the thresholds corresponding to the two strongest remaining cuts.
\begin{proof}[Proof of \Cref{thm-rais-xp-d}]
We only show the result for \MRais, the proof for \Rais\ is analogous.

  \textbf{Definition of the DP Table:}
  For each node~$v \in V(T)$ let $T_v$ be the subtree of $T$ rooted at $v$.
  % Let $(\ell_i, r_i)_{i \in [d]}$ such that $\ell_i, r_i \in \Thr(i)$.
  Denote by $E[(\ell_i, r_i)_{i \in [d]}]$ the set of examples in $E$ within the box defined by $(\ell_i, r_i)_{i \in [d]}$ with $\ell_i, r_i \in \Thr(i)$, that is,
  \[ E[(\ell_i, r_i)_{i \in [d]}] := \bigcap_{i \in [d]} E_>[i,\ell_i]\cap E_{\le}[i,r_i]. \]
  
  We index the DP table $Q$ by the root node~$v \in V(T)$ of the subtree, remaining budget $k' \in [0,k]$, and the thresholds $(\ell_i, r_i)_{i \in [d]}$ with $\ell_i, r_i \in \Thr(i)$. To an entry $Q[v,(\ell_i, r_i)_{i \in [d]},k']$, we put the minimum number of misclassifications achievable on the example set $E[(\ell_i, r_i)_{i \in [d]}]$ with a tree obtained from the subtree~$T_v$ by raising operations that prune at least $k'$~inner nodes from $T_v$.
  
  \textbf{Location of the Solution:}
  A solution to \MRais\ can be read off from $Q$ by letting $v$ be the root of $T$, $k' = k$, and $\ell_i = \min (\Thr(i))$ and $r_i = \max (\Thr(i))$ for each $i$.
  
  \textbf{Initialization of~$Q$:}
  The values at a leaf $v$ are the numbers of examples in $E[(\ell_i, r_i)_{i \in [d]}]$ with a different label from $v$, since a leaf cannot be pruned without removing the parent.

\textbf{Recurrence of~$Q$:}
  Let $u, w$ be the left and right child of~$v$, respectively, and let $s_u$, $s_w$ be the number of inner nodes in~$T_u$ and $T_w$, respectively.
  We claim that
  \begin{multline}\label{eq:xp-dim-recurrence}
    Q[v,(\ell_i, r_i)_{i \in [d]},k'] = \\
    \min
    \begin{cases}
      Q[u,(\ell_i, r_i)_{i \in [d]},k' - s_w - 1],\\
      Q[w,(\ell_i, r_i)_{i \in [d]},k' - s_u - 1],\\
      \min_{k'' \in [k'] \cup \{0\}} Q[u,(\ell_i, r^u_i)_{i \in [d]},k''] + {} \\\hfill Q[w,(\ell^w_i, r_i)_{i \in [d]},k' - k''],
    \end{cases}
  \end{multline}
  where $\ell^w_i = \ell_i$ and $r^u_i = r_i$ if $i \neq \dimn(v)$, and otherwise $r^u_{\dimn(v)} = \min \{r_{\dimn(v)}, \thr(v)\}$ and $\ell^w_{\dimn(v)} = \max \{\ell_{\dimn(v)}, \thr(v)\}$.
  
  \textbf{Correctness of the DP:}
  To see that \cref{eq:xp-dim-recurrence} is correct, we first show that the left-hand side is smaller or equal to the right-hand side.
  Let $T'_v$ be obtained from $T_v$ by pruning at least $k'$ nodes by raising.
  There are three cases:
  
  First, $v$, the subtree $T_w$, and possibly some nodes in $T_u$ are pruned to obtain $T'_v$.
  Note that, then, the number of errors of $T'_v$ for $E[(\ell_i, r_i)_{i \in [d]}]$ is at least $Q[u,(\ell_i, r_i)_{i \in [d]},k' - s_w - 1]$. Analogously, if $v$ and the subtree $T_u$ are pruned, then the number of errors of $T'_v$ for $E[(\ell_i, r_i)_{i \in [d]}]$ is at least $Q[w,(\ell_i, r_i)_{i \in [d]},k' - s_u - 1]$.
  
  In the third case, $v$ is not pruned and all raising operations in $T_v$ are contained in $T_u$ and $T_w$.
  Let $T'_u$ and~$T'_w$ be the resulting trees and let $k'_u$ and $k'_w$ be the number of pruned nodes in $T_u$ and $T_w$, respectively.
  Further, let~$t_u$ and $t_w$ be the numbers of misclassifications in $T'_v$ that occur in $T'_u$ and $T'_w$, respectively.
  Observe that the example set classified by $T'_u$ is $E[(\ell_i, r^u_i)_{i \in [d]}]$, where $r_u$ is defined as in the recurrence.
  Analogously, the example set classified by $T'_w$ is $E[(\ell^w_i, r_i)_{i \in [d]}]$.
  Thus, $t_u + t_w \geq Q[u,(\ell_i, r^u_i)_{i \in [d]},k'_u] + Q[w,(\ell^w_i, r_i)_{i \in [d]}, k'_w]$.
  Hence, the left-hand side of the recurrence equals at most the right-hand side.

  Now we show that the right-hand side is smaller than or equal to the left-hand side.
  Consider a tree $T'_u$ obtained from $T_u$ corresponding to $Q[u,(\ell_i, r_i)_{i \in [d]},k' - s_w - 1]$.
  Note that pruning $v$ and the subtree $T_w$ from $T_v$, and then performing the raising operations in $T'_u$ yields a tree $T'_v$ that misclassifies exactly $Q[u,(\ell_i, r_i)_{i \in [d]},k' - s_w - 1]$ examples of $E[(\ell_i, r_i)_{i \in [d]}]$.
  Furthermore, at least $k'$ nodes have been pruned from $T_v$ to obtain $T'_v$.
  Hence the right-hand side is smaller or equal to $Q[u,(\ell_i, r_i)_{i \in [d]},k' - s_w - 1]$.
  By an analogous argument for $T_u$ the right-hand side is also smaller or equal to $Q[w,(\ell_i, r_i)_{i \in [d]},k' - s_u - 1]$.
  Let $t'' = Q[u,(\ell_i, r^u_i)_{i \in [d]},k''] + Q[w,(\ell^w_i, r_i)_{i \in [d]},k' - k'']$ wherein $k''$ minimizes the sum.
  Consider the trees $T'_u$ and $T'_w$ corresponding to $t''$, obtained by raising operations from $T_u$ and $T_w$.
  Perform the same operations as in $T'_u$ and $T'_w$ in $T_v$ to obtain $T'_v$.
  Note that $v$ is not pruned.
  Therefore, the examples of $E[(\ell_i, r_i)_{i \in [d]}]$ classified in the $T'_u$-subtree of $T'_v$ are exactly $E[(\ell_i, r^u_i)_{i \in [d]}]$ and analogously for $T'_w$.
  Thus, the number of misclassifications in $T'_v$ on $E[(\ell_i, r_i)_{i \in [d]}]$ is exactly $t''$.
  Hence, the right-hand side of the recurrence is smaller or equal to the left-hand side.
%}

\textbf{Running Time of the DP:}
  Observe that there are $s \cdot D^{2d} \cdot s$ table entries,
  and each entry can be computed in $\OO(s)$ time. %giving the claimed running time.
  Proof for \Rais\ is analogous but we prune exactly $k$ nodes instead of at least $k$ nodes in the definition of $Q$.%; the recurrence is exactly the same.
\end{proof}

By only considering thresholds that are actually used in the input tree, we can improve the running time.
For this, we define the following parameter:
Let~$D_T$ be the maximum number of different thresholds on cuts in feature $i$ on path~$P$ over all features $i \in [d]$ and all root-to-leaf paths $P$.

%A simple modification of the algorithm improves the running time to the following:

\begin{theorem}%[\appref{thm-rais-xp-d-T}]
\label{thm-rais-xp-d-T}
\Rais\ and \MRais\ can be solved in $\OO((D_T + 2)^{2d_T}\cdot s^3)$~time.
\end{theorem}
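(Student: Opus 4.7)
The proof plan is to instantiate the dynamic program of \cref{thm-rais-xp-d} with a carefully reduced set of table indices, exploiting the fact that only cuts lying on the root-to-$v$ path in~$T$ can constrain the example set classified in the subtree~$T_v$ after raising operations applied to ancestors of~$v$. For each node~$v$, let~$P_v$ be the root-to-$v$ path in~$T$, let~$F_v \subseteq [d]$ be the set of features appearing on cuts of~$P_v$, and for each~$i \in F_v$ let~$X_v^i \subseteq \Thr(i)$ be the set of thresholds used by cuts in feature~$i$ along~$P_v$. By definition, $|F_v| \le d_T$ and $|X_v^i| \le D_T$. The key observation is that, for any tree~$T'$ obtained from~$T$ by raising operations applied only to ancestors of~$v$, the example set reaching the copy of~$v$ in~$T'$ equals~$E[(\ell_i,r_i)_{i \in [d]}]$ for some tuple where $\ell_i, r_i \in X_v^i \cup \{\min \Thr(i), \max \Thr(i)\}$ when $i \in F_v$, and $\ell_i = \min \Thr(i), r_i = \max \Thr(i)$ when $i \notin F_v$. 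This holds because every remaining cut on~$P_v$ constraining feature~$i$ uses a threshold in~$X_v^i$, while features outside~$F_v$ are never constrained along~$P_v$.

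Based on this observation, I would redefine the table as~$Q[v, (\ell_i, r_i)_{i \in F_v}, k']$, adopting the implicit convention that features outside~$F_v$ take the boundary values. The initialization at leaves is identical to that in \cref{thm-rais-xp-d}. For the recurrence, note that the left child~$u$ of~$v$ satisfies $F_u \subseteq F_v \cup \{\dimn(v)\}$ and that the cut at~$v$ contributes~$\thr(v)$ to~$X_u^{\dimn(v)}$; hence the modification in Equation~\eqref{eq:xp-dim-recurrence} that sets $r^u_{\dimn(v)} = \min\{r_{\dimn(v)}, \thr(v)\}$ (interpreting $r_{\dimn(v)}$ as $\max \Thr(\dimn(v))$ when $\dimn(v) \notin F_v$) produces a valid index in the child's restricted table. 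The symmetric update applies at the right child~$w$. Correctness of this refined recurrence then follows directly from the argument given for \cref{thm-rais-xp-d}, because the restricted indexing precisely captures every example set that a raising-pruned ancestor path can produce at~$v$.

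The running time follows by counting: there are $O(s)$ nodes~$v$; each node has at most~$(D_T+2)^{2|F_v|} \cdot (s+1) \le (D_T+2)^{2d_T} \cdot (s+1)$ table entries; and each entry is filled in $O(s)$ time through the minimization over~$k''$, yielding the claimed $O((D_T+2)^{2d_T} \cdot s^3)$ bound. The proof for \Rais\ is analogous, with the recurrence requiring exactly~$k$ prunings rather than at least~$k$. The main obstacle is to verify carefully that the restricted set of index tuples really captures every example set that can arise at~$v$ from any sequence of raising operations in~$T$, and that the update $r^u_{\dimn(v)}$ always lands inside the child's restricted threshold set; once these points are settled, the remaining steps are routine adaptations of \cref{thm-rais-xp-d}.
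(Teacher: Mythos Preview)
Your proposal is correct and takes essentially the same approach as the paper: both restrict the DP table of \cref{thm-rais-xp-d} by indexing only over the at most~$d_T$ features appearing on the root-to-$v$ path and, within each such feature, only over the at most~$D_T$ thresholds used on that path together with the two boundary values, then observe that the recurrence~\eqref{eq:xp-dim-recurrence} preserves this restriction when passing to children. Your write-up is in fact slightly more explicit than the paper's sketch in justifying why the restricted indices capture every example set arising from raising on ancestors and why the updated threshold~$r^u_{\dimn(v)}$ lands in the child's admissible set.
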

%\appendixproof{thm-rais-xp-d-T}{
\begin{proof}[Proof Sketch]
  We use the almost the same definition of the table $Q$ as in the proof of \Cref{thm-rais-xp-d}:
  Instead of defining the table $Q[v,(\ell_i, r_i)_{i \in [d]},k']$ for all sequences $(\ell_i, r_i)_{i \in [d]}$ of thresholds with $\ell_i, r_i \in \Thr(i)$, instead we restrict these sequences as follows:
  First, we vary only the thresholds for the features that occur on the path $P$ from the root to $v$ and for all remaining features~$i$ we set the thresholds to the fixed maximum and minimum value, respectively.
  Second, in each feature $i \in [d]$ in which we vary thresholds, we consider not all threshold values $\ell_i, r_i \in \Thr(i)$, but only those at most $D_T$ values that occur on cuts in feature $i$ on $P$ and the minimum and maximum value in feature~$i$.
  To see that the recurrence works in the same way, note that, if the left-hand side is so restricted, then all table entries that we refer to on the right-hand side are also restricted in this way for their corresponding tree nodes.
  Thus we refer only to table entries that have previously been computed.
\end{proof}
%}

\ifjour
We achieve further speedups when there are constraints on how root-to-leaf paths can be pruned.
  The table is defined similarly with a small change:
  For each node~$v \in V(T)$, instead of keeping track of the remaining cuts in each of the $d$~features, we track the set $S$ of features in which it is permissible to prune nodes on the path $P$ from~$v$ to the root.

\begin{theorem}%[\appref{thm-rais-xp-ct}]
\label{thm-rais-xp-ct}
\Raisc and \MRaisc can be solved in $\OO(d^{c}\cdot D^{2c}\cdot s^3)$~time, and \Raisct and \MRaisct can be solved in $\OO(d^{c_T}\cdot D^{2c_T}\cdot s^3)$~time, respectively.
\end{theorem}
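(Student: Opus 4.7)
The plan is to adapt the DP of \cref{thm-rais-xp-d}. The key observation is that only features in which cuts may still separate examples on the root-to-$v$ path need to have their thresholds tracked, and the two variants just impose different \emph{a priori} bounds on the size of this set of features.

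For the $c_T$-variants, the input tree has at most $c_T$ features on any root-to-leaf path, so for each node~$v$ the set $S_v \subseteq [d]$ of features appearing on the root-to-$v$ input path has size at most $c_T$. Since raising only removes cuts, any feature occurring on the root-to-$v$ path of any pruned output tree also lies in $S_v$. I would therefore redefine the DP entry $Q[v, \ldots]$ to index thresholds $(\ell_i, r_i)$ only for $i \in S_v$; for $i \notin S_v$, the thresholds are implicitly fixed to $\min \Thr(i)$ and $\max \Thr(i)$. This yields $D^{2c_T}$ threshold configurations per node, and since the subsets $S_v$ realized across $V(T)$ are bounded in number by $\binom{d}{c_T} \le d^{c_T}$, the table has $O(d^{c_T} \cdot D^{2c_T} \cdot s^2)$ entries.

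For the $c$-variants, the features remaining on the output's root-to-$v$ path are not known in advance, but at most $c$ of them exist. Hence I would add an extra index to the table: a subset $S \subseteq [d]$ with $|S| \le c$, collecting the features in which raising on the ancestors of $v$ may still leave cuts. The entry $Q[v, S, (\ell_i, r_i)_{i \in S}, k']$ stores the minimum number of misclassifications achievable in $T_v$ by raising at least (or exactly) $k'$ nodes, subject to this constraint. There are $O(d^c)$ candidate subsets and $D^{2c}$ threshold configurations each, giving the bound $d^c \cdot D^{2c}$ per node.

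The recurrence of \cref{eq:xp-dim-recurrence} carries over nearly verbatim: the only threshold update is to feature $\dimn(v)$, which, if $v$ is not pruned, must belong to the subset associated with both children, and otherwise the subset simply shrinks along the recursive call. Correctness mirrors the argument in \cref{thm-rais-xp-d}, since the examples outside the tracked subset's box can never be further separated by ancestral cuts. The main obstacle will be the bookkeeping: one must check that in each of the three cases of the recurrence the subsets $S_u, S_w$ used for the children's lookups are consistent with $S_v$ (respectively $S$), and that transitioning between them correctly reflects whether the cut at $v$ is preserved or pruned. Combining $O(s)$ time per entry and $O(s)$ possible budgets yields the stated bounds $O(d^c D^{2c} s^3)$ and $O(d^{c_T} D^{2c_T} s^3)$.
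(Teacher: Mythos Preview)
Your proposal inverts the semantics of the set~$S$ relative to the paper's proof, and this reflects a misreading of what the problem variants \Raisc, \MRaisc, \Raisct, \MRaisct constrain. In the paper, $c$ and $c_T$ bound the number of distinct features in which \emph{pruning is permitted} on any root-to-leaf path; accordingly, the paper's DP indexes by the set~$S$ of features in which cuts \emph{may have been removed} above~$v$, with $|S|\le c_T$. For every feature $i\notin S$ no pruning occurred, so the thresholds $\ell_i,r_i$ are fixed by the cuts on the input root-to-$v$ path and need not be stored; only the at most $c_T$ features in~$S$ require explicit thresholds. When~$v$ is pruned, $S$ \emph{grows} to $S'=S\cup\{\dimn(v)\}$ (and the two pruning branches are discarded when $|S'|>c_T$); when~$v$ is kept, $S$ is passed on unchanged. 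This is precisely the opposite of your ``otherwise the subset simply shrinks.''

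Your $S$, by contrast, is the set of features in which cuts \emph{remain} above~$v$; for $i\notin S$ you take the thresholds to be wide open. Under the paper's constraint this set is not bounded by $c_T$ at all (it can be as large as $d_T$), so the table-size argument breaks down. Your $c_T$-variant treatment is essentially the proof of \cref{thm-rais-xp-d-T} for the parameter~$d_T$; note that if $c_T$ really equalled $d_T$, the $d^{c_T}$ factor in the theorem statement would be superfluous, which is a strong hint that the two parameters differ. To repair the argument, track the features in which pruning has been allowed (not those that survive), fix the untracked thresholds to the input-path values rather than to $\min/\max$, and let $S$ grow (up to size $c$ or $c_T$) exactly when a cut is pruned.
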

%\appendixproof{thm-rais-xp-ct}{
\begin{proof}[Proof Sketch]
We only show the result for \MRaisct.
The proofs for the other three problems work analogously.
  
  We use the same ideas as in the proof of \Cref{thm-rais-xp-d} for filling a table $Q$ via bottom-up dynamic programming on the input tree~$T$.
  The table is defined similarly with a small change:
  For each node $v \in V(T)$, instead of keeping track of the remaining cuts in each of the $d$ features, we keep track of the set $S$ of features in which it is permissible to prune nodes on the path $P$ from $v$ to the root.
  Note that $|S| \leq c_T$.
  For each feature $i$ not in $S$ the thresholds $\ell_i$, $r_i$ are then specified by the cuts on $P$ and we only need to keep track of the thresholds for the features in $S$.
  
  \textbf{Definition of the DP Table:}
  We define the table $Q$ by, for each node $v \in V(T)$, for each remaining budget $k' \in [k] \cup \{0\}$ (where we must also treat the case of zero further pruning operations), for each subset $S \subseteq [d]$ of at most $c_T$ features, and for each sequence $(\ell_i, r_i)_{i \in S}$ of thresholds with $\ell_i, r_i \in \Thr(i)$, putting $Q[v, S, (\ell_i, r_i)_{i \in S},k']$ to be the minimum number of misclassifications achievable when classifying the example set $E[(\ell_i, r_i)_{i \in [d]}]$ with a tree obtained from the subtree~$T_v$ by raising operations that prune at least $k'$~nodes from $T_v$.
  Herein, the thresholds for features not in $S$ are defined as follows:
  Consider the path $P$ in $T$ from the root to $v$.
  For each $i \in [d] \setminus S$ we put $\ell_i = \max \thr(w)$ where the maximum is taken over all $w \in V(P) \setminus \{v\}$ such that $\dimn(w) = i$ and the successor of $w$ on $P$ is a right child.
  Similarly, we put $r_i = \min (\thr(w))$ where the minimum is taken over all $w \in V(P) \setminus \{v\}$ such that $\dimn(w) = i$ and the successor of $w$ is a left child.
  Note that, according to the definition of $Q$, the set $S$ restricts only the cuts that are possible strictly above $v$, not below. 
  
  \textbf{Location of Solution and Initialization:}
  Again, the values for leaf nodes $v$ are clear and the final solution can be found by fixing $v$ to be the root of $T$ and $k' = k$, and taking the minimum entry of $Q$ over all relevant sets $S$ and threshold values.

\textbf{Recurrence and Correctness:}
  The recurrence also works similarly to before.
  Let $u, w$ be the left and right child of $v$, respectively, and let $s_u$, $s_w$ be the number of nodes in $T_u$ and $T_w$ and let $S' = S \cup \{\dimn(v)\}$.
  We claim that the following holds.
  \begin{multline*}%\label{eq:xp-ct-recurrence}
    Q[v, S, (\ell_i, r_i)_{i \in S},k'] = \\
    \min
    \begin{cases}
      Q[u, S', (\ell_i, r_i)_{i \in S'},k' - s_w - 1] \\
      Q[w, S', (\ell_i, r_i)_{i \in S'},k' - s_u - 1] \\
      \min_{k'' \in [k'] \cup \{0\}} Q[u, S, (\ell_i, r^u_i)_{i \in S},k''] + {} \\\hfill Q[w, S, (\ell^w_i, r_i)_{i \in S},k' - k''].
    \end{cases}
  \end{multline*}
  Herein, we ignore the top two entries in the outer minimum if $|S'| > c_T$.
  To define the top two entries, we need to specify $\ell_{\dimn(v)}, r_{\dimn(v)}$ if $\dimn(v) \notin S$.
  In this case, $\ell_{\dimn(v)}, r_{\dimn(v)}$ are defined by looking at the strongest remaining cuts after $v$ has been pruned:
  Let $P$ be the path in $T$ from the root to $v$.
  We put $\ell_{\dimn(v)} = \max \thr(w)$ where the maximum is taken over all $w \in V(P) \setminus \{v\}$ such that $\dimn(w) = \dimn(v)$ and the successor of $w$ on $P$ is a right child.
  Similarly, we put $r_{\dimn(v)} = \min \thr(w)$ where the minimum is taken over all $w \in V(P) \setminus \{v\}$ such that $\dimn(w) = \dimn(v)$ and the successor of $w$ is a left child.
  Finally, as before, for each $i \in S$, $r^u_i = r_i$ if $i \neq \dimn(v)$ and $r^u_{\dimn(v)} = \min \{r_{\dimn(v)}, \thr(v)\}$, and $\ell^w_i = \ell_i$ if $i \neq \dimn(v)$ and $\ell^w_{\dimn(v)} = \max \{\ell_{\dimn(v)}, \thr(v)\}$.
  We omit the remaining details of the correctness proof.
  
  \textbf{Running Time:}
  The running time again follows from the table size of $s \cdot d^{c_T} \cdot D^{c_T} \cdot s$.
  As mentioned, the proof for \Raisct\ is analogous: The main change is to replace in the definition of table $Q$ the requirement to prune at least $k$ nodes with the requirement to prune exactly $k$ nodes; the recurrence is exactly the same.
\end{proof}
%}
\fi

\newcommand{\id}{\textsf{id}}

So far, all of our results applied to both the at least and the exactly variant of subtree raising.
However, for \Rais, we can achieve an FPT-algorithm for~$k+d$.
Later, in \Cref{prop-mrais-w-hard-k=0} we show that such a result for \MRais is unlikely to exist under standard complexity theory assumptions.

\begin{theorem}\label{thm-rais-fpt-k-d}
  \Rais\ can be solved in $\OO((k+1)^{2d_T} \cdot s^3)$ time.%
\end{theorem}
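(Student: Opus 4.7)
The plan is to reuse the table structure from \Cref{thm-rais-xp-d-T}, but to tighten the range of the threshold indices $(\ell_i,r_i)$ from ``all threshold values appearing on the root-to-$v$ path plus the global extremes'' to ``all values \emph{achievable} under a budget of $k$ raising operations''. The core observation I would establish first is that, specifically for \Rais, each of $\ell_i$ and $r_i$ can take at most $k+1$ distinct values, which together with at most $d_T$ features on any root-to-leaf path yields the $(k+1)^{2d_T}$ factor.

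To prove the bound, I would fix a node $v\in V(T)$ with root-to-$v$ path $P$ and, for each feature $i$ appearing on $P$, order the cuts $w\in V(P)$ with $\dimn(w)=i$ whose successor on $P$ is a right child by descending threshold. In any pruning, $\ell_i$ is the largest surviving threshold of this sequence; it equals the $(j{+}1)$-st entry when the top $j$ entries have been pruned (or the feature's global minimum when all of them have been pruned). Removing each such top entry requires at least one raising operation, so $j\le k$, giving at most $k+1$ achievable values for $\ell_i$; the analogous argument bounds the values of $r_i$ by $k+1$.

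I would then redefine $Q[v,(\ell_i,r_i)_{i\in S},k']$ to range only over these achievable tuples and reuse the recurrence \cref{eq:xp-dim-recurrence} verbatim. The resulting table has $\OO(s\cdot(k+1)^{2d_T}\cdot k)$ entries filled in $\OO(s)$ time each, and since $k\le s$ the total running time becomes $\OO((k+1)^{2d_T}\cdot s^3)$.

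The step I expect to be the main obstacle is the closure check: whenever the recurrence at $v$ looks up a child entry $Q[u,(\ell_i,r^u_i)_{i\in S'},k'']$, the tuple $(\ell_i,r^u_i)$ must still be an achievable index at $u$ under budget $k''$. This reduces to verifying that inserting $\thr(v)$ into the per-feature list at feature $\dimn(v)$ corresponds again to a ``top-$j$ pruned'' configuration for some $j\le k''$. Not pruning $v$ either leaves the relevant threshold unchanged or replaces it by $\thr(v)$ via the $\min$/$\max$ update, both of which correspond to the same pruning count in the extended list; pruning $v$ together with one of its subtrees is already handled by the other two cases of \cref{eq:xp-dim-recurrence}, so no illegal index is ever queried.
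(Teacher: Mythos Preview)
Your plan is essentially the paper's approach: restrict the threshold tuples in the DP of \Cref{thm-rais-xp-d-T} so that each $\ell_i,r_i$ ranges over at most $k+1$ values, yielding the claimed table size. That part is fine.

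The gap is in your closure argument, and specifically in the two ``prune $v$'' branches of \cref{eq:xp-dim-recurrence}, which you dismiss in one clause without actually checking them. When $v$ is pruned and you descend to the left child $u$, the right-cut list at $u$ in feature $\dimn(v)$ acquires the new entry $\thr(v)$, but you pass the \emph{unchanged} threshold $r_{\dimn(v)}$ to $u$. If $r_{\dimn(v)}$ already had index $k+1$ at $v$ and $\thr(v)<r_{\dimn(v)}$, then at $u$ its index is $k+2$---so an out-of-range index \emph{is} queried, contradicting your claim. Separately, the bound you write as ``$j\le k''$'' has the wrong quantity: $k''$ is the budget to be spent \emph{inside} $T_u$, while the number of raisings available \emph{above} $u$ is $k-k''$.

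The paper closes this hole by tracking not the per-feature bound but the global relevance condition $k\ge k'+\sum_{i}(\id_\ell(i)+\id_r(i)-2)$. This sum increases by at most $1$ when you move from $v$ to a child (one new cut enters exactly one list), while $k'$ drops by at least $1$ in the prune-$v$ branches, so relevance is preserved; the per-feature bound $\id\le k+1$ then falls out as a corollary and gives the table-size estimate. An alternative repair of your version is simply to declare any entry with an index exceeding $k+1$ to have value $\infty$: such an entry would require more than $k$ raisings on the root-to-child path alone, hence is infeasible in \Rais. Either fix is short, but one of them is needed.
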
%
\begin{proof}[Proof Sketch]
  \looseness=-1
  We proceed analogously to \Cref{thm-rais-xp-d,thm-rais-xp-d-T} for filling a table $Q$ via bottom-up dynamic programming on the input tree~$T$.
  %The definition of the table and the recurrence are similar to the case for \Rais\ in \Cref{thm-rais-xp-d,thm-rais-xp-d-T}.
  The main difference is that we restrict the possibilities for the values of the thresholds~$\ell_i, r_i$.
  Intuitively, if we focus on the strongest $k + 1$ cuts on the left in a specific feature $i$, at least one them cannot be pruned.
  Thus, the threshold $\ell_i$ that we index our table with has to be among these $k + 1$ cuts.
  This restricts the number of table entries to $\OO((k+1)^{2d_T} \cdot s^2)$.

\textbf{Definition of the DP Table:}
  Table $Q$ is defined similarly to \Cref{thm-rais-xp-d}, but we only consider relevant (see below) threshold sequences $(\ell_i, r_i)_{i \in [d]}$ for node $v$ with $\ell_i, r_i \in \Thr(i)$.
  %the minimum number of misclassifications achievable when classifying examples $E[(\ell_i, r_i)_{i \in [d]}]$ with a tree obtained from~$T_v$ by raising operations that prune exactly $k'$~nodes from $T_v$.
  %Table $Q$ is defined by, for each node $v \in V(T)$, for each remaining budget $k' \in [k] \cup \{0\}$, and for each relevant (see below) sequence $(\ell_i, r_i)_{i \in [d]}$ of thresholds with $\ell_i, r_i \in \Thr(i)$, putting $Q[v,(\ell_i, r_i)_{i \in [d]},k']$ to be the minimum number of misclassifications achievable when classifying the example set $E[(\ell_i, r_i)_{i \in [d]}]$ with a tree obtained from the subtree~$T_v$ by raising operations that prune exactly $k'$~nodes from $T_v$.
  %\looseness=-1
  %We now define the relevant sequences $(\ell_i, r_i)_{i \in [d]}$ for node $v$:
  Intuitively, the thresholds $(\ell_i, r_i)$ must be the thresholds of the strongest cuts in feature $i$ that occur above $v$ in $T$ after removing at most $k - k'$ cuts above $v$.
  Let $P$ be the path in $T$ from $v$ to the root.
  For each feature $i \in [d]$ let $(\ell_i^j)_{j \in J_\ell}$ be the list of thresholds of the cuts on the left above $v$ ordered from right to left (largest to smallest).
  That is, to obtain $(\ell_i^j)_{j \in J_\ell}$, take the set
  of thresholds of cuts $y$ on $V(P) \setminus \{v\}$ such $\dimn(y) = i$ and $y$'s predecessor on $P$ is a right child,
  %  \begin{multline*}
%    \{\thr(y) : y \in V(P) \setminus \{v\} \wedge \dimn(y) = i \wedge {} \\
%      y \text{'s predecessor on $P$ is a right child}\}
%  \end{multline*}
  and then order it descendingly.
  Similarly, let $(r_i^j)_{j \in J_r}$ be the list of thresholds of the cuts on the right above $v$ ordered from left to right (smallest to largest).
  % .
  % That is, to obtain $(r_i^j)_{j \in J_r}$, take the set
  % \begin{multline*}
  %   \{\thr(y) : y \in V(P) \setminus \{v\} \wedge \dimn(y) = i \wedge {} \\
  %   y \text{'s predecessor on $P$ is a left child}\}
  % \end{multline*}
%  and order it ascendingly.

  We now need notation to refer to the strength of a cut~$c$, which is intuitively one plus the number of cuts that have to be pruned such that $c$ becomes the strongest cut.
  For this, let $\id_{\ell}(i)$ be the index of $\ell_i$ in $(\ell_i^j)_{j \in J_\ell}$, that is, if $\ell_i = \ell^j_i$, then $\id_{\ell}(i) = j$.
  Note that the index is well-defined.
  Analogously, let $\id_r(i)$ be the index of $r_i$ in $(r_i^j)_{j \in J_r}$.
  Sequence $(\ell_i, r_i)_{i \in [d]}$ is \emph{relevant} (for node $v$) if the remaining budget $k'$ together with the number of cuts that need to be pruned above $v$ such that the thresholds $\ell_i, r_i$ correspond to the strongest cuts do not exceed $k$. 
  Formally, it must hold that $k \geq k' + \sum_{i \in [d]} (\id_{\ell}(i) + \id_r(i) - 2)$.
  Observe that the sum indeed measures the number of nodes we have to remove from $P$ so that the strongest remaining cuts have the thresholds specified in $(\ell_i, r_i)_{i \in [d]}$.
  
%\textbf{Solution and Initialization:}  
%  Analogous to \Cref{thm-rais-xp-d}.
  % The solution can again be read off from $Q$ by letting $v$ be the root of $T$, $k' = k$, and each threshold the largest or smallest in the corresponding feature.
  % Initialization is done analogously:
  % For each leaf~$v\in T$ the value of~$Q$ is simply the number of examples in~$E[(\ell_i, r_i)_{i\in[d]}]$ that have a different label than~$v$.
  
  \textbf{Recurrence and Correctness of the DP:}
  The same as in \cref{eq:xp-dim-recurrence}, except that we will not prune $v$ if it would lead to too many cuts being removed to obtain the specified thresholds.
  % That is,
  % \begin{multline*}%\label{eq:fpt-k-dim-recurrence}
  %   Q[v,(\ell_i, r_i)_{i \in [d]},k'] = \\
  %   \min
  %   \begin{cases}
  %     Q[u,(\ell_i, r_i)_{i \in [d]},k' - s_w - 1],\\
  %     Q[w,(\ell_i, r_i)_{i \in [d]},k' - s_u - 1],\\
  %     \min_{k'' \in [k'] \cup \{0\}} Q[u,(\ell_i, r^u_i)_{i \in [d]},k''] + {} \\\hfill Q[w,(\ell^w_i, r_i)_{i \in [d]},k' - k''],
  %   \end{cases}
  % \end{multline*}
  % where $\ell^w_i = \ell_i$ and $r^u_i = r_i$ if $i \neq \dimn(v)$, and otherwise $r^u_{\dimn(v)} = \min \{r_{\dimn(v)}, \thr(v)\}$ and $\ell^w_{\dimn(v)} = \max \{\ell_{\dimn(v)}, \thr(v)\}$.
  Moreover, we will not prune the cut if $k < k' + \sum_{i \in [d]} (\id_{\ell}(i) + \id_r(i) - 2)$.
  Note that, in this way, if the sequence $(\ell_i, r_i)_{i \in [d]}$ is relevant for $v$ then also in the table entries on the right-hand side it is the case that the sequences of thresholds are relevant for the corresponding nodes.
  We omit a correctness proof of the recurrence because it is analogous to the proof of \cref{thm-rais-xp-d}.
  
  \textbf{Running Time:}
  %The running time is $\OO((k+1)^{2d_T} \cdot s^3)$, as t
  There are $(k + 1)^{2d_T} s^2$ relevant threshold sequences, and each entry is computed in $\OO(s)$ time.
  %The running time is $\OO((k+1)^{2d_T} \cdot s^3)$ because, due to the threshold sequence $(\ell_i, r_i)_{i \in [d_T]}$ being relevant, there are at most $(k + 1)^{2d_T} \cdot s^2$ table entries, each of which can be filled in $\OO(s)$ time in a bottom-up fashion.
\end{proof}

\ifjour
Observe that the algorithm of \Cref{thm-rais-fpt-k-d} can also be used for \Raisc and \MRaisc:
Instead of focusing on the strongest~$k+1$ cuts on the left, we focus on the strongest~$c+1$ cuts on the left.
We obtain the following.

\begin{corollary}
\Raisc and \MRaisc can be solved in $\OO((c+1)^{2d_T} \cdot s^3)$~time.
\end{corollary}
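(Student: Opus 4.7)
The plan is to reuse the dynamic program of \Cref{thm-rais-fpt-k-d} almost verbatim and only replace the notion of a \emph{relevant} threshold sequence so that it reflects the per-root-to-leaf-path budget $c$ in place of the global budget $k$. The key observation is the exact analogue of the one underlying \Cref{thm-rais-fpt-k-d}: along any root-to-leaf path of the input tree, \Raisc and \MRaisc permit at most $c$ prunings, so among the strongest $c+1$ cuts on the left in any feature above a node $v$ at least one must survive, and the threshold of the strongest remaining cut must therefore lie among those $c+1$ values; the same argument applies on the right.

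First, I would keep the table $Q[v,(\ell_i, r_i)_{i \in [d]}, k']$ and the recurrence from \cref{eq:xp-dim-recurrence} unchanged, and only redefine which threshold sequences are admissible. Using the index notation of \Cref{thm-rais-fpt-k-d}, I would call $(\ell_i, r_i)_{i \in [d]}$ relevant for $v$ whenever
\[
c \geq \sum_{i \in [d]} (\id_\ell(i) + \id_r(i) - 2),
\]
i.e.\ the number of cuts on the root-to-$v$ path that must be removed in order to expose $\ell_i, r_i$ as the strongest remaining thresholds in each feature does not exceed $c$. Only features that actually occur on the root-to-$v$ path contribute nontrivially, so the number of relevant sequences per node is bounded by $(c+1)^{2 d_T}$.

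Second, I would verify that the recurrence preserves relevance. When $v$ is not pruned, each child inherits the same path budget $c$ and its indices $\id_\ell, \id_r$ differ from those at $v$ only by the single cut that $v$ contributes in feature $\dimn(v)$, so the relevance sum on the child side still satisfies the $c$-bound. When $v$ itself is pruned together with one of its subtrees, we simply discard that branch unless pruning $v$ keeps the total number of prunings on the path above within $c$, in direct parallel to the $k$-based guard in \Cref{thm-rais-fpt-k-d}. The same modification works for \MRaisc because the distinction between the at-least and the exactly variant is encoded solely in the $k'$ index and does not interact with the relevance condition.

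The main obstacle I anticipate is the careful bookkeeping of per-path pruning counts in the recurrence, in particular making sure that the path budget is correctly charged when $v$ is removed and the threshold indices migrate to a child. Once this is checked, the running time matches the count in \Cref{thm-rais-fpt-k-d}: at most $s \cdot (c+1)^{2 d_T}$ relevant table entries, each computed in $\OO(s)$ time, giving the claimed $\OO((c+1)^{2 d_T} \cdot s^3)$ bound.
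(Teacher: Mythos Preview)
Your proposal is correct and follows essentially the same route as the paper: the paper's entire argument for this corollary is the one-line observation that in the algorithm of \Cref{thm-rais-fpt-k-d} one may focus on the strongest $c+1$ cuts on each side instead of the strongest $k+1$, which is precisely the substitution you spell out in detail. The only minor slip is that your table-entry count should also include the $k'$ index (so $s^2 \cdot (c+1)^{2d_T}$ entries rather than $s \cdot (c+1)^{2d_T}$), but your final $\OO((c+1)^{2d_T} \cdot s^3)$ bound is unaffected.
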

\fi

%Observe that the algorithm of \Cref{thm-rais-fpt-k-d} can also be used for the variants of \MRais and \Rais where we only allow an upper bound~$c$ on the removed inner nodes:

%This yields algorithms with running time~$\OO((c+1)^{2d} \cdot s^3)$.

Next, we prove that \MRais\ is in XP for $k+t$. 
We start with the special case of $k = t = 0$:

\begin{lemma}%[\appref{lem-mrais-xp-k-t-zero}]
  \label{lem-mrais-xp-k-t-zero}
  \MRais\ can be solved in $\OO(ns)$ time if $k = t = 0$.
\end{lemma}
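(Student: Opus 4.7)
The plan is to solve the problem via bottom-up dynamic programming on~$T$. I first preprocess by computing the set $E[T,v]$ for every node~$v$ in $\OO(ns)$ time, which can be done by routing each example from the root to its leaf and recording each node it visits.

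Next, for each node~$v$ in post-order, I would determine a boolean $f(v)$ that is true iff the subtree $T_v$ can be raising-pruned to classify the set $E[T,v]$ with exactly~$0$ errors. For a leaf~$v$, $f(v) = \top$ iff $E[T,v]$ is monochromatic of color $\cla(v)$. For an inner node~$v$ with children $u_1,u_2$, three local decisions must be considered: (A)~keep~$v$'s cut and require $f(u_1) \wedge f(u_2)$; (B)~raise $u_2$, in which case $T_{u_1}$ must classify the enlarged set $E[T,v] = E[T,u_1] \cup E[T,u_2]$ with zero errors; (C)~symmetric to~(B) with $u_1$ raised.

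To handle the raising cases~(B), (C) without inflating the state space, I would maintain a small amount of auxiliary information at each node---for example, the set $C(v) \subseteq \{\lpos,\lneg\}$ of colors appearing as leaves in $T_v$, together with $f(v)$. Then checking case~(B) at~$v$ can be done in $\OO(|E[T,v]|)$ time by scanning the examples in $E[T,v]$: if the scanned set is monochromatic with color in $C(u_1)$, then $T_{u_1}$ can be pruned to a single leaf of that color; otherwise the check needs the recursion to continue through the cut structure of $T_{u_1}$, which is captured inductively by the states already computed at the descendants of~$u_1$. Since each example~$e$ appears in $E[T,v]$ for only the nodes on its root-to-leaf path, $\sum_{v \in V(T)} |E[T,v]| \le ns$, giving total running time $\OO(ns)$. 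The final answer is $f(\text{root})$.

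The main obstacle is to formally justify that a constant-size state per node suffices for the raising cases. The justification will use the fact that $t=0$ forces every absorbed example in cases~(B), (C) to match some leaf color of the remaining subtree; consequently the feasibility check reduces to verifying that each example in the enlarged set $E[T,v]$ can be routed to a leaf of the correct color in the preserved child's subtree, which is a property that can be checked locally via the cut at $u_i$ and the inductively computed states for the grandchildren. If the simple pair $(f,C)$ turns out to be too weak to propagate correctly, I would refine the auxiliary state while keeping it of constant size per node, so that the amortized $\OO(|E[T,v]|)$ per-node work---and thus the $\OO(ns)$ total bound---is preserved.
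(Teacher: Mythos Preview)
Your dynamic program has a genuine gap at exactly the point you flag as ``the main obstacle.'' The value $f(u_1)$ is defined with respect to the example set $E[T,u_1]$, but in your raising case~(B) the subtree $T_{u_1}$ must now classify the strictly larger set $E[T,v]$: the examples from $E[T,u_2]$ enter $T_{u_1}$ and are routed by the cuts inside $T_{u_1}$ to various descendants. Hence every descendant $w$ of $u_1$ sees a different example set than $E[T,w]$, so none of the previously computed states $f(w)$ (or $(f(w),C(w))$) are usable. Your fallback---``refine the auxiliary state while keeping it of constant size''---is precisely the hard part and you give no construction; in fact, the dependence of feasibility on the full sequence of ancestor raisings is exactly why the general problem is NP-hard. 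A constant-size per-node state that survives arbitrary enlargement of the example set is not something one can assume into existence.

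The paper sidesteps this entirely with a one-line structural observation specific to $t=0$: a raising operation never changes the leaf of an example unless that leaf is in the discarded subtree. Therefore any leaf currently containing a misclassified example \emph{must} be removed in every $0$-error solution. This makes the problem confluent: greedily raise at the parent of any such leaf, re-route the affected examples, and repeat. Either all errors disappear or the whole tree is consumed. Each example traverses each edge at most a constant number of times over the entire process, giving $\OO(ns)$. No DP state is needed because there are no real choices to make.
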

%\appendixproof{lem-mrais-xp-k-t-zero}{
\begin{proof}
  Observe that if an example is misclassified, it will remain misclassified unless the leaf to which it ends up gets pruned away. Consequently, to achieve zero errors, we have to repeatedly prune any leaf containing a misclassified example while such leaves remain. If the whole tree gets pruned, there is thus no solution. During the execution of the algorithm, each example can pass each edge of the decision tree twice, resulting in $\OO(ns)$ total work.
\end{proof}
%}

\begin{theorem}%[\appref{thm-mrais-xp-k-t}]
\label{thm-mrais-xp-k-t}
  \MRais\ is solvable in $\OO(n^{t+1}s^{k+1})$ time.
\end{theorem}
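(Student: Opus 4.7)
The plan is to enumerate potential solution certificates and verify each in polynomial time. A valid solution $T^*$ is characterized by two sets: (i)~the set $M^* \subseteq E$ of examples misclassified in $T^*$, with $|M^*| \leq t$, and (ii)~the set $R^* \subseteq V(T)$ of pruned inner nodes of~$T$, with $|R^*| \geq k$. I would enumerate all subsets $M \subseteq E$ with $|M| \leq t$, of which there are at most $\binom{n}{\leq t} = O(n^t)$, and all subsets $R \subseteq V(T)$ of inner nodes with $|R| = k$, of which there are at most $\binom{s}{k} = O(s^k)$.

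For each pair $(M, R)$, I would perform a polynomial-time feasibility check: is there a tree obtainable from $T$ by raising operations such that (a)~every node in $R$ is pruned and (b)~every example in $E \setminus M$ is correctly classified? To carry out this check, I would interleave two kinds of raisings: ones that prune the specified nodes of $R$ (cascading within $R$-node subtrees as needed so that each $R$-node acquires a leaf child and can be eliminated by an elementary raising), and ones driven by the algorithm of \cref{lem-mrais-xp-k-t-zero} restricted to $E \setminus M$, which repeatedly raises the parent of any leaf still containing a misclassification from $E \setminus M$. The check reports SUCCESS iff it produces a tree simultaneously satisfying (a) and (b).

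For correctness, given any valid solution $(T^*, M^*, R^*)$, I set $M = M^*$ and pick any $R \subseteq R^*$ with $|R| = k$. Then the check for $(M, R)$ succeeds since $T^*$ itself is obtainable by raisings, prunes all of $R \subseteq R^*$, and correctly classifies $E \setminus M^*$; moreover, the tree ultimately produced has at least $|R| = k$ pruned nodes and at most $|M| \leq t$ errors (because only examples in $M$ are possibly misclassified in a tree where $E \setminus M$ is correctly classified). Total running time: $O(n^t)$ for enumerating $M$, times $O(s^k)$ for enumerating $R$, times $O(ns)$ for the check (adapting \cref{lem-mrais-xp-k-t-zero}), giving $O(n^{t+1} s^{k+1})$ overall.

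The main obstacle will be formalizing the interleaved greedy procedure: I need to argue that it always finds a valid output tree whenever one exists for the given $(M,R)$, and in particular that the choice, at each raising, of which child subtree to keep can be made consistently so as to both eliminate the $R$-nodes and avoid misclassifying any example in $E \setminus M$. The key observation underlying this, in the spirit of \cref{lem-mrais-xp-k-t-zero}, is that any example of $E \setminus M$ misclassified at a leaf must have some ancestor raised away, so greedy pruning of misclassified leaves commits to no irrevocable bad choice.
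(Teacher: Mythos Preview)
Your high-level approach matches the paper's: guess $M$ (the $\le t$ errors) and $k$ pruned cuts, then reduce to the $k=t=0$ case via \cref{lem-mrais-xp-k-t-zero}. The obstacle you flag is real: once you fix only a \emph{set} $R$ of $k$ inner nodes, the feasibility check must decide, for each $w\in R$, which side to keep, and this choice is not forced in the way the prunings of \cref{lem-mrais-xp-k-t-zero} are. One can build small instances where keeping one child of $w$ leads (after cascading) to a valid tree while keeping the other child cascades to failure; your interleaved greedy, as described, has no mechanism to pick the right side, so the ``key observation'' you cite from \cref{lem-mrais-xp-k-t-zero} does not extend to the $R$-driven raisings.

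The paper does not solve this obstacle --- it sidesteps it. Rather than enumerating sets of $k$ inner nodes and then figuring out how to prune them, it (in effect) enumerates the first $k$ elementary raising \emph{operations} of a putative solution: a sequence of $k$ leaves to remove, each together with its current parent. There are at most $(s{+}1)\cdot s\cdots(s{-}k{+}2)\le (s{+}1)^k=O(s^k)$ such sequences (assuming $k\le s$; otherwise the instance is trivially infeasible). After performing these $k$ raisings to obtain $T_R$, one simply runs \cref{lem-mrais-xp-k-t-zero} on $(E\setminus M,\,T_R)$ --- no interleaving, no remaining direction choices. Correctness is then immediate in both directions: if a solution $T^*$ exists, guessing its first $k$ elementary raisings and any $M\supseteq M^*$ of size $t$ makes $T^*$ a witness for the reduced instance; conversely, any successful reduced instance yields a tree with $\ge k$ pruned nodes and $\le t$ errors. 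So the fix for your proof is to absorb the direction choices into the $O(s^k)$ enumeration; your ``main obstacle'' then vanishes and the bound $O(n^{t+1}s^{k+1})$ holds.
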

%\appendixproof{thm-mrais-xp-k-t}{
\begin{proof}
  Assume a solution exists. If we prune $k$ cuts from the set of pruned cuts and remove the misclassified examples of the solution, then the algorithm from \Cref{lem-mrais-xp-k-t-zero} finds a solution to this reduced instance with $k = t = 0$. Conversely, if no solution exists, then no reduced instance does has a solution either. Iterating over all subsets of cuts of size $k$ and subsets of examples of size $t$ results in the desired time complexity.
\end{proof}
%}

\begin{comment}
Some trivial statements:
\begin{itemize}
\item \Rais and \MRais are FPT with respect to~$s$
\item \Rais and \MRais are FPT with respect to~$d+D$
\item \Rais and \MRais are XP with respect to~$\ell$
\item \Rais  is XP with respect to~$k$
\end{itemize}
\end{comment}

  % \appendixproof{thm-rais-xp-d-T}{
  % \begin{proof}[Proof Sketch]
  %   We use the almost the same definition of the table $Q$ as in the proof of \Cref{thm-rais-xp-d}:
  %   Instead of defining the table $Q[v,(\ell_i, r_i)_{i \in [d]},k']$ for all sequences $(\ell_i, r_i)_{i \in [d]}$ of thresholds with $\ell_i, r_i \in \Thr(i)$, instead we restrict these sequences as follows:
  %   First, we vary only the thresholds for the features that occur on the path $P$ from the root to $v$ and for all remaining features~$i$ we set the thresholds to the fixed maximum and minimum value, respectively.
  %   Second, in each feature $i \in [d]$ in which we vary thresholds, we consider not all threshold values $\ell_i, r_i \in \Thr(i)$, but only those at most $D_T$ values that occur on cuts in feature $i$ on $P$ and the minimum and maximum value in feature~$i$.
  %   To see that the recurrence works in the same way, note that, if the left-hand side is so restricted, then all table entries that we refer to on the right-hand side are also restricted in this way for their corresponding tree nodes.
  %   Thus we refer only to table entries that have previously been computed.
  % \end{proof}
  % }

\begin{theorem}%[\appref{thm-fpt-ell-delta-D-t}]
\label{thm-fpt-ell-delta-D-t}
\MRais\ can be solved in $\OO((6 \delta_{\max} D \ell (t + 1))^\ell \cdot \ell^2 n s)$~time.
\end{theorem}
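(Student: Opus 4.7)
I would design a bounded-depth branching algorithm that constructs a target pruned tree $T^*$ top-down via at most $\ell$ expansion steps. The state maintained along the recursion is a partial tree $T'$ with a set of open \emph{placeholder} leaves. For each placeholder $v$, I store the set $E[v] \subseteq E$ of examples currently routed to $v$ under $T'$, and a local error budget $t_v \in \{0,\dots,t\}$, maintaining the invariant $\sum_v t_v \leq t$. Initially, the only placeholder is the root, with $E[v]=E$ and $t_v=t$.

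At each recursive call, I would pick one open placeholder $v$ from the at most $\ell+1$ currently available and branch on its fate. Option~A: commit $v$ as a leaf, assigning it the majority class of $E[v]$; this branch is pruned unless the minority count of $E[v]$ does not exceed $t_v$ (at most $2$ class choices). Option~B: expand $v$ by placing a cut at it that is inherited from $T$. For Option~B, I fix any pair $(e_r,e_b)\in E[v]$ of opposite-class examples and require the cut at $v$ to separate $(e_r,e_b)$. Since $(e_r,e_b)$ differ in at most $\delta_{\max}$ features and each feature contributes at most $D$ useful thresholds between their coordinates, this gives at most $\delta_{\max}\cdot D$ cut options. For each such cut, I split $E[v]$ between the two newly created child placeholders and distribute $t_v$ as $t_v^L+t_v^R=t_v$ in one of the at most $t_v+1\leq t+1$ ways. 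Collecting, the per-call branching factor is at most $\ell\cdot(2+\delta_{\max}D(t+1))\leq 6\delta_{\max}D\ell(t+1)$, and the recursion depth is at most $\ell$ since every Option~B increases $|V(T')|$ by one and we cap the total number of inner nodes at $\ell$.

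When no open placeholder remains (or the size cap is hit and remaining placeholders are committed via Option~A), I verify in $\OO(\ell^2 n s)$ time that $T'$ is realizable by raising from $T$ — equivalently, that each cut chosen along a root-to-leaf path of $T'$ lies on the corresponding root-to-leaf path of $T$ in the correct order — and that the resulting number of errors is at most $t$. The example sets $E[v]$ at each of the $\leq\ell+1$ final leaves can be recomputed in $\OO(\ell n)$ time, the realisability check traverses $\OO(\ell)$ root-to-leaf paths of length $\OO(s)$, and the remaining bookkeeping is absorbed in the $\ell^2 n s$ factor. Multiplying the branching factor raised to the depth by the verification cost gives the claimed $(6\delta_{\max}D\ell(t+1))^{\ell}\cdot \ell^2 n s$ bound.

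Correctness rests on a structural lemma that I would prove separately: in any feasible solution, we may assume without loss of generality that the cut at every inner node $v$ separates some opposite-class pair in $E[v]$; otherwise, one child of $v$ receives only single-class examples and can be collapsed into a single leaf, which strictly decreases the number of inner nodes and does not increase the error count. This lemma lets me charge each inner node of an optimal $T^*$ to a specific opposite-class pair it separates. The main obstacle I anticipate is an \emph{exchange argument}: the pair $(e_r,e_b)$ selected by my algorithm at placeholder $v$ need not be the one separated by the cut at $v$ in $T^*$. I will have to show that along the path in $T^*$ where $(e_r,e_b)$ is first separated, the cuts can be locally permuted so that $(e_r,e_b)$'s separating cut moves up to $v$ without increasing the tree size or error count, and while remaining realizable by raising from $T$. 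This last condition — preserving raising-realizability under the exchange — is the delicate step that will require the most care.
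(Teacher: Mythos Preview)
Your two-phase plan---first enumerate candidate decision trees with at most $\ell$ inner nodes and at most $t$ errors, then test each candidate for raising-realizability against $T$---is precisely the structure of the paper's proof. The paper delegates the enumeration to the witness-tree algorithm of Komusiewicz et al.\ (KKSS23A), which already achieves the $(6 \delta_{\max} D \ell (t + 1))^\ell \cdot \ell n$ bound, and then performs the recursive realizability check (essentially the one you describe) at an extra $\OO(\ell s)$ cost per candidate.

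The divergence, and the genuine gap, is in your enumeration. The witness-tree algorithm keeps \emph{one} witness example per leaf and, whenever more than $t$ errors remain, \emph{branches over which of $t+1$ selected errors is correctly classified} in the target; it then inserts a cut separating that error from the leaf's witness. You instead fix \emph{both} endpoints $(e_r,e_b)$ up front and branch only over separating cuts and budget splits. The exchange argument you would need for this is not merely delicate---it is false. Concretely, take six examples in $\{0,1\}^3$: reds at $(0,0,0),(1,1,0),(1,1,1)$ and blues at $(0,0,1),(1,0,0),(1,0,1)$. A $3$-cut, $0$-error tree exists (root splits on feature~$1$, then feature~$3$ on the left and feature~$2$ on the right). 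But if at the root you fix $e_r=(0,0,0)$, $e_b=(0,0,1)$, which differ only in feature~$3$, then the only root cut you ever try is on feature~$3$; one checks that any $0$-error tree with that root needs at least $4$ cuts, since the left side $\{(0,0,0),(1,0,0),(1,1,0)\}$ has its lone blue $(1,0,0)$ not isolable by a single axis-parallel cut. Hence with $\ell=3$, $t=0$ your enumeration never produces a feasible tree, and taking the input $T$ to be that $3$-cut tree itself (with $k=0$) gives a yes-instance of \MRais\ that your algorithm rejects. The repair is exactly the witness idea: fix only one endpoint per leaf and let the branching over $t+1$ errors supply the other---that is where the $(t+1)$ factor belongs, not in distributing error budgets among children.
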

%\appendixproof{thm-fpt-ell-delta-D-t}{
\begin{proof}
  %We start by proving this for \Rais by adjusting the witness tree algorithm of \citet{KKSS23A}.
  We prove the theorem by exploiting the witness tree algorithm of \citet[Section~6.3]{KKSS23A}.
  They present a method for enumerating decision trees with at most $\ell$ cuts with at most $t$ errors in $\OO((6 \delta_{\max} D \ell (t + 1))^\ell \cdot \ell n)$ time. Roughly speaking, they utilize the concept of a witness tree where a decision tree is associated with a function that associates one example ending up at each leaf as the \emph{witness} of that leaf. The algorithm starts with a decision tree with only a single leaf node. If there are more than $t$ misclassifications, they arbitrarily pick a subset of $t+1$ errors. 
  Since at least one of them has to be correctly classified, with branching, they obtain an element~$e$ which needs to be correctly classified.
  Let~$v$ be the current leaf of~$e$.
  Now, a new cut is added to the tree that separates $e$ from the witness of $v$. By exploiting the fact that $e$ and the witness of $v$ differ in at most $\delta_{\max}$ features and each feature has at most $D$~thresholds, there are only $\delta\cdot D$~possibilities for the cut. This process is then repeated at most $\ell$ times. For more details, we refer the reader to the original work \cite{KKSS23A}.
  
  For each tree $P$ enumerated by the algorithm, we need to test whether $P$ can be obtained from the input decision tree $T$ by pruning operations. If the roots of $T$ and $P$ are identical, then we keep the root of $T$ and recursively continue to find~$P_{\mathrm{left}}$, the left subtree of $P$ in $T_{\mathrm{left}}$, the left subtree of $T$, and perform this analogously with the right subtree. 
  If the roots are not identical, then we return the logical OR of finding~$P$ in the left subtree or the right subtree of $T$. This adds an additional factor of $\OO(\ell s)$ to the running time, making it $\OO((6 \delta_{\max} D \ell (t + 1))^\ell \cdot \ell^2 n s)$ in total.
  % Idea for \Cut: Step 1: use the witness-tree algo~\cite{KKSS23A} to enumerate all decision trees of size~$\ell$ having at most $t$~errors in $\OO((\ell\cdot D\cdot \delta\cdot (t+1)^s)$~time.
  % Step 2: Let the input tree be $T$ and let $P$ be en enumerated tree. We now have to check whether $P$ can be obtained from $T$ by pruning operations.
  % If the roots of $T$ and $P$ are identical: then we keep the root of $T$ and recursively continue to find $P_{left}$, the left subtree of $P$ in $T_{left}$, the left subtree of $T$, and analog with the right subtree. If the roots are not identical: we return the logical OR of finding $P$ in the left subtree or the right subtree of $T$
  % For \MRais: we enumerate all such decision trees having \emph{at most} $\ell$~cuts.

  {\bf Correctness:} If a solution exists, then there is a decision tree $P$ with (at most) $\ell$ cuts that makes at most $t$ errors. One of the decision trees enumerated by the witness tree algorithm of \citet{KKSS23A} is thus $P$.
  If no solution exists, the enumeration algorithm may still list some decision trees on (at most) $\ell$ cuts that make at most $t$ errors, but none of them can be obtained from the input decision tree $T$ by pruning operations.

  Therefore, we need to show that our algorithm for testing whether $P$ can be obtained from $T$ works correctly. We prove this by induction. The base case of $s = 1$ is trivial. Suppose now the correctness for all $s < s'$; we next show the correctness for $s = s'$.

  If $P$ can be obtained from $T$ by pruning operations, then $T$ has a cut that is equal to the root of $P$. If $T$ does not have a cut equal to the root of $P$, then the algorithm correctly outputs that $P$ cannot be obtained from $T$. Assume now that $T$ has such a cut. If it is the root of $T$, then no other cut of $T$ can equal the root of $P$ because of reasonability.
  Therefore, we cannot prune the root, and the problem reduces to testing whether left (right) subtree of $P$ can be obtained from the left (right) subtree of $T$ by pruning operations. By the induction assumption, the algorithm performs this correctly.

  Now assume instead that $T$ has such a cut but it is not the root of $T$. 
  Then, we need to prune the root of $T$ and consequently also one of its subtrees.
  If $P$ can be obtained from the left subtree, we can prune the right subtree, and vice versa. For both subtrees, the algorithm works correctly by the induction assumption, and thus the logical OR also outputs the correct answer.
\end{proof}
%}

Note that since for \Rais\ we cannot assume that the pruned tree is minimal, we cannot use the algorithm of \Cref{thm-fpt-ell-delta-D-t}.

\section{Hardness Results for Subtree Raising}
\label{sec:rais-hard}

In this subsection we show by complementing hardness results that our algorithmic results from \Cref{sec:rais-alg} cannot be improved substantially without violating standard complexity assumptions.
In this section, by~$I$ we denote the instance of \Rais or \MRais we construct in the reductions.

First, we show that the trivial brute-force algorithm for \Rais cannot be improved significantly.

\begin{theorem}\label{thm-rais-w-hard-k}
  Even if~$\delta_{\max}=2, D=2$, and~$t=0$ both \Rais and \MRais are \W[1]-hard for~$k$ and, unless the ETH is false, they cannot be solved in $f(k)\cdot |I|^{o(k)}$~time.\footnote{The Exponential Time Hypothesis (ETH) states that \textsc{3-SAT} on $n$-variable formulas cannot be solved in $2^{o(n)}$~time, see \citet{impagliazzo_complexity_2001,impagliazzo_which_2001} for details.}
\end{theorem}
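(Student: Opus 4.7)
The plan is to reduce from \textsc{Multicolored Clique}, parameterized by the number $\kappa$ of color classes, which is W[1]-hard and, under ETH, does not admit an algorithm running in $f(\kappa)\cdot|I|^{o(\kappa)}$ time. From a graph $G$ with partition $V(G)=V_1\cup\cdots\cup V_\kappa$ I would construct in polynomial time an instance of \Rais (and, in parallel, of \MRais) in which $k$ is linear in $\kappa$, $\delta_{\max}=2$, $D=2$, and $t=0$. Since the reduction is polynomial and $k=\OO(\kappa)$, an $f(k)\cdot |I|^{o(k)}$ algorithm for \Rais or \MRais would yield an $f(\kappa)\cdot|I|^{o(\kappa)}$ algorithm for \textsc{Multicolored Clique}, contradicting ETH, and W[1]-hardness follows from the same reduction.

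The input tree $T$ would consist of a ``selection spine'' with $\kappa$ selection gadgets attached, one per color class. Each gadget for $V_i$ is built from Boolean cuts so that the only way to reach zero errors is to perform a constant number of raising operations that collapse the gadget into a single ``selected'' branch, encoding the chosen vertex from $V_i$. Using binary features (hence $D=2$) the many possible selections are distinguished not by numeric thresholds but by the \emph{position} of the remaining cut inside the gadget after raising. The total pruning budget is then $k=c\cdot\kappa$ for a small constant $c$, and making each gadget ``rigid'' under $t=0$ requires only a handful of vertex-consistency examples per gadget that become misclassified if the collapse is incomplete.

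To enforce the edge constraint of the clique, I would introduce for each \emph{non-edge} $\{u,v\}$ with $u\in V_i$, $v\in V_j$ a small group of ``conflict examples'' that end up in a wrongly labeled leaf precisely when $u$ is selected in gadget~$i$ \emph{and} $v$ is selected in gadget~$j$. Since $t=0$, any valid $k$-pruning must avoid all non-edges, forcing the $\kappa$ chosen vertices to form a clique; conversely, a $\kappa$-clique directly translates to $k=c\cdot\kappa$ raising operations that collapse each gadget onto a clique vertex and leave zero errors. The \MRais version is handled identically, because the vertex-consistency examples already guarantee that \emph{any} raising beyond the $k$ planned operations strictly increases the error count, so ``at least $k$'' and ``exactly $k$'' coincide for zero-error solutions.

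The main obstacle is satisfying $\delta_{\max}=2$ simultaneously with $D=2$ and $t=0$: misclassified conflict examples must differ in at most two features from their nearest correctly classified counterparts, which rules out encoding vertex identities into many features. I would overcome this by dedicating exactly two features to each pair $(V_i,V_j)$ appearing in a non-edge and by delegating the actual identification of the selected vertices to the tree topology (so the conflict examples only need to toggle those two features to witness the forbidden simultaneous selection). Verifying that this tight feature reuse does not create spurious misclassifications elsewhere and that the gadgets remain reasonable will be the most delicate part of the construction.
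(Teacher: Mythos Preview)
Your proposal has a genuine gap and takes a harder route than necessary. The core difficulty you identify---encoding the choice of one vertex from an arbitrarily large $V_i$ using only a \emph{constant} number of raising operations per gadget while keeping $D=2$ and $\delta_{\max}=2$---is not resolved, and it is not clear it can be. With binary features and a constant pruning budget per gadget, the number of distinguishable post-pruning configurations of a gadget is bounded by a function of that constant, not of $|V_i|$; ``position of the remaining cut'' does not help unless you allow the number of raising operations per gadget to grow with $|V_i|$, which would destroy $k=\OO(\kappa)$. You also leave the enforcement of ``exactly one vertex per color class'' and the $\delta_{\max}=2$ verification for the conflict examples as future work, so the proposal is a plan rather than a proof.

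The paper sidesteps all of this by reducing from $\kappa$-\textsc{Independent Set} instead of \textsc{Multicolored Clique}. The construction is almost trivial: one binary feature per vertex of $G$; a path-shaped tree with one cut per vertex that sends examples with value~$1$ in that feature to a $\lpos$ leaf, ending in a single $\lneg$ leaf; one $\lpos$ example per edge $(v,w)$ with $1$'s exactly in features $v$ and $w$; and a single all-zero $\lneg$ example. Pruning the cut for $v$ is literally ``selecting $v$''; an edge example becomes misclassified iff both its endpoint-cuts are pruned; and the $\lneg$ example (which differs from every edge example in exactly two features, giving $\delta_{\max}=2$ for free) prevents pruning the $\lneg$ leaf. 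Thus $k=\kappa$, $t=0$, $D=2$, and $\delta_{\max}=2$ all fall out immediately, with no gadgets and no per-color-class accounting. A small patch (one extra feature and one extra cut) makes the tree reasonable. The lesson is that Independent Set, where each pruning \emph{is} a selection, matches the structure of raising far more naturally than Multicolored Clique, where you must additionally simulate ``one per class''.
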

\begin{proof}
We prove the statement by a reduction from the W$[1]$-hard \citep{Downey95} problem of $\kappa$-\textsc{Independent Set} which cannot be solved in $f(\kappa)\cdot n^{o(\kappa)}$~time unless the ETH fails~\cite{CyFoKoLoMaPiPiSa2015}. 
The input is a graph~$G$ and an integer~$\kappa$, and the task is to find a set~$S\subseteq V(G)$ of size at least~$\kappa$ such that no edge of~$G$ has both endpoints in~$S$. 
With slight abuse of notation, we let the vertices of $G$ be the (binary) features of the examples.
%For notational convenience, associate the vertices of $G$ bijectively to $[n]$.
We first show the statement for non-reasonable trees.
We construct a decision tree $T$ whose inner nodes relate to the vertices of $G$ in the sense that pruning a subset of inner nodes leads to no misclassifications if and only if the corresponding set of vertices of $G$ is an independent set. Further, we let~$k=\kappa$, and~$t = 0$ in our reduction.

\looseness=-1
We create one $\lpos$ example $e$ for each edge $(v, w)$ of~$G$ such that $e[v] = e[w] = 1$ and otherwise $e[u] = 0$ for~$u \in V(G)$. Additionally, we create one $\lneg$ example~$e$ with $e[v] = 0$ for all $v \in V(G)$. 
The inner nodes of our initial decision tree $T$ form a path such that there is a single cut with respect to each feature $v \in V(G)$ in an arbitrary but fixed order. 
If $e[v] = 1$, then the example is directed to a $\lpos$ leaf, and otherwise passed forward on the path. At the end of the path, there is a $\lneg$ leaf. This is illustrated in the left part of \Cref{fig-lift-hard-generic2}.

Initially, all examples are classified correctly. 
Any subtree including the unique $\lneg$~leaf cannot be pruned, because then the unique $\lneg$~example would be misclassified. 
Consequently, each raising operation removes an inner node and the $\lpos$ leaf attached to it. On the other hand, the example corresponding to an edge $(v, w)$ gets misclassified if the cuts related to vertices $v$ and $w$ are both pruned. Hence, the pruned features correspond an independent set in $G$, and, if exactly/at least $k$ cuts can be pruned, then there exists an independent set of size exactly/at least $k$. 
Since~$k=\kappa$ we obtain the $f(k)\cdot |I|^{o(k)}$~time lower bound if the ETH is true.
\ifjour
\else
We defer adaption for reasonable trees to the Appendix.\fi
%\toappendix{

%\subsection{Adaption to Reasonable Trees for \Cref{thm-rais-w-hard-k}}

It remains to show the statement for reasonable trees:
First, we extend the classification instance.
We add a new binary feature~$d^*$ and we add one new $\lpos$~example~$e_v$ per vertex~$v$.
Example~$e_v$ has value~1 in the feature corresponding to vertex~$v$ and in the new feature~$d^*$; in all other features (corresponding to any other vertex) $e_v$ has value~$0$.
Also, we add a new $\lpos$~example~$e^*$ for which~$e^*[v]=0$ for all~$v\in V(G)$ and~$e^*[d^*]=1$
Furthermore, all existing examples have value~$0$ in~$d^*$.

\begin{figure}[t]
  \centering
    \centering
  \scalebox{0.7}{
  \begin{tikzpicture}%[scale=0.50]
  \def\mydistx{-0.2}
  \def\mydisty{-1}
  \def\myx{-5}
  
  %%%%%%%%%%%%%%%%%%%%%%%%%%%%%%%%%%%%%%%%%%%%%%%%%%%%%%%%%%%%%%%%%%%%%%%
  %%% Visualization of initial tree %%%%%%%%%%%%%%%%%%%%%%%%%%%%%%%%%%%%%
  %%%%%%%%%%%%%%%%%%%%%%%%%%%%%%%%%%%%%%%%%%%%%%%%%%%%%%%%%%%%%%%%%%%%%%%

  \node[](d1<1) at (0*\mydistx +\myx, 0*\mydisty) [shape = rectangle, draw, scale=0.2ex]{$v_1 \le 0$};
  \node[](d1<2) at (1*\mydistx +\myx, 1*\mydisty) [shape = rectangle, draw, scale=0.2ex]{$v_2 \le 0$};
  \node[](d1>1) at (2*\mydistx +\myx, 2*\mydisty) [shape = rectangle, draw, scale=0.2ex]{$v_3 \le 0$};
  \node[](d1>2) at (3*\mydistx +\myx, 3*\mydisty) [shape = rectangle, draw, scale=0.2ex]{$v_4 \le 0$};
  
  \node[](d3>>1) at (4*\mydistx +\myx - 0.2, 4*\mydisty -0.4) [shape = rectangle, draw, scale=0.2ex]{$v_n \le 0$};
  
  \draw [->,very thick](0*\mydistx +\myx, -0.3 + 0*\mydisty) -- (1*\mydistx +\myx, -0.7 + 0*\mydisty);
  \draw [->,very thick](1*\mydistx +\myx, -0.3 + 1*\mydisty) -- (2*\mydistx +\myx, -0.7 + 1*\mydisty);
  \draw [->,very thick](2*\mydistx +\myx, -0.3 + 2*\mydisty) -- (3*\mydistx +\myx, -0.7 + 2*\mydisty);
  \draw [loosely dotted, line width=1mm](3*\mydistx +\myx, -0.3 + 3*\mydisty) -- (4*\mydistx +\myx - 0.2, -0.7 + 3*\mydisty - 0.4);
  \draw [->,very thick](5*\mydistx +\myx, -0.3 + 4*\mydisty -0.4) -- (6*\mydistx +\myx, -0.7 + 4*\mydisty - 0.4);
  
  \draw [->,very thick](0*\mydistx +\myx, -0.3 + 0*\mydisty) -- (-1*\mydistx +\myx +1, -0.7 + 0*\mydisty);
  \draw [->,very thick](1*\mydistx +\myx, -0.3 + 1*\mydisty) -- (0*\mydistx +\myx +1, -0.7 + 1*\mydisty);
  \draw [->,very thick](2*\mydistx +\myx, -0.3 + 2*\mydisty) -- (1*\mydistx +\myx +1, -0.7 + 2*\mydisty);
  \draw [->,very thick](3*\mydistx +\myx, -0.3 + 3*\mydisty) -- (2*\mydistx +\myx +1, -0.7 + 3*\mydisty);
  \draw [->,very thick](5*\mydistx +\myx, -0.3 + 4*\mydisty -0.4) -- (4*\mydistx +\myx +1, -0.7 + 4*\mydisty - 0.4);
  
  \node[](d1<1) at (0*\mydistx +\myx+1.15, 1*\mydisty) {\textcolor{blue}{$\lpos$}};
  \node[](d1<2) at (1*\mydistx +\myx+1.15, 2*\mydisty) {\textcolor{blue}{$\lpos$}};
  \node[](d2<1) at (2*\mydistx +\myx+1.15, 3*\mydisty) {\textcolor{blue}{$\lpos$}};
  \node[](d2>1) at (3*\mydistx +\myx+1.15, 4*\mydisty) {\textcolor{blue}{$\lpos$}};
  \node[](d3>2) at (5*\mydistx +\myx+1.15, 5*\mydisty - 0.4) {\textcolor{blue}{$\lpos$}};
  \node[](d3>2) at (5*\mydistx +\myx-0.2, 5*\mydisty - 0.4) {\textcolor{red}{$\lneg$}};

  %%%%%%%%%%%%%%%%%%%%%%%%%%%%%%%%%%%%%%%%%%%%%%%%%%%%%%%%%%%%%%%%%%%%%%%
  %%% Visualization of reasonable tree %%%%%%%%%%%%%%%%%%%%%%%%%%%%%%%%%%
  %%%%%%%%%%%%%%%%%%%%%%%%%%%%%%%%%%%%%%%%%%%%%%%%%%%%%%%%%%%%%%%%%%%%%%%
    
  \node[](d1<1) at (0*\mydistx, 0*\mydisty) [shape = rectangle, draw, scale=0.2ex]{$v_1 \le 0$};
  \node[](d1<2) at (1*\mydistx, 1*\mydisty) [shape = rectangle, draw, scale=0.2ex]{$v_2 \le 0$};
  \node[](d1>1) at (2*\mydistx, 2*\mydisty) [shape = rectangle, draw, scale=0.2ex]{$v_3 \le 0$};
  \node[](d1>2) at (3*\mydistx, 3*\mydisty) [shape = rectangle, draw, scale=0.2ex]{$v_4 \le 0$};
  
  \node[](d3>>1) at (4*\mydistx - 0.2, 4*\mydisty -0.4) [shape = rectangle, draw, scale=0.2ex]{$v_n \le 0$};
  \node[](d3>>1) at (5*\mydistx - 0.2, 5*\mydisty -0.4) [shape = rectangle, draw, scale=0.2ex]{$d^* \le 0$};
  
  \draw [->,very thick](0*\mydistx, -0.3 + 0*\mydisty) -- (1*\mydistx, -0.7 + 0*\mydisty);
  \draw [->,very thick](1*\mydistx, -0.3 + 1*\mydisty) -- (2*\mydistx, -0.7 + 1*\mydisty);
  \draw [->,very thick](2*\mydistx, -0.3 + 2*\mydisty) -- (3*\mydistx, -0.7 + 2*\mydisty);
  \draw [loosely dotted, line width=1mm](3*\mydistx, -0.3 + 3*\mydisty) -- (4*\mydistx - 0.2, -0.7 + 3*\mydisty - 0.4);
  \draw [->,very thick](5*\mydistx, -0.3 + 4*\mydisty -0.4) -- (6*\mydistx, -0.7 + 4*\mydisty - 0.4);
  \draw [->,very thick](6*\mydistx, -0.3 + 5*\mydisty -0.4) -- (7*\mydistx, -0.7 + 5*\mydisty - 0.4);
  
  \draw [->,very thick](0*\mydistx, -0.3 + 0*\mydisty) -- (-1*\mydistx +1, -0.7 + 0*\mydisty);
  \draw [->,very thick](1*\mydistx, -0.3 + 1*\mydisty) -- (0*\mydistx +1, -0.7 + 1*\mydisty);
  \draw [->,very thick](2*\mydistx, -0.3 + 2*\mydisty) -- (1*\mydistx +1, -0.7 + 2*\mydisty);
  \draw [->,very thick](3*\mydistx, -0.3 + 3*\mydisty) -- (2*\mydistx +1, -0.7 + 3*\mydisty);
  \draw [->,very thick](5*\mydistx, -0.3 + 4*\mydisty -0.4) -- (4*\mydistx +1, -0.7 + 4*\mydisty - 0.4);
  \draw [->,very thick](6*\mydistx, -0.3 + 5*\mydisty -0.4) -- (5*\mydistx +1, -0.7 + 5*\mydisty - 0.4);
  
  \node[](d1<1) at (0*\mydistx+1.15, 1*\mydisty) {\textcolor{blue}{$\lpos$}};
  \node[](d1<2) at (1*\mydistx+1.15, 2*\mydisty) {\textcolor{blue}{$\lpos$}};
  \node[](d2<1) at (2*\mydistx+1.15, 3*\mydisty) {\textcolor{blue}{$\lpos$}};
  \node[](d2>1) at (3*\mydistx+1.15, 4*\mydisty) {\textcolor{blue}{$\lpos$}};
  \node[](d3>2) at (5*\mydistx+1.15, 5*\mydisty - 0.4) {\textcolor{blue}{$\lpos$}};
  \node[](d3>2) at (6*\mydistx+1.15, 6*\mydisty - 0.4) {\textcolor{blue}{$\lpos$}};
  \node[](d3>2) at (6*\mydistx-0.2, 6*\mydisty - 0.4) {\textcolor{red}{$\lneg$}};
  \end{tikzpicture}
  }
  \caption{
  Left: The initial decision tree used for \Cref{thm-rais-w-hard-k,thm-rais-w-hard-l,thm-rais-w-2-hard-l,thm-rais-w-hard-k-zero}.
  Right: The initial reasonable tree used for \Cref{thm-rais-w-hard-k,thm-rais-w-hard-l,thm-rais-w-2-hard-l,thm-rais-w-hard-k-zero}.
  }
  \label{fig-lift-hard-generic2}
\end{figure}
 
Second, we extend the tree~$T$, that is, we add a new cut~$d^*\le 0$ on the edge leading to the unique $\lneg$~leaf.
More precisely, the left child of this node is the unique $\lneg$~leaf and the right child is a $\lpos$~leaf, see the right part of \Cref{fig-lift-hard-generic2}.
Note that we still have~$\delta_{\max}=2$, $D=2$, and~$t=0$.

For the correctness, observe that the new cut~$d^*\le 0$ cannot be pruned because of the new $\lpos$~example~$e^*$.
The remaining correctness proof is completely analog.
%}
\end{proof}

\ifjour
The proof of \Cref{thm-rais-w-hard-k} also works \Raisc, \MRaisc, \Raisct, and \MRaisct:
we only need to set~$c=k$ (or~$c_T=k$) and this yields the claimed result.

\begin{corollary}\label{cor-rais-w-hard-k}

  Even if~$\delta_{\max}=2, D=2$, and~$t=0$ (a) \Raisc, \MRaisc are \W[1]-hard for~$k+c$ and, unless the ETH is false, an algorithm with running time~$f(k+c)\cdot |I|^{o(k+c)}$ is not possible, and (b) \Raisct, \MRaisct are \W[1]-hard for~$k+c_T$ and, unless the ETH is false, an algorithm with running time~$f(k+c_T)\cdot |I|^{o(k+c_T)}$ is not possible
\end{corollary}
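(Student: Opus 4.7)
The plan is to reuse the reduction from \Cref{thm-rais-w-hard-k} verbatim and argue that the produced instance automatically satisfies the additional per-path constraints when $c$ (respectively $c_T$) equals $k$. First I would recall the structure of the tree $T$ built in that reduction: the inner nodes form a single root-to-leaf path $v_1, v_2, \ldots, v_n$, where each $v_i$ is a cut $v_i \le 0$ and a $\lpos$-leaf hangs off its right child (with the final cut $d^* \le 0$ and its $\lneg$-leaf at the bottom in the reasonable version). Hence every feature occurs on at most one node along any root-to-leaf path of $T$, and consequently $c_T \le c$ agrees with the total number of raising operations performed.

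Next I would plug $c = k$ (respectively $c_T = k$) into the reduction. Any sequence of $k$ raising operations in $T$ affects exactly $k$ nodes of the single inner path, so it trivially satisfies the constraint that at most $c$ (respectively $c_T$) operations are performed per root-to-leaf path. Thus a YES-instance of $\kappa$-\probname{Independent Set} maps to a YES-instance of \Raisc\ and \MRaisc\ (respectively \Raisct\ and \MRaisct) with $c = k = \kappa$ (respectively $c_T = k = \kappa$), and conversely any solution to the pruning instance yields an independent set of size~$\kappa$ exactly as in \Cref{thm-rais-w-hard-k}. The bounds $\delta_{\max} = 2$, $D = 2$, and $t = 0$ are inherited without change.

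Finally I would transfer the complexity conclusions. Under the identification $c = k$ (respectively $c_T = k$) the combined parameter $k + c$ (respectively $k + c_T$) equals $2k$, so W[1]-hardness in $k$ lifts to W[1]-hardness in $k + c$ (respectively $k + c_T$), and an algorithm running in $f(k+c)\cdot |I|^{o(k+c)}$ time (respectively $f(k+c_T)\cdot |I|^{o(k+c_T)}$) would yield an algorithm running in $f'(k)\cdot |I|^{o(k)}$ time for $\kappa$-\probname{Independent Set}, contradicting ETH by \Cref{thm-rais-w-hard-k}.

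There is no substantive obstacle here; the only thing to verify is the structural observation that the reduction's tree is a single path of cuts, which directly bounds the number of operations along any root-to-leaf path by the total number of operations. I would therefore keep the proof to a short remark that recalls the path structure and invokes $c = k$ (respectively $c_T = k$) to conclude.
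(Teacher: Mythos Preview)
Your proposal is correct and essentially identical to the paper's approach: the paper's entire argument is the single sentence that the proof of \Cref{thm-rais-w-hard-k} works verbatim once one sets $c=k$ (or $c_T=k$), and you have spelled out precisely why this is so by noting the path structure of the constructed tree. Your write-up is more detailed than the paper's one-line remark, but the underlying idea is the same.
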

\fi

  Now, we show, by adapting the proof of \Cref{thm-rais-w-hard-k}, that also the simple $\OO(s^\ell)$~time brute-force algorithms for \Rais and \MRais cannot be improved significantly.

\begin{theorem}%[\appref{thm-rais-w-hard-l}]
\label{thm-rais-w-hard-l}
 Even if~$\delta_{\max}=2$ and~$D=2$, both \Rais and \MRais are \W[1]-hard for~$\ell$ and, unless the ETH is false, they cannot be solved in $\OO(f(\ell)\cdot |I|^{o(\ell)})$~time, even if~$\delta_{\max}=2$, and~$D=2$.
\end{theorem}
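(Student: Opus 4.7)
The plan is to adapt the reduction from \Cref{thm-rais-w-hard-k} so that the informative part of the solution lies in the \emph{kept} cuts of the pruned tree (of size $\ell$) rather than the \emph{pruned} cuts (of size $k$). A natural source problem is $\kappa$-Multicolored Clique (MCC), which is W[1]-hard and, assuming ETH, admits no $f(\kappa)\cdot n^{o(\kappa)}$-time algorithm. The target is a reduction producing an instance of \Rais or \MRais with $\ell = \OO(\kappa)$; the claimed W[1]-hardness and the $f(\ell)\cdot |I|^{o(\ell)}$ ETH lower bound then follow directly. Note that the naive reinterpretation of the reduction in \Cref{thm-rais-w-hard-k} does not suffice, because there $\ell = n - \kappa$ is linear in the input size and gives no meaningful ETH bound in $\ell$.

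The construction mirrors the path structure of \Cref{thm-rais-w-hard-k} but organises the cuts into $\kappa$ consecutive \emph{selector blocks}, one per color class $V_i$ of the MCC instance. Block~$i$ is a subpath of $|V_i|$ cuts, one per vertex $v \in V_i$, each on a distinct binary feature $f_v$; these blocks are chained sequentially, a $\lneg$ leaf is attached at the end, and an additional $d^*\le 0$ cut is inserted for reasonability exactly as in \Cref{thm-rais-w-hard-k}. Setting $k = s - \kappa$ forces $\ell = \kappa$, so a valid pruning must retain exactly one cut per block, and the kept cuts encode a choice of one vertex per color class.

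The remaining constraints are enforced by three families of $\lpos$-labeled examples, in the spirit of the edge examples in \Cref{thm-rais-w-hard-k}: (i) examples forcing at least one cut per block to be kept; (ii) examples forcing at most one cut per block to be kept; and (iii) examples, one per non-edge between vertices of distinct color classes, forcing any two kept cuts in distinct blocks to correspond to adjacent vertices in $G$. With an appropriate error bound $t$, a valid pruning exists iff $G$ admits a multicolored $\kappa$-clique. Throughout, $\delta_{\max}$ and $D$ are kept equal to $2$ by ensuring that any two differently-labeled examples agree in all but two features, exactly as in the original edge-example construction.

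The main obstacle is realising the ``both indicated cuts kept $\Rightarrow$ misclassification'' condition required by families (ii) and (iii): the path in \Cref{thm-rais-w-hard-k} naturally encodes only the dual condition ``both cuts pruned $\Rightarrow$ error''. Inverting this while preserving $\delta_{\max} = D = 2$ will likely require a small local modification of the path skeleton — for instance, inserting an auxiliary short sub-path above each cut so that a designated example reaches the $\lneg$ leaf precisely when a specific pair of cuts survives pruning. Once this routing is worked out, the reduction transfers to both \Rais and \MRais: the constructed instance has a unique natural target pruning of size exactly $s - \kappa$, so the at-least-$k$ and exactly-$k$ variants coincide, and $|I| = \text{poly}(n)$ so $\ell = \OO(\kappa)$ yields both W[1]-hardness and the stated $f(\ell) \cdot |I|^{o(\ell)}$ lower bound.
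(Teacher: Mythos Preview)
Your proposal has a genuine gap at exactly the point you flag as the ``main obstacle''. In the path tree of \Cref{thm-rais-w-hard-k}, a $\lpos$ example with ones in features $f_u$ and $f_v$ is misclassified precisely when \emph{both} cuts $u$ and $v$ are pruned; dually, a $\lneg$ example with a one in $f_u$ is misclassified whenever cut $u$ is kept. Neither primitive encodes the conjunction ``cut $u$ kept \emph{and} cut $v$ kept'' that you need for families~(ii) and~(iii). You suggest inserting auxiliary sub-paths to reroute examples, but you do not specify how, and any such gadget faces a real difficulty: to detect that two \emph{specific} cuts in distinct blocks both survive, the tree must route the example through both, which forces a branching structure (cut $v$ below the $e[f_u]=1$ branch of cut $u$) rather than a path. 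With raising as the only operation, fixing such pairwise structure for all $\Theta(n^2)$ non-edges while keeping $\delta_{\max}=D=2$ and $\ell=\OO(\kappa)$ is far from a ``small local modification''; as stated, the reduction is incomplete.

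The paper sidesteps the obstacle entirely by choosing a different source problem. Observe that in the tree of \Cref{thm-rais-w-hard-k}, an edge example $e(v,w)$ is \emph{correctly} classified iff at least one of the cuts $v,w$ is \emph{kept}. Hence the set of kept vertex-cuts acts as a vertex cover for the correctly classified edges. Reducing from $\kappa$-\textsc{Partial Vertex Cover} (find $\kappa$ vertices covering at least $t'$ edges), setting $\ell=\kappa$ and $t=|E(G)|-t'$, and duplicating the $\lneg$ example (and the reasonability example $e^*$) $t+1$ times to block pruning the $\lneg$ leaf gives the result with \emph{no} structural change to the tree. This preserves $\delta_{\max}=D=2$ trivially and yields the ETH bound since $\ell=\kappa$. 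The moral: rather than inverting the tree's natural ``both pruned $\Rightarrow$ error'' semantics, pick a source problem (\textsc{Partial Vertex Cover} or, for $t=0$, \textsc{Hitting Set}) whose covering semantics matches ``at least one kept $\Rightarrow$ correct''.
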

%\appendixproof{thm-rais-w-hard-l}{
\begin{proof}
We use the same initial reasonable decision tree as in \Cref{thm-rais-w-hard-k}, but instead reduce from $\kappa$-\textsc{Partial Vertex Cover} \citep{Guo07}, where we look for a subset of $\kappa$ vertices that covers at least $t'$ edges of~$G$. 
Unless the ETH fails, $\kappa$-\textsc{Partial Vertex Cover} cannot be solved in $f(\kappa)\cdot n^{o(\kappa)}$~time~\cite{CyFoKoLoMaPiPiSa2015}. 
We let~$\ell=\kappa$, $t = |E(G)| - t'$ and create the examples for edges as before.
 However, we copy the $\lneg$ example $t + 1$ times to prevent pruning the $\lneg$ leaf. 
 Now, if the remaining cuts after pruning correctly classify at least $t'$ $\lpos$ examples, then at most $t$ $\lpos$ examples are misclassified.
 Additionally, we copy the example~$e^*$ also $t+1$~times, to avoid pruning the newly introduced cut~$d^*\le 0$ to make the tree reasonable.
 Now, the ETH bound follows since~$\ell=\kappa$.
\end{proof}
%}

\begin{theorem}%[\appref{thm-rais-w-2-hard-l}]
  \label{thm-rais-w-2-hard-l}
   Even if~$t=0$ and~$D=2$, both \Rais and \MRais are \W[2]-hard for~$\ell$ and, unless the ETH is false, they cannot be solved in $\OO(f(\ell)\cdot |I|^{o(\ell)})$~time, even if~$t=0$ and~$D=2$.
  \end{theorem}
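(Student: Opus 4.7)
The plan is to reduce from $\kappa$-\textsc{Dominating Set}, which is W[2]-hard and, unless the ETH fails, cannot be solved in $f(\kappa)\cdot n^{o(\kappa)}$ time. Given $G$ with $V(G)=\{v_1,\ldots,v_n\}$, I would produce a \Rais/\MRais instance whose retained cuts after pruning correspond exactly to a dominating set of $G$ of size $\kappa$, and set $\ell=\kappa$, $t=0$.

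I would re-use the path skeleton of \cref{thm-rais-w-hard-k,thm-rais-w-hard-l}: one binary feature $d_{v_i}$ per vertex $v_i$, a single root-to-leaf path with cuts $d_{v_1}\le 0,d_{v_2}\le 0,\ldots,d_{v_n}\le 0$, each cut's right child a \lpos-leaf, and the bottommost left child a \lneg-leaf. For each vertex $v\in V(G)$ introduce a \lpos\ example $e_v$ with $e_v[d_u]=1$ iff $u\in N[v]$ (closed neighborhood), and $0$ otherwise; also take one \lneg\ example with value $0$ in every $d_{v_i}$. Then $e_v$ is sent right at the first remaining cut $d_u$ with $u\in N[v]$ (if any) and reaches the bottom \lneg-leaf otherwise, so with $t=0$ correctness of all $e_v$ is equivalent to the set of retained cuts forming a dominating set of $G$. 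Since raising with the right child anywhere on the path misclassifies the \lneg\ example (after taking enough copies), every admissible sequence of raisings merely deletes a subset of path cuts, matching the dominating-set choice.

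To enforce reasonableness I would mimic the $d^*$-trick at the end of the proof of \cref{thm-rais-w-hard-k}: add a binary feature $d^*$ and an extra cut $d^*\le 0$ immediately above the \lneg-leaf, whose left child is the \lneg-leaf and right child a new \lpos-leaf. Introduce an auxiliary \lpos\ example $f_v$ for every $v$, with $f_v[d_v]=1=f_v[d^*]$ and $0$ elsewhere, populating the \lpos-leaf attached to cut $d_v$; since $f_v$ still turns right at the $d^*$-cut after cut $d_v$ is pruned, it stays correctly classified irrespective of which path cuts are removed. Finally add an example $e^*$ with $e^*[d^*]=1$ and zeroes elsewhere (populating the new \lpos-leaf) and enough duplicates of the \lneg\ example to forbid pruning the $d^*$-cut.

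The main obstacle is exactly the calibration of these fillers: one must ensure that the $f_v$, $e^*$ and the \lneg-duplicates leave every leaf non-empty and majority-colored in the input tree, do not collapse trivially into a full pruning, and never obstruct the retention of cuts outside a dominating set. The $d^*$-gadget handles all three concerns since (a) it is the only leaf each $f_v$ can still be classified at once $d_v$ is pruned, (b) it cannot itself be raised without misclassifying the \lneg-duplicates or $e^*$, and (c) it depends only on the new feature $d^*$ and so does not alter the values of examples on the vertex features. With $\ell=\kappa$ and $k=n-\kappa$, the equivalence to $\kappa$-\textsc{Dominating Set} is immediate, $D=2$ and $t=0$ are preserved, and the ETH lower bound of $f(\kappa)\cdot n^{o(\kappa)}$ transfers verbatim to the bound $f(\ell)\cdot |I|^{o(\ell)}$ for both \Rais\ and \MRais.
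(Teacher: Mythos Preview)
Your proposal is correct and takes essentially the same approach as the paper, which reduces from $\kappa$-\textsc{Hitting Set} rather than your $\kappa$-\textsc{Dominating Set} but otherwise uses the identical path-shaped tree from \cref{thm-rais-w-hard-k} and encodes each set to be hit as the support of a \lpos\ example. One bookkeeping slip: once the $d^*$-cut is added the tree has $n+1$ inner nodes, so you should take $\ell=\kappa+1$ rather than $\ell=\kappa$ (and with $t=0$ a single \lneg\ example and a single $e^*$ already suffice, so the duplications you mention are unnecessary).
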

 % \appendixproof{thm-rais-w-2-hard-l}{
  \begin{proof}
  We use the same initial reasonable decision tree as in \Cref{thm-rais-w-hard-k}, but instead reduce from the \W[2]-hard problem $\kappa$-\textsc{Hitting Set} \citep{CyFoKoLoMaPiPiSa2015}, where we look for a subset of $\kappa$ elements from the universe $\mathcal{U}$ that intersects with all subsets given in the input. 
  Unless the ETH fails, $\kappa$-\textsc{Hitting Set} cannot be solved in $f(\kappa)\cdot n^{o(\kappa)}$~time~\cite{CyFoKoLoMaPiPiSa2015}. 
  We let~$\ell=\kappa$ and $t = 0$. For the examples, we create one $\lpos$ example $e$ for each subset $S \subseteq \mathcal{U}$ in the input such that $e[u] = 1$ if $u \in S$ and otherwise $e[u] = 0$ for all~$u \in \mathcal{U}$. Additionally, we create one $\lneg$ example~$e$ with $e[u] = 0$ for all $u \in \mathcal{U}$.
  Now, the ETH bound follows since~$\ell=\kappa$.
  \end{proof}
%}

\ifarxiv
Note that the strong exponential time hypothesis---a variant of ETH---gives a stronger lower bound that \Rais and \MRais cannot be solved in $\OO(|I|^{\ell - \epsilon})$ time for any $\epsilon > 0$ \citep{Patrascu10}.
\fi

Next, we show that \MRais is substantially harder than \Rais with respect to~$k$:
A similar $\OO(n^k)$~time brute-force algorithm for \MRais implies P$=$NP.
This result is based on the observation that the number of errors does not increase monotone if more raising operations are performed.

\begin{theorem}%[\appref{thm-rais-k-non-monotone}]
\label{thm-rais-k-non-monotone}
For every positive integer~$k$ there is a training data set~$(E,\lambda)$ and an initial decision tree~$T$ with zero errors such that only performing $k$~raising operations leads to a tree without errors and performing $j$~raising operation for any~$1\le j< k$ leads to at least one error.
\end{theorem}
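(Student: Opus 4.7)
The plan is to provide an explicit construction for each $k$. For the trivial base case $k=1$, I would take $T$ to be a single cut both whose children are red leaves, together with one red example on each side of the cut. Then $T$ has zero errors, the single raising operation replaces $T$ by a red leaf which still correctly classifies both examples, and the constraint on $j$ with $1 \le j < 1$ is vacuous.

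For $k \ge 2$, I would construct a caterpillar-shaped tree with $k$ chain cuts on distinct features $a_1, \ldots, a_k$ terminating in a base cut on an additional feature $b$. Each chain cut $a_i \le 0$ would have a leaf $\ell_i$ as its left child and either the next chain cut (for $i < k$) or the base cut (for $i = k$) as its right child; the base cut $b \le 0$ would have two leaves colored red and blue. The chain leaf colors are assigned in an alternating pattern, and one representative example is taken per leaf. The feature coordinates of the examples would be designed so that: (i) each example is routed to its intended leaf in $T$ and classified correctly; (ii) the specific sequence of $k$ raising operations that removes each chain cut while keeping its right-child subtree collapses $T$ down to the base cut, which then classifies every example correctly by matching the example's $b$-value with its label; and (iii) every sequence of $j$ raising operations with $1 \le j < k$ leaves at least one example routed to a leaf of the wrong color.

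Verifying (iii) is the main technical step, and I would do it by case analysis on the subset of cuts removed. The key point is that the alternating leaf colors along the chain ensure that any proper subset of chain collapses routes at least one example across a ``parity boundary'' and hence into a wrong-colored leaf.

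The main obstacle will be to preclude ``short-circuit'' sequences---most notably the single raising of the lowest chain cut, which naively might reroute the affected example correctly into the base subtree. I would address this by including additional examples near the bottom of the chain or adding auxiliary cuts designed to guarantee that every single raising introduces at least one routing conflict; the role of these auxiliary objects is precisely to block every $j$-raising shortcut for $j < k$ without preventing the intended $k$-raising from yielding zero errors. The full correctness of the construction would then follow from the case analysis together with the straightforward checks that $T$ is reasonable and that the $k$-raising target classifies every example correctly.
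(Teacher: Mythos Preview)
Your caterpillar-plus-base construction has a concrete gap exactly where you flag it, and the sketched fix does not close it. For the intended $k$-raise sequence (collapse all chain cuts, leaving only the base cut on~$b$) to yield zero errors, every example---in particular the representative $e_k$ at the lowest chain leaf $\ell_k$---must be correctly classified by the base cut alone. But then the \emph{single} raising that removes $a_k$ and $\ell_k$ while keeping the base subtree reroutes only $e_k$, which $b$ classifies correctly, and leaves all other examples untouched; so $j=1$ already admits a zero-error pruning. The alternating leaf colours do not help here, because the rerouted example bypasses all remaining chain leaves and lands directly in the base subtree, so no parity boundary is crossed. Your proposed patch of adding extra examples at $\ell_k$ faces an obstruction: any example at $\ell_k$ that the base cut would misclassify after the single raise of $a_k$ is equally misclassified after the full $k$-collapse, so property~(ii) breaks. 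Adding auxiliary cuts changes the inner-node count and hence the value of~$k$; you give no concrete mechanism, and the same tension reappears at whichever cut sits immediately above the new zero-error target.

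The paper sidesteps this with a structurally different idea. It places two identical caterpillars on features $d_2,\ldots,d_k$ as the two children of a root cut on a fresh feature~$d_1$, with the example set duplicated accordingly. Removing the root together with one whole caterpillar costs exactly $k$ inner nodes and leaves the other caterpillar intact, which still classifies every example correctly. Any sequence of $j<k$ removals cannot eliminate the root (the root plus either child subtree already totals $k$ inner nodes), so it must act inside the caterpillars; and each caterpillar is rigid in that removing any cut from it forces either the sole \lneg\ example or one of the \lpos\ examples into a wrong-coloured leaf. The duplication under a fresh root is what produces the all-or-nothing threshold at~$k$ that a single chain terminating in a correct classifier cannot provide.
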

%\appendixproof{thm-rais-k-non-monotone}{
\begin{proof}
The training data set~$(E,\lambda)$ is shown in part~$a)$ of \Cref{fig-lifting-gap}. 
More precisely, $(E,\lambda)$ consists of exactly $k$~features and has exactly two $\lneg$~examples and $2\cdot (k-1)$ $\lpos$~examples.
The first $\lneg$~example has value~0 is each feature and the second $\lneg$~example has value~0 in each features, except feature~$d_1$ where it has value~1.
Furthermore, for each~$j\in[2,k]$ we have a $\lpos$~example with value~0 in each feature except feature~$d_j$ where the example has value~1 and another $\lpos$~example with value~0 in each feature except features~$d_1$ and~$d_j$ where the example has value~1.
The initial decision tree is shown in part~$b)$ of \Cref{fig-lifting-gap}.
Note that this tree is reasonable.

\begin{figure*}[t]
  %%%%%%%%%%%%%%%%%%%%%%%%%%%%%%%%%%%%%%%%%%%%%%%%%%%%%%%%%%%%%%%%%%%%%%%
  %%% Visualization of initial tree %%%%%%%%%%%%%%%%%%%%%%%%%%%%%%%%%%%%%
  %%%%%%%%%%%%%%%%%%%%%%%%%%%%%%%%%%%%%%%%%%%%%%%%%%%%%%%%%%%%%%%%%%%%%%%
  \centering
  \begin{minipage}{0.40\textwidth}
    \scalebox{0.8}{
    $a)$
    }

    \scalebox{0.8}{
    \textcolor{white}{t}
    }

    \scalebox{0.8}{
    \centering
    \begin{tabular}{cccccc}
      \toprule
      label & $d_1$ & $d_2$ & $d_3$ & \dots & $d_k$ \\
      \midrule
      \textcolor{blue}{$\lpos$} & 0 & 1 & 0 & $\dots$ & 0 \\
      \textcolor{blue}{$\lpos$} & 0 & 0 & 1 & $\dots$ & 0 \\
      \multicolumn{6}{c}{$\vdots$}\\
      \textcolor{blue}{$\lpos$} & 0 & 0 & 0 & $\dots$ & 1 \\
      \textcolor{red}{$\lneg$} & 0 & 0 & 0 & $\dots$ & 0 \\
      \multicolumn{6}{c}{$\vdots$}\\
      \textcolor{blue}{$\lpos$} & 1 & 1 & 0 & $\dots$ & 0 \\
      \textcolor{blue}{$\lpos$} & 1 & 0 & 1 & $\dots$ & 0 \\
      \multicolumn{6}{c}{$\vdots$}\\
      \textcolor{blue}{$\lpos$} & 1 & 0 & 0 & $\dots$ & 1 \\
      \textcolor{red}{$\lneg$} & 1 & 0 & 0 & $\dots$ & 0 \\
      \bottomrule
    \end{tabular}}
  \end{minipage}
  \begin{minipage}{0.40\textwidth}
    \scalebox{0.8}{
    $b)$
    }
    
    \scalebox{0.8}{
    \textcolor{white}{t}
    }

  \centering
  \scalebox{0.8}{
  \begin{tikzpicture}%[scale=0.50]
  \def\mydistx{0.2}
  \def\mydisty{-1}
  
  \node[](d1<1) at (0*\mydistx, 0*\mydisty) [shape = rectangle, draw, scale=0.2ex]{$d_1 \ge 1$};
  \node[](d2<1) at (9*\mydistx, 1*\mydisty) [shape = rectangle, draw, scale=0.2ex]{$d_2 \ge 1$};
  \node[](d3<2) at (10*\mydistx, 2*\mydisty) [shape = rectangle, draw, scale=0.2ex]{$d_3 \ge 1$};
  \node[](d4>1) at (11*\mydistx, 3*\mydisty) [shape = rectangle, draw, scale=0.2ex]{$d_4 \ge 1$};
  
  \node[](d3>>1) at (12*\mydistx + 0.2, 4*\mydisty -0.4) [shape = rectangle, draw, scale=0.2ex]{$d_k \ge 1$};
  
  \draw [->,very thick](0*\mydistx, -0.3 + 0*\mydisty) -- (9*\mydistx, -0.7 + 0*\mydisty);
  \draw [->,very thick](9*\mydistx, -0.3 + 1*\mydisty) -- (10*\mydistx, -0.7 + 1*\mydisty);
  \draw [->,very thick](10*\mydistx, -0.3 + 2*\mydisty) -- (11*\mydistx, -0.7 + 2*\mydisty);
  \draw [loosely dotted, line width=1mm](11*\mydistx, -0.3 + 3*\mydisty) -- (12*\mydistx + 0.2, -0.7 + 3*\mydisty - 0.4);
  \draw [->,very thick](13*\mydistx, -0.3 + 4*\mydisty -0.4) -- (14*\mydistx, -0.7 + 4*\mydisty - 0.4);
  
  \draw [->,very thick](9*\mydistx, -0.3 + 1*\mydisty) -- (8*\mydistx -1, -0.7 + 1*\mydisty);
  \draw [->,very thick](10*\mydistx, -0.3 + 2*\mydisty) -- (9*\mydistx -1, -0.7 + 2*\mydisty);
  \draw [->,very thick](11*\mydistx, -0.3 + 3*\mydisty) -- (10*\mydistx -1, -0.7 + 3*\mydisty);
  \draw [->,very thick](12*\mydistx, -0.3 + 4*\mydisty -0.4) -- (11*\mydistx -1, -0.7 + 4*\mydisty - 0.4);
  
  \node[](d1<2) at (9*\mydistx-1.15, 2*\mydisty) {\textcolor{blue}{$\lpos$}};
  \node[](d2<1) at (10*\mydistx-1.15, 3*\mydisty) {\textcolor{blue}{$\lpos$}};
  \node[](d2>1) at (11*\mydistx-1.15, 4*\mydisty) {\textcolor{blue}{$\lpos$}};
  \node[](d3>2) at (12*\mydistx-1.15, 5*\mydisty - 0.4) {\textcolor{blue}{$\lpos$}};
  \node[](d3>2) at (13*\mydistx+0.2, 5*\mydisty - 0.4) {\textcolor{red}{$\lneg$}};

  \node[](d2<1) at (9*\mydistx-18*\mydistx, 1*\mydisty) [shape = rectangle, draw, scale=0.2ex]{$d_2 \ge 1$};
  \node[](d3<2) at (10*\mydistx-18*\mydistx, 2*\mydisty) [shape = rectangle, draw, scale=0.2ex]{$d_3 \ge 1$};
  \node[](d4>1) at (11*\mydistx-18*\mydistx, 3*\mydisty) [shape = rectangle, draw, scale=0.2ex]{$d_4 \ge 1$};
  
  \node[](d3>>1) at (12*\mydistx-18*\mydistx + 0.2, 4*\mydisty -0.4) [shape = rectangle, draw, scale=0.2ex]{$d_k \ge 1$};
  
  \draw [->,very thick](9*\mydistx-18*\mydistx, -0.3 + 1*\mydisty) -- (10*\mydistx-18*\mydistx, -0.7 + 1*\mydisty);
  \draw [->,very thick](10*\mydistx-18*\mydistx, -0.3 + 2*\mydisty) -- (11*\mydistx-18*\mydistx, -0.7 + 2*\mydisty);
  \draw [loosely dotted, line width=1mm](11*\mydistx-18*\mydistx, -0.3 + 3*\mydisty) -- (12*\mydistx-18*\mydistx + 0.2, -0.7 + 3*\mydisty - 0.4);
  \draw [->,very thick](13*\mydistx-18*\mydistx, -0.3 + 4*\mydisty -0.4) -- (14*\mydistx-18*\mydistx, -0.7 + 4*\mydisty - 0.4);
  
  \draw [->,very thick](9*\mydistx-18*\mydistx, -0.3 + 1*\mydisty) -- (8*\mydistx -1-18*\mydistx, -0.7 + 1*\mydisty);
  \draw [->,very thick](10*\mydistx-18*\mydistx, -0.3 + 2*\mydisty) -- (9*\mydistx -1-18*\mydistx, -0.7 + 2*\mydisty);
  \draw [->,very thick](11*\mydistx-18*\mydistx, -0.3 + 3*\mydisty) -- (10*\mydistx -1-18*\mydistx, -0.7 + 3*\mydisty);
  \draw [->,very thick](12*\mydistx-18*\mydistx, -0.3 + 4*\mydisty -0.4) -- (11*\mydistx -1-18*\mydistx, -0.7 + 4*\mydisty - 0.4);
  
  \node[](d1<2) at (9*\mydistx-1.15-18*\mydistx, 2*\mydisty) {\textcolor{blue}{$\lpos$}};
  \node[](d2<1) at (10*\mydistx-1.15-18*\mydistx, 3*\mydisty) {\textcolor{blue}{$\lpos$}};
  \node[](d2>1) at (11*\mydistx-1.15-18*\mydistx, 4*\mydisty) {\textcolor{blue}{$\lpos$}};
  \node[](d3>2) at (12*\mydistx-1.15-18*\mydistx, 5*\mydisty - 0.4) {\textcolor{blue}{$\lpos$}};
  \node[](d3>2) at (13*\mydistx+0.2-18*\mydistx, 5*\mydisty - 0.4) {\textcolor{red}{$\lneg$}};
  \draw [->,very thick](-0*\mydistx, -0.3 + 0*\mydisty) -- (-9*\mydistx, -0.7 + 0*\mydisty);
  \end{tikzpicture}
  }
  \end{minipage}
  \caption{
  An instance for which pruning fewer than $k$ cuts leads to misclassifications, with examples described by~$a)$ and the initial decision tree by~$b)$.}
  \label{fig-lifting-gap}
  \end{figure*}
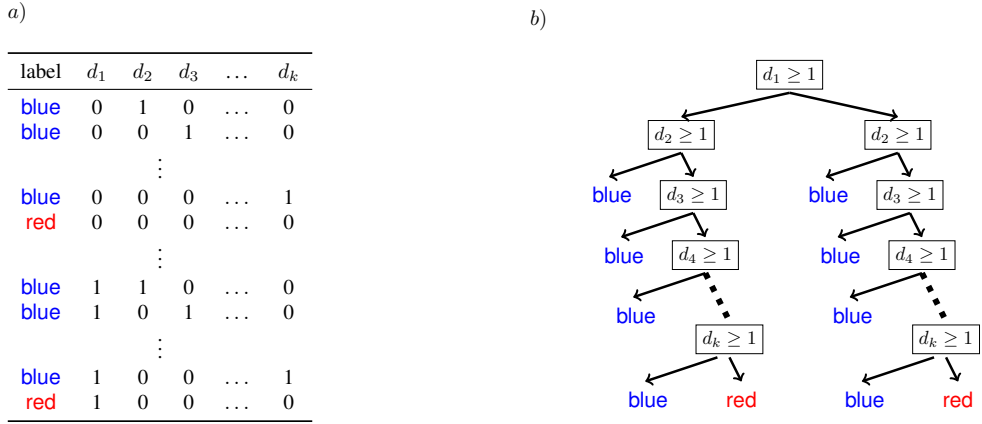

On the one hand, if we perform exactly $k$~raising operations, we can prune the root of~$T$ and wither its entire left or right subtree.
The resulting decision tree has no errors.
On the other hand, if we perform $j$~raising operations for some~$1\le j< k$, then we cannot prune the root of~$T$.
Without loss of generality, we assume that at least one raising operation is done in the left subtree of the root of~$T$.
Observe that the last cut~$d_k\le 0$ cannot be pruned since then a $\lneg$ example ends up in a $\lpos$~leaf.
Similarly, no cut~$d_i\le 0$ for some~$2\le i\le k-1$ cannot be pruned since then a $\lpos$~example ends up in the $\lneg$~leaf.
Consequently, exactly $k$~raising operations are required.
\end{proof}
%}

\begin{theorem}%[\appref{thm-rais-w-hard-k-zero}]
\label{thm-rais-w-hard-k-zero}
\MRais is NP-hard even if~$k=0$, $\delta_{\max}=2$, and $D=2$.
\end{theorem}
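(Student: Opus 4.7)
The plan is to reduce from a classical NP-hard problem (for concreteness, \textsc{Independent Set} or \textsc{Vertex Cover}) to \MRais\ with $k = 0$, $\delta_{\max} = 2$, $D = 2$. Since $k = 0$ imposes no constraint on the number of pruned inner nodes (we could always do nothing), the entire hardness has to be encoded in the error budget $t$: the initial tree $T$ must already make strictly more than $t$ errors, and the only pruning sequences that bring the error count down to $\le t$ must be in bijection with feasible solutions of the source instance. The non-monotonicity phenomenon exhibited in \Cref{thm-rais-k-non-monotone} is precisely what makes this non-trivial: raising can both increase and decrease errors, so deciding which cuts to raise is combinatorial even when we are free to choose the number of operations.

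Concretely, I would take the ``vertex path'' template used in the proofs of \Cref{thm-rais-w-hard-k,thm-rais-w-hard-l}: a root-to-leaf chain with one binary cut per vertex of the input graph, each having a $\lpos$-leaf on the ``yes'' branch and continuing the chain on the ``no'' branch, ending with a $\lneg$-leaf, plus the extra $d^{*}\le 0$ cut and its $\lpos$-leaf to enforce reasonability. On top of the edge examples $e(u,v)$ already used there, I would add carefully weighted ``vertex-cost'' examples so that each raising of the cut for a vertex $v$ simultaneously (i)~reduces the error count by a fixed amount (reflecting ``$v$ is selected'') and (ii)~endangers the correct classification of the edge examples incident to~$v$ (reflecting feasibility of the selected set). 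Setting $t$ at the threshold corresponding to an independent set of the target size then makes the constructed instance feasible iff the source instance is.

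The main obstacle will be maintaining both $\delta_{\max} = 2$ and $D = 2$ in the presence of the new counterweight examples: a naive $\lneg$-example with value $1$ on vertex $v$ and $0$ elsewhere already differs from an edge example $e(u,w)$ in three features whenever $v \notin \{u, w\}$, so $\delta_{\max}$ blows up to $3$. I would circumvent this by placing the counterweight examples on the single auxiliary feature $d^{*}$ that is already present for reasonability, using $d^{*}$ as a shared ``flag'' so that the new examples differ from every opposing example in at most two features (the flag plus one vertex coordinate), and by boosting certain enforcing examples with $t+1$ copies, as in the reasonability fix of \Cref{thm-rais-w-hard-k,thm-rais-w-hard-l}, to make the final $\lneg$-leaf and the $d^{*}\le 0$ cut unprunable.

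Finally, correctness proceeds in the now-familiar two directions. In the forward direction, given a solution of the NP-hard source instance I read off the corresponding set of cuts to raise and verify by counting that the resulting tree makes at most $t$ errors. In the backward direction, the multiplicities of the enforcing examples force the final $\lneg$-leaf and the $d^{*}$-cut to remain, so only vertex cuts can be raised; then a direct accounting of the edge and vertex-cost examples shows that any set of raisings achieving $\le t$ errors must correspond to a feasible solution of the source problem.
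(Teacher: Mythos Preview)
Your plan matches the paper's proof almost exactly: reduce from \textsc{Independent Set} on the same path-shaped reasonable tree, add one $\lneg$ ``vertex example'' per vertex~$v$ with $e[v]=1$ and zeros elsewhere so that the initial tree already has $|V(G)|$ errors, duplicate each edge example $|V(G)|$ times and the all-zero $\lneg$ example $|V(G)|^{2}$ times (plus $t{+}1$ copies of~$e^{*}$) to lock the bottom of the path, and set $t=|V(G)|-\kappa$. The paper does not route the vertex-cost examples through~$d^{*}$ as you propose; it simply uses the single-$1$ $\lneg$ examples directly, so your $\delta_{\max}$ worry applies equally to the paper's construction as written.
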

%\appendixproof{thm-rais-w-hard-k-zero}{
\begin{proof}
We use the same initial reasonable decision tree as in \Cref{thm-rais-w-hard-k} and reduce from $\kappa$-\textsc{Independent Set} with an instance graph~$G$. 
The \lpos\ examples for the edges are constructed similarly to the previous proofs but are duplicated $|V(G)|$ times. We create $|V(G)|^2$ copies of the $\lneg$ example, and finally, construct one $\lneg$ example $e$ for each feature~$v$ such that $e[v]$ is $1$ and other entries are zeros. By the construction, the initial decision tree has $|V(G)|$ misclassifications.

For the instance of \MRais, set $t = |V(G)| - \kappa$. As a consequence, we have to prune at least $\kappa$ cuts to decrease the number of errors to $t$. On the other hand, if cuts corresponding to both endpoints of an edge are pruned, we would create $|V(G)|$ new misclassifications, so the pruned subset of inner nodes has to be an independent set.
Additionally, we copy the example~$e^*$ also $t+1$~times, to avoid pruning the newly introduced cut~$d^*\le 0$ to make the tree reasonable.
\end{proof}
%}

% Reduction from IS. features: Vertices. For each edge add $n$~copies of the example having exactly two ones, one for each incident vertex. All these examples are red. Then we have one blue example with exactly one one for each possibility and there are $n^2$~blue examples with zeros everywhere. Initially, the tree consists of a path with all possible cuts.
% Set~$t=n-k$.
% Initially, there are $n$~errors.
% We have to preserve $n-k$~cuts but we cannot remove the two cuts corresponding to an endpoint of an edge.

\looseness=-1
We now show that our XP-algorithm for~$d$ (\Cref{thm-rais-xp-d}) cannot be improved to an FPT-algorithm and that the exponential dependence on~$d$ cannot be reduced substantially.

\begin{theorem}
%[\appref{thm-rais-w-hard-d+l}]
\label{thm-rais-w-hard-d+l}
Even if~$\delta_{\max}=6$, both \Rais and \MRais are \W[1]-hard for~$d+\ell$ and, unless the ETH is false, they cannot be solved in $f(d+\ell)\cdot |I|^{o(d+\ell)}$~time.
\end{theorem}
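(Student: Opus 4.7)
I would reduce from the \W[1]-hard \textsc{Multicolored Clique} problem: given a graph $G$ whose vertex set is partitioned into color classes $V_1, \ldots, V_\kappa$, decide whether $G$ contains a clique with exactly one vertex from each class. Under ETH this problem admits no $f(\kappa)\cdot n^{o(\kappa)}$-time algorithm \citep{CyFoKoLoMaPiPiSa2015}. The target will be an instance of \Rais (resp.\ \MRais) in which both the number of features $d$ and the target size $\ell$ are $O(\kappa)$, so that a $f(d+\ell)\cdot |I|^{o(d+\ell)}$ algorithm would break ETH.

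\textbf{Construction.} Introduce one feature $f_i$ per color class $i$, so $d=\kappa$ (plus a constant number of auxiliary features, as in the reasonability fix of \Cref{thm-rais-w-hard-k}). The domain of $f_i$ consists of values $0,1,\ldots,|V_i|+1$, where value $v$ represents vertex $v$ of $V_i$. The initial tree consists of $\kappa$ chained selection gadgets, the $i$-th gadget being a spine of cuts $f_i\le 0, f_i\le 1, \ldots, f_i\le |V_i|$ (analogous to the spine used in \Cref{thm-rais-w-hard-k}) with, off each spine node, a small subtree of cuts over $f_i$ that can be salvaged by raising iff the chosen $f_i$-threshold is the one on that spine node. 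As in \Cref{thm-rais-w-hard-k}, a large number of copies of enforcing red/blue examples pins the required spine so that after optimal raising the surviving $f_i$-thresholds on each side encode exactly one vertex $v_i\in V_i$ per color. For each non-edge $\{u,v\}$ of $G$ with $u\in V_i$, $v\in V_j$, add a distinguished blue ``non-edge example'' whose only distinctive coordinates are $f_i=u$ and $f_j=v$ (all other features set to a shared default), designed to land on a red leaf of the pruned tree precisely when $u=v_i$ and $v=v_j$. With appropriate multiplicities, the error budget $t$ forces the chosen $v_1,\ldots,v_\kappa$ to form a multicolored clique. The number of cuts that survive optimal pruning is $2$ per selection gadget plus a constant skeleton, so $\ell=O(\kappa)$, and hence $d+\ell=O(\kappa)$, giving both \W[1]-hardness and the ETH lower bound.

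\textbf{Main obstacle.} The tight part is keeping $\delta_{\max}\le 6$ while letting non-edge examples discriminate two selection gadgets. Each non-edge example must differ from the nearest differently-labeled ``template'' example only in a bounded number of features. The plan is to use a constant-size set of auxiliary marker features (like the feature $d^*$ in \Cref{thm-rais-w-hard-k}) so that every red template and every blue template agree on all but a few coordinates, and every non-edge example differs from either template in at most three features (the two selection features plus one marker). A careful case analysis, separating within-spine examples, between-spine examples, and enforcing examples, will then yield $\delta_{\max}\le 6$. A fallback is to reduce from \textsc{Grid Tiling} instead, where consistency constraints are local to adjacent grid cells and thus easier to encode without blowing up $\delta_{\max}$; this still yields the same W[1]-hardness and ETH bound in $d+\ell$.
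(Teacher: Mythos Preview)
Your high-level plan---reduce from \textsc{Multicolored Clique} so that both $d$ and $\ell$ are $O(\kappa)$---is exactly what the paper does, and your idea of using non-edge examples in place of edge examples is a legitimate dual formulation. However, the construction as sketched has a concrete gap in the selection gadget, and this is precisely the technical crux the paper has to work around.

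The problem is the single-feature-per-class encoding. After raising, the surviving tree is a root-to-leaf path, and for your non-edge test to work you need the following behavior at gadget~$i$: an example with $f_i=u$ \emph{continues down the spine to gadget~$i{+}1$} if and only if $u=v_i$, and otherwise is diverted to a leaf. But on a spine of cuts $f_i\le 0,\,f_i\le 1,\ldots$ (all with the same orientation), raising can only keep a subset of these thresholds, and the examples that \emph{continue} past gadget~$i$ are exactly those with $f_i$ strictly greater than the largest surviving threshold. There is no way to make ``continue'' mean ``equal to a single designated value'': with one feature you can filter from one side only. Your remark about ``surviving thresholds on each side'' and the off-spine ``small subtrees that can be salvaged'' does not fix this; if the continuation to gadget $i{+}1$ lives in an off-spine subtree, then the spine past that point must be dead, and you cannot simultaneously route $f_i>v_i$ and $f_i<v_i$ to leaves while $f_i=v_i$ alone proceeds, without duplicating gadget $i{+}1$ (which would blow up the tree). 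The paper resolves this by introducing \emph{two} features $d_i^<,d_i^>$ per color class: the pruned tree keeps one cut $d_i^< < a_i$ (diverts low values) and one cut $d_i^> > a_i$ (diverts high values), so only examples with value exactly $a_i$ in both features continue. This two-sided sandwich is the key idea your sketch is missing.

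A second, smaller issue: your $\ell=O(\kappa)$ accounting assumes two cuts survive per gadget ``plus a constant skeleton,'' but if the off-spine pieces are genuine subtrees with internal cuts (as you write), any that survive contribute to~$\ell$; and if none survive, they cannot be ``salvaged.'' In the paper's construction the off-spine children are single leaves, so this tension does not arise. Once you adopt the two-feature trick, the $\delta_{\max}\le 6$ bound falls out cleanly (edge/non-edge examples deviate from the default in four coordinates, the other example types in at most two), so your worries there largely dissolve as well.
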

%\appendixproof{thm-rais-w-hard-d+l}{
\begin{proof}
We only show he statement for \Rais.
The statement for \MRais then follows by setting the lower bound~$k$ of the number of pruned inner nodes to the number of inner nodes of the input tree minus~$\ell$ (these values are specified later).

We reduce from \textsc{Multicolored Clique} where each color class has the same number~$p$ of vertices.
Formally, the input is a graph~$G$, and~$\kappa\in\mathds{N}$, where the vertex set~$V(G)$ of $N$ vertices is partitioned into~$V_1,\ldots, V_\kappa$ and~$|V_i|=p$ for each~$i\in[\kappa]$.
More precisely, $V_i\coloneqq\{v_i^1,v_i^2,\ldots, v_i^p\}$.
The question is whether~$G$ has contains a clique consisting of exactly one vertex per class~$V_i$.
\textsc{Multicolored Clique} is \W[1]-hard parameterized by~$\kappa$ and cannot be solved in $f(\kappa)\cdot n^{o(\kappa)}$~time unless the ETH fails~\cite{CyFoKoLoMaPiPiSa2015}.

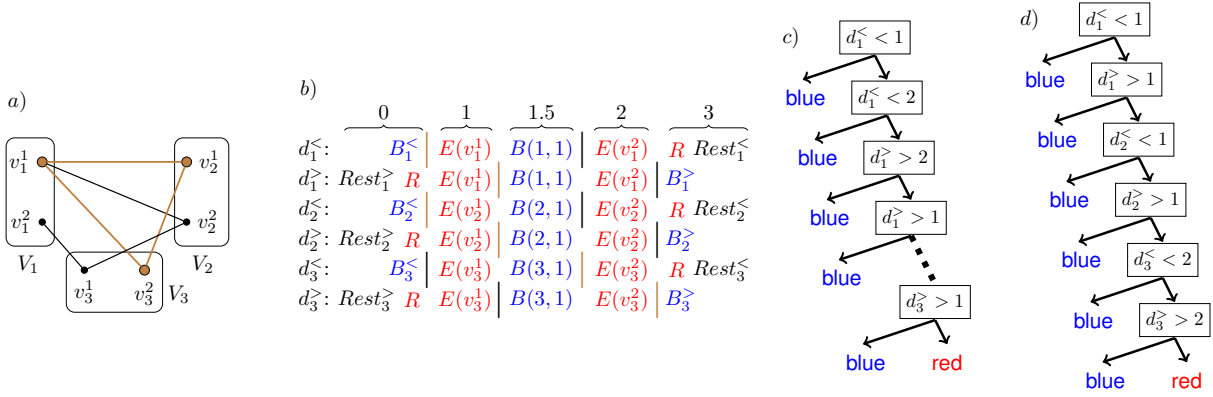
\begin{figure*}[t]

%%%%%%%%%%%%%%%%%%%%%%%%%%%%%%%%%%%%%%%%%%%%%%%%%%%%%%%%%%%%%%%%%%%%%%%
%%% Multicolored Clique Instance %%%%%%%%%%%%%%%%%%%%%%%%%%%%%%%%%%%%%%
%%%%%%%%%%%%%%%%%%%%%%%%%%%%%%%%%%%%%%%%%%%%%%%%%%%%%%%%%%%%%%%%%%%%%%%
\begin{minipage}{0.25\textwidth}
\centering
\scalebox{0.8}{
\begin{tikzpicture}%[scale=0.50]
\node[label=left:{$a)$}](start) at (-0.2, 1) {};

\node[label=left:{$v_1^1$}](v11) at (-0.2, 0) [shape = circle, draw, fill=brown, scale=0.11ex]{};
\node[label=left:{$v_1^2$}](v12) at (-0.2, -1) [shape = circle, draw, fill=black, scale=0.07ex]{};
\draw[rounded corners] (-0.8, -1.4) rectangle (0., 0.4) {};
\node[label=left:{$V_1$}](start) at (-0.0, -1.7) {};

\node[label=right:{$v_2^1$}](v21) at (2.2, 0) [shape = circle, draw, fill=brown, scale=0.11ex]{};
\node[label=right:{$v_2^2$}](v22) at (2.2, -1) [shape = circle, draw, fill=black, scale=0.07ex]{};
\draw[rounded corners] (2.0, -1.4) rectangle (2.9, 0.4) {};
\node[label=left:{$V_2$}](start) at (2.9, -1.7) {};

\node[label=below:{$v_3^1$}](v31) at (0.5, -1.8) [shape = circle, draw, fill=black, scale=0.07ex]{};
\node[label=below:{$v_3^2$}](v32) at (1.5, -1.8) [shape = circle, draw, fill=brown, scale=0.11ex]{};
\draw[rounded corners] (0.2, -2.5) rectangle (1.8, -1.5) {};
\node[label=left:{$V_3$}](start) at (2.5, -2.2) {};

\path [-,brown,line width=0.3mm](v11) edge (v21);
\path [-,line width=0.2mm](v11) edge (v22);
\path [-,brown,line width=0.3mm](v11) edge (v32);
\path [-,line width=0.2mm](v12) edge (v31);
\path [-,brown,line width=0.3mm](v21) edge (v32);
\path [-,line width=0.2mm](v22) edge (v31);
\end{tikzpicture}
}
\end{minipage}
%%%%%%%%%%%%%%%%%%%%%%%%%%%%%%%%%%%%%%%%%%%%%%%%%%%%%%%%%%%%%%%%%%%%%%%
%%% Visualization of data set %%%%%%%%%%%%%%%%%%%%%%%%%%%%%%%%%%%%%%%%%
%%%%%%%%%%%%%%%%%%%%%%%%%%%%%%%%%%%%%%%%%%%%%%%%%%%%%%%%%%%%%%%%%%%%%%%
\begin{minipage}{0.36\textwidth}
\centering
\scalebox{0.8}{
\begin{tikzpicture}%[scale=0.50]
\def\abstand{-0.5} 

\node[label=left:{$b)$}](start) at (-1.9, 1) {};

\node[label=left:{$d_1^{<}{:}$}](v11) at (-1.7, 0*\abstand) {};
\node[label=left:{\textcolor{blue}{$B_1^<$}}](v11) at (-0.2, 0*\abstand) {};
\node[label=left:{\textcolor{red}{$E(v_1^1)$}}](v11) at (1, 0*\abstand) {};
\node[label=left:{\textcolor{blue}{$B(1,1)$}}](v11) at (2.35, 0*\abstand) {};
\node[label=left:{\textcolor{red}{$E(v_1^2)$}}](v11) at (3.6, 0*\abstand) {};
\node[label=left:{\textcolor{red}{$R$}}](v11) at (4.2, 0*\abstand) {};
\node[label=left:{$Rest_1^<$}](v11) at (5.3, 0*\abstand) {};

\node[label=left:{$d_1^{>}{:}$}](v11) at (-1.7, 1*\abstand) {};
\node[label=left:{$Rest_1^>$}](v11) at (-0.6, 1*\abstand) {};
\node[label=left:{\textcolor{red}{$R$}}](v11) at (-0.2, 1*\abstand) {};
\node[label=left:{\textcolor{red}{$E(v_1^1)$}}](v11) at (1, 1*\abstand) {};
\node[label=left:{\textcolor{blue}{$B(1,1)$}}](v11) at (2.35, 1*\abstand) {};
\node[label=left:{\textcolor{red}{$E(v_1^2)$}}](v11) at (3.6, 1*\abstand) {};
\node[label=left:{\textcolor{blue}{$B_1^>$}}](v11) at (4.4, 1*\abstand) {};

\node[label=left:{$d_2^{<}{:}$}](v11) at (-1.7, 2*\abstand) {};
\node[label=left:{\textcolor{blue}{$B_2^<$}}](v11) at (-0.2, 2*\abstand) {};
\node[label=left:{\textcolor{red}{$E(v_2^1)$}}](v11) at (1, 2*\abstand) {};
\node[label=left:{\textcolor{blue}{$B(2,1)$}}](v11) at (2.35, 2*\abstand) {};
\node[label=left:{\textcolor{red}{$E(v_2^2)$}}](v11) at (3.6, 2*\abstand) {};
\node[label=left:{\textcolor{red}{$R$}}](v11) at (4.2, 2*\abstand) {};
\node[label=left:{$Rest_2^<$}](v11) at (5.3, 2*\abstand) {};

\node[label=left:{$d_2^{>}{:}$}](v11) at (-1.7, 3*\abstand) {};
\node[label=left:{$Rest_2^>$}](v11) at (-0.6, 3*\abstand) {};
\node[label=left:{\textcolor{red}{$R$}}](v11) at (-0.2, 3*\abstand) {};
\node[label=left:{\textcolor{red}{$E(v_2^1)$}}](v11) at (1, 3*\abstand) {};
\node[label=left:{\textcolor{blue}{$B(2,1)$}}](v11) at (2.35, 3*\abstand) {};
\node[label=left:{\textcolor{red}{$E(v_2^2)$}}](v11) at (3.6, 3*\abstand) {};
\node[label=left:{\textcolor{blue}{$B_2^>$}}](v11) at (4.4, 3*\abstand) {};

\node[label=left:{$d_3^{<}{:}$}](v11) at (-1.7, 4*\abstand) {};
\node[label=left:{\textcolor{blue}{$B_3^<$}}](v11) at (-0.2, 4*\abstand) {};
\node[label=left:{\textcolor{red}{$E(v_3^1)$}}](v11) at (1, 4*\abstand) {};
\node[label=left:{\textcolor{blue}{$B(3,1)$}}](v11) at (2.35, 4*\abstand) {};
\node[label=left:{\textcolor{red}{$E(v_3^2)$}}](v11) at (3.6, 4*\abstand) {};
\node[label=left:{\textcolor{red}{$R$}}](v11) at (4.2, 4*\abstand) {};
\node[label=left:{$Rest_3^<$}](v11) at (5.3, 4*\abstand) {};

\node[label=left:{$d_3^{>}{:}$}](v11) at (-1.7, 5*\abstand) {};
\node[label=left:{$Rest_3^>$}](v11) at (-0.6, 5*\abstand) {};
\node[label=left:{\textcolor{red}{$R$}}](v11) at (-0.2, 5*\abstand) {};
\node[label=left:{\textcolor{red}{$E(v_3^1)$}}](v11) at (1, 5*\abstand) {};
\node[label=left:{\textcolor{blue}{$B(3,1)$}}](v11) at (2.35, 5*\abstand) {};
\node[label=left:{\textcolor{red}{$E(v_3^2)$}}](v11) at (3.6, 5*\abstand) {};
\node[label=left:{\textcolor{blue}{$B_3^>$}}](v21) at (4.4, 5*\abstand) {};

\draw[decorate, decoration={brace}, yshift=2ex]  (-1.7, 0.0) -- node[above=0.4ex] {$0$}  (-0.4, 0.0);
\draw[decorate, decoration={brace}, yshift=2ex]  (-0.1, 0.0) -- node[above=0.4ex] {$1$}  (0.75, 0.0);
\draw[decorate, decoration={brace}, yshift=2ex]  (1.05, 0.0) -- node[above=0.4ex] {$1.5$}  (2.1, 0.0);
\draw[decorate, decoration={brace}, yshift=2ex]  (2.45, 0.0) -- node[above=0.4ex] {$2$}  (3.35, 0.0);
\draw[decorate, decoration={brace}, yshift=2ex]  (3.7, 0.0) -- node[above=0.4ex] {$3$}  (5.0, 0.0);

\node[label=left:{\LARGE \textcolor{brown}{$|$}}](v11) at (0.0, 0*\abstand) {};
\node[label=left:{\LARGE \textcolor{brown}{$|$}}](v11) at (0.0, 2*\abstand) {};
\node[label=left:{\LARGE $|$}](v11) at (0.0, 4*\abstand) {};

\node[label=left:{\LARGE \textcolor{brown}{$|$}}](v11) at (1.2, 1*\abstand) {};
\node[label=left:{\LARGE \textcolor{brown}{$|$}}](v11) at (1.2, 3*\abstand) {};
\node[label=left:{\LARGE $|$}](v11) at (1.2, 5*\abstand) {};

\node[label=left:{\LARGE $|$}](v11) at (2.58, 0*\abstand) {};
\node[label=left:{\LARGE $|$}](v11) at (2.58, 2*\abstand) {};
\node[label=left:{\LARGE \textcolor{brown}{$|$}}](v11) at (2.58, 4*\abstand) {};

\node[label=left:{\LARGE $|$}](v11) at (3.83, 1*\abstand) {};
\node[label=left:{\LARGE $|$}](v11) at (3.83, 3*\abstand) {};
\node[label=left:{\LARGE \textcolor{brown}{$|$}}](v11) at (3.83, 5*\abstand) {};
\end{tikzpicture}
}
\end{minipage}
%%%%%%%%%%%%%%%%%%%%%%%%%%%%%%%%%%%%%%%%%%%%%%%%%%%%%%%%%%%%%%%%%%%%%%%
%%% Visualization of initial tree %%%%%%%%%%%%%%%%%%%%%%%%%%%%%%%%%%%%%
%%%%%%%%%%%%%%%%%%%%%%%%%%%%%%%%%%%%%%%%%%%%%%%%%%%%%%%%%%%%%%%%%%%%%%%
\begin{minipage}{0.18\textwidth}
\centering
\scalebox{0.8}{
\begin{tikzpicture}%[scale=0.50]
\def\mydistx{0.2}
\def\mydisty{-1}

\node[label=left:{$c)$}](start) at (-1.0, 0) {};

\node[](d1<1) at (0*\mydistx, 0*\mydisty) [shape = rectangle, draw, scale=0.2ex]{$d_1^<<1$};
\node[](d1<2) at (1*\mydistx, 1*\mydisty) [shape = rectangle, draw, scale=0.2ex]{$d_1^<<2$};
\node[](d1>1) at (2*\mydistx, 2*\mydisty) [shape = rectangle, draw, scale=0.2ex]{$d_1^>>2$};
\node[](d1>2) at (3*\mydistx, 3*\mydisty) [shape = rectangle, draw, scale=0.2ex]{$d_1^>>1$};

\node[](d3>>1) at (4*\mydistx + 0.2, 4*\mydisty -0.4) [shape = rectangle, draw, scale=0.2ex]{$d_3^>>1$};

\draw [->,very thick](0*\mydistx, -0.3 + 0*\mydisty) -- (1*\mydistx, -0.7 + 0*\mydisty);
\draw [->,very thick](1*\mydistx, -0.3 + 1*\mydisty) -- (2*\mydistx, -0.7 + 1*\mydisty);
\draw [->,very thick](2*\mydistx, -0.3 + 2*\mydisty) -- (3*\mydistx, -0.7 + 2*\mydisty);
\draw [loosely dotted, line width=1mm](3*\mydistx, -0.3 + 3*\mydisty) -- (4*\mydistx + 0.2, -0.7 + 3*\mydisty - 0.4);
\draw [->,very thick](5*\mydistx, -0.3 + 4*\mydisty -0.4) -- (6*\mydistx, -0.7 + 4*\mydisty - 0.4);

\draw [->,very thick](0*\mydistx, -0.3 + 0*\mydisty) -- (-1*\mydistx -1, -0.7 + 0*\mydisty);
\draw [->,very thick](1*\mydistx, -0.3 + 1*\mydisty) -- (0*\mydistx -1, -0.7 + 1*\mydisty);
\draw [->,very thick](2*\mydistx, -0.3 + 2*\mydisty) -- (1*\mydistx -1, -0.7 + 2*\mydisty);
\draw [->,very thick](3*\mydistx, -0.3 + 3*\mydisty) -- (2*\mydistx -1, -0.7 + 3*\mydisty);
\draw [->,very thick](5*\mydistx, -0.3 + 4*\mydisty -0.4) -- (4*\mydistx -1, -0.7 + 4*\mydisty - 0.4);

\node[](d1<1) at (0*\mydistx-1.15, 1*\mydisty) {\textcolor{blue}{$\lpos$}};
\node[](d1<2) at (1*\mydistx-1.15, 2*\mydisty) {\textcolor{blue}{$\lpos$}};
\node[](d2<1) at (2*\mydistx-1.15, 3*\mydisty) {\textcolor{blue}{$\lpos$}};
\node[](d2>1) at (3*\mydistx-1.15, 4*\mydisty) {\textcolor{blue}{$\lpos$}};
\node[](d3>2) at (5*\mydistx-1.15, 5*\mydisty - 0.4) {\textcolor{blue}{$\lpos$}};
\node[](d3>2) at (5*\mydistx+0.2, 5*\mydisty - 0.4) {\textcolor{red}{$\lneg$}};
\end{tikzpicture}
}
\end{minipage}
%%%%%%%%%%%%%%%%%%%%%%%%%%%%%%%%%%%%%%%%%%%%%%%%%%%%%%%%%%%%%%%%%%%%%%%
%%% Visualization of optimal resulting tree %%%%%%%%%%%%%%%%%%%%%%%%%%%
%%%%%%%%%%%%%%%%%%%%%%%%%%%%%%%%%%%%%%%%%%%%%%%%%%%%%%%%%%%%%%%%%%%%%%%
\begin{minipage}{0.18\textwidth}
\centering
\scalebox{0.8}{
\begin{tikzpicture}%[scale=0.50]
\def\mydistx{0.2}
\def\mydisty{-1}

\node[label=left:{$d)$}](start) at (-1.0, 0) {};

\node[](d1<1) at (0*\mydistx, 0*\mydisty) [shape = rectangle, draw, scale=0.2ex]{$d_1^<<1$};
\node[](d1<2) at (1*\mydistx, 1*\mydisty) [shape = rectangle, draw, scale=0.2ex]{$d_1^>>1$};
\node[](d2<1) at (2*\mydistx, 2*\mydisty) [shape = rectangle, draw, scale=0.2ex]{$d_2^<<1$};
\node[](d2>1) at (3*\mydistx, 3*\mydisty) [shape = rectangle, draw, scale=0.2ex]{$d_2^>>1$};
\node[](d3<2) at (4*\mydistx, 4*\mydisty) [shape = rectangle, draw, scale=0.2ex]{$d_3^<<2$};
\node[](d3>2) at (5*\mydistx, 5*\mydisty) [shape = rectangle, draw, scale=0.2ex]{$d_3^>>2$};

\draw [->,very thick](0*\mydistx, -0.3 + 0*\mydisty) -- (1*\mydistx, -0.7 + 0*\mydisty);
\draw [->,very thick](1*\mydistx, -0.3 + 1*\mydisty) -- (2*\mydistx, -0.7 + 1*\mydisty);
\draw [->,very thick](2*\mydistx, -0.3 + 2*\mydisty) -- (3*\mydistx, -0.7 + 2*\mydisty);
\draw [->,very thick](3*\mydistx, -0.3 + 3*\mydisty) -- (4*\mydistx, -0.7 + 3*\mydisty);
\draw [->,very thick](4*\mydistx, -0.3 + 4*\mydisty) -- (5*\mydistx, -0.7 + 4*\mydisty);
\draw [->,very thick](5*\mydistx, -0.3 + 5*\mydisty) -- (6*\mydistx, -0.7 + 5*\mydisty);

\draw [->,very thick](0*\mydistx, -0.3 + 0*\mydisty) -- (-1*\mydistx -1, -0.7 + 0*\mydisty);
\draw [->,very thick](1*\mydistx, -0.3 + 1*\mydisty) -- (0*\mydistx -1, -0.7 + 1*\mydisty);
\draw [->,very thick](2*\mydistx, -0.3 + 2*\mydisty) -- (1*\mydistx -1, -0.7 + 2*\mydisty);
\draw [->,very thick](3*\mydistx, -0.3 + 3*\mydisty) -- (2*\mydistx -1, -0.7 + 3*\mydisty);
\draw [->,very thick](4*\mydistx, -0.3 + 4*\mydisty) -- (3*\mydistx -1, -0.7 + 4*\mydisty);
\draw [->,very thick](5*\mydistx, -0.3 + 5*\mydisty) -- (4*\mydistx -1, -0.7 + 5*\mydisty);

\node[](d1<1) at (0*\mydistx-1.15, 1*\mydisty) {\textcolor{blue}{$\lpos$}};
\node[](d1<2) at (1*\mydistx-1.15, 2*\mydisty) {\textcolor{blue}{$\lpos$}};
\node[](d2<1) at (2*\mydistx-1.15, 3*\mydisty) {\textcolor{blue}{$\lpos$}};
\node[](d2>1) at (3*\mydistx-1.15, 4*\mydisty) {\textcolor{blue}{$\lpos$}};
\node[](d3<2) at (4*\mydistx-1.15, 5*\mydisty) {\textcolor{blue}{$\lpos$}};
\node[](d3>2) at (5*\mydistx-1.15, 6*\mydisty) {\textcolor{blue}{$\lpos$}};
\node[](d3>2) at (5*\mydistx+0.2, 6*\mydisty) {\textcolor{red}{$\lneg$}};
\end{tikzpicture}
}
\end{minipage}
\caption{
A visualization of the reduction from the proof of \Cref{thm-rais-w-hard-d+l}. 
$a)$ shows a \textsc{Multicolored Clique} instance. 
A multicolored clique is depicted in brown. 
$b)$ shows the corresponding classification instance. 
Here, $E(v_i^j)$ is the set of all edges incident with vertex~$v_i^j$. $Rest_i^<$ and~$Rest_i^>$ refers to all other examples not shown in that feature (the precise set differs in each feature and is always a subset of all examples having the default threshold of that feature). Cuts of the input tree~$T$ are shown by~``$|$'' and cuts that remain in the optimal raised tree~$T'$ are depicted in brown. 
$c)$ shows parts of the input tree~$T$. $d)$ shows the optimal raised tree~$T'$.}
\label{fig-lift-hard-d+l}
\end{figure*}

The property that all color classes have the same number of vertices is only used to simplify the proof.

\textbf{Outline:}
The idea is to create two features~$d_i^<$ and~$d_i^>$ per color class~$i$ such that in the pruned tree we need to preserve exactly one cut in each feature to fulfill the desired error bound.
We achieve the property of exactly one cut per feature~$d'$ by adding a huge number of \emph{$\lpos$~forcing examples} and \emph{$\lneg$~enforcing examples} which can only be distinguished in~$d'$.
The two cuts in features~$d_i^<$ and~$d_i^>$ force us to select exactly one vertex~$v_i^{a_i}$ of this color class. 
For each edge we create an \emph{edge example}.
If vertex~$v_i^{a_i}$ is selected, then all edge examples corresponding to edges having an endpoint in color class~$i$ which is \emph{not}~$v_i$ will then be misclassified.
Thus, we can only correctly classify an edge example if we select both endpoints of that edge.
Furthermore, we ensure that only edge examples may be misclassified without violating the error bound.
Hence, by setting~$t\coloneqq m-\binom{\kappa}{2}$, we ensure that we need to select a multicolored clique.

\textbf{Construction:}
We first show the statement for non-reasonable decision trees and afterwards we argue how the construction has to be adapted such that the input decision tree is reasonable.

\emph{Description of the data set:}
A visualization is shown in part~$b)$ of \Cref{fig-lift-hard-d+l}.

\begin{itemize}

\item For each edge~$\{v_i^a,v_j^b\}\in E(G)$ we add an \emph{edge example}~$e(v_i^a,v_j^b)$.
To all these examples we assign label~$\lneg$.

\item For each~$i\in[\kappa]$ and each~$a\in[p-1]$ we add a set~$B(i,a)$ of \emph{separation examples}.
Each of these sets consists of $m$~examples having the same value in each feature.
To all these examples we assign label~$\lpos$.

\item For each~$i\in[\kappa]$ we create sets~$B_i^<,B_i^>$ of \emph{$\lpos$~forcing examples}.
Each of these sets consists of $m$~examples and all examples in one of these sets have the same value in each feature.

\item We create a set~$R$ of \emph{$\lneg$~enforcing examples}.
This set consists of $m$~examples and all examples in this set have the same value in each feature.

\end{itemize}

Note that we add $M \coloneqq |E(G)|$~edge examples, $\kappa\cdot (p-1)\cdot M\le N\cdot M$~separation examples, $2\cdot \kappa\cdot M \le 2\cdot N \cdot M$~forcing examples, and $M$~enforcing examples.
Thus, the number of examples is polynomial in the input size.

For each~$i\in[\kappa]$, we add two features~$d_i^<$ and~$d_i^>$ and thus we have $2\cdot \kappa$~features.

It remains to describe the coordinates of the examples in the features.
Initially, we declare a \emph{default threshold}~$\default(d')$ for each feature~$d'$.
Then, each example~$e$ has the default threshold in each feature, unless we assign~$e$ a different threshold in that feature.

For each feature~$d_i^<$, we set~$\default(d_i^<)=p$, and for each feature~$d_i^>$, we set~$\default(d_i^>)=0$.

\begin{itemize}
\item For each edge example~$e=e(v_i^a,v_j^b)$ we set~$e[d_i^<]=e[d_i^>]=a$, and~$e[d_j^<]=e[d_j^>]=b$,~$e[d_z^<]=p+1$.
In each other features~$e$ is set to the default threshold.

\item For each separation example~$e\in B(i,a)$ we set~$e[d_i^<]=e[d_i^>]=a+1/2$.
In each other features~$e$ is set to the default threshold.

\item For each forcing example~$e\in B_i^<$ we set~$e[d_i^<]=0$.
In each other features~$e$ is set to the default threshold.

For each forcing example~$e\in B_i^>$ we set~$e[d_i^>]=p+1$.
In each other features~$e$ is set to the default threshold.

\item For each enforcing example~$e\in R$ we use the default threshold in each feature.
\end{itemize}

\emph{Description of the input tree~$T$:}
Intuitively, the input tree~$T$ is a path, where all leafs are~$\lpos$ except one leaf and we first have some cuts in~$d_1^<$ in ascending order, then some cuts in~$d_1^>$ in descending order, some cuts in~$d_2^<$ in ascending order, and so on.
A visualization of~$T$ is shown in part~$c)$ of \Cref{fig-lift-hard-d+l}.
Since~$T$ is a path it is sufficient to present the order of the cuts from the root to the unique $\lneg$~leaf:~$(d_1^<<1,d_1^<<2,\ldots, d_1^<<p,d_1^>>p,d_1^>>p-1,\ldots, d_1^>>1,d_2^<<1,\ldots,d_\kappa^>>1)$.
The left child of each cut is always a $\lpos$~leaf and the unique $\lneg$~leaf is the right child of the last cut~$d_\kappa^>>1$.
Observe that~$T$ consists of $p\cdot \kappa=N$~inner nodes.

\emph{Error bound and~$\ell$:}
Finally, we set~$t\coloneqq M-\binom{\kappa}{2}$, and~$\ell\coloneqq 2\cdot \kappa$.
This completes our construction.

\emph{Calculation of~$\delta_{\max}$:}
Observe that each enforcing example always has the default threshold, that each forcing example differs in exactly one features from the default thresholds, that each separation example differs exactly twice from the default thresholds, and that each edge example differs exactly four times from the default thresholds.
Thus, $\delta_{\max}=6$.

\textbf{Correctness:}
We show that~$G$ has a multicolored clique if and only if~$T$ can be raised to a tree~$T'$ having exactly $\ell$~inner nodes making at most $t=M-\binom{\kappa}{2}$~errors.

$(\Rightarrow)$
Let~$S=\{v_i^{a_i}: i\in[\kappa],a_i\in[p]\}$ be a multicolored clique in~$G$ (for example: see part~$a)$ of \Cref{fig-lift-hard-d+l}).
To obtain tree~$T'$ we preserve the cuts~$\{d_i^<<a_i:i\in[\kappa]\}$ and~$\{d_i^>>a_i:i\in[\kappa]\}$.
A visualization of~$T'$ is shown in part~$d)$ of \Cref{fig-lift-hard-d+l}.
In other words,~$T'$ is doing the cuts~$(d_1^<<a_1,d_1^>>a_1,d_2^<<a_2,\ldots, d_\kappa^>>a_\kappa)$.
Clearly, $T'$ consists of $\ell=2\cdot \kappa$~inner nodes.
Thus, it remains to verify that~$T'$ makes at most $t$~errors.

\emph{Outline:}
First, we make an observation for examples using the default threshold in a feature and second we use this observation to show that~$T'$ has at most $t$~misclassifications.

\emph{Step 1:}
Observe that if any example~$e$ lands at some inner node of~$T$ corresponding to a cut in feature~$d'$ and~$e$ has the default threshold in that feature~$d'$, that is, $e[d']=\default(d')$, then~$e$ will always go to the right subtree of that node.
Since the pruned tree~$T'$ is a path, an example~$e$ which has the default threshold in each feature will be contained in the unique $\lneg$~leaf of~$T'$ which is the right leaf of the cut~$d^>_{\kappa}>a_\kappa$.
Also, in order for an example~$e$ to land in a different leaf (which has label~$\lpos$), we only need to consider cuts of~$T'$ in features where~$e$ has a different threshold than the default threshold.

\emph{Step 2:} We distinguish the different example types.

\emph{Step 2.1:} Recall that each enforcing example~$e\in R$ always has the default threshold.
By Step~1, each forcing example ends up in the unique $\lneg$~leaf and is thus correctly classified in~$T'$.

Now, consider a $\lpos$~forcing example~$e\in B_i^<$.

Similar to the enforcing examples, we have~$e\in E[d_i^<<a_i]$. 
By construction, $e$ uses the default thresholds in each feature other than~$d_i^<$ and in feature~$d_i^<$, we have~$e[d_i^<]=0$.
By Step~1 and since~$a_i>0$, $e$ ends up in the left child of the cut~$d_i^<<a_i$ which is a $\lpos$~leaf and thus~$e$ is correctly classified in~$T'$.
Analogously, we can show that all $\lpos$~forcing examples in~$B_i^>$ are correctly classified by~$T'$.

Thus, all enforcing and all forcing examples are correctly classified in~$T'$.

\emph{Step 2.2:} 
Let~$e\in B(i,b)$ be a $\lpos$~separation example.
By construction, $e$ uses the default thresholds in all features except~$d_i^<$ and~$d_i^>$.
Next, we distinguish the values of~$a_i$ and~$b$.
Recall that~$b=q+1/2$ where~$q\in\mathds{N}$ and that~$a_i\in[p]$.
Thus, either~$b<a_i$ or~$b>a_i$.

First, consider the case~$b<a_i$.
Then, $e$ ends up in the left child of the cut~$d_i^<<a_i$ which is a $\lpos$~leaf and thus~$e$ is correctly classified.

Second, consider the case~$b>a_i$.
Then, $e$ ends up in the right child of the cut~$d_i^<<a_i$ which is the cut~$d_i^>>a_i$.
Now, $e$ ends up in the left child of the cut~$d_i^>>a_i$ which is a $\lpos$~leaf and thus~$e$ is correctly classified.

Hence, in~$T'$ all separation examples are correctly classified.

\emph{Step 2.3:} Let~$e=e(v_i^{a_i},v_j^{a_j})$ be the edge example corresponding to an edge where both endpoints are contained in the multicolored clique~$S$. 
Without loss of generality, we assume~$i<j$.
Recall that~$e$ uses the default thresholds in all features except~$d_i^<, d_i^>, d_j^<$, and~$d_j^>$.
Analogously, to all other example sets, we obtain that~$e\in E[d_i^<<a_i]$.
Since~$e[d_i^<]=a_i$, $e$ ends up in the right child of this node which is the cut~$d_i^>>a_i$.
Again, since~$e[d_i^>]=a_i$, $e$ ends up in the right child of this node.
Analogously, we can argue that~$e$ always end up in the right child of any cut in~$T'$ and thus~$e$ ends up in the unique $\lneg$~leaf.

Hence, all edge examples corresponding to edges having both endpoint in the multicolored clique~$S$ are correctly classified.
Consequently, $T'$ makes at most $t=M-\binom{\kappa}{2}$~errors.

$(\Leftarrow)$ Let~$T'$ be a solution for the raising problem, that is, $T'$ has $\ell=2\cdot \kappa$~inner nodes and makes at most $t=M-\binom{\kappa}{2}$~errors.

\emph{Outline:}
First, we show that we need to preserve exactly one cut per feature.
Second, we show that the two cuts in the two features~$d_i^<$ and~$d_i^>$ need to have the form~$d_i^<<a_i$ and~$d_i^>>a_i$ for some~$a_i\in[p]$.
This value~$a_i$ then corresponds to vertex~$v_i^{a_i}$ of color class~$i$.
The union of these vertices is~$S$.
Finally, we verify that~$S$ has to be a multicolored clique.

\emph{Step 1:}
Assume towards a contradiction that~$T'$ does not preserve a cut in each feature, and without loss of generality, assume that no cut in feature~$d_i^<$ is preserved in~$T'$.
Consider the $\lneg$~enforcing examples in~$R$ and the $\lpos$~forcing examples in~$B_i^<$.
The examples in~$R$ always use the default thresholds and the examples in~$B_i^<$ use the default thresholds in all features except~$d_i^<$.
Thus, examples in~$R$ and examples in~$B_i^<$ can only be distinguished in feature~$d_i^<$.
Since~$T'$ does not preserve a cut in feature~$d_i^<$ all examples in~$R\cup B_i^<$ end up in the same leaf.
Since~$|R|=M=|B_i^<|$, we conclude that~$T'$ has at least $M>M-\binom{\kappa}{2}=t$~errors, a contradiction.
Thus, $T'$ preserves at least one cut per feature.

Since~$d=2\cdot\kappa=\ell$, we obtain that~$T'$ preserves exactly one cut per feature.

\emph{Step 2:}
Assume towards a contradiction that the two cuts in features~$d_i^<$ and~$d_i^>$ do not have the form~$d_i^<<a_i$ and~$d_i^>>a_i$ for some~$a_i\in[p]$, that is, we assume the cuts have the form~$d_i^<<a_i$ and~$d_i^>>b_i$ with~$a_i,b_i\in[p]$ where either~$b_i>a_i$ or~$b_i<a_i$.

First, we consider the case~$b_i>a_i$.
We show that all $M$~many $\lpos$~separation examples in~$B(i,a_i)$ end up in the unique $\lneg$ leaf, implying that the number of errors in~$T'$ is at least~$M>M-\binom{\kappa}{2}=t$, a contradiction.
Recall that each example~$e\in B(i,a_i)$ uses the default thresholds in all features except~$d_i^<$ and~$d_i^>$.
Thus, in each cut in a feature~$d_j^<$ or~$d_j^>$ where~$j\ne i$, $e$ goes always to the right subtree of that cut.
Hence, it remains to consider the cuts in features~$d_i^<$ and~$d_i^>$.
By definition, $e[d_i^<]=a_i+1/2=e[d_i^>]$.
Since~$a_i<a_i+1/2<a_i+1\le b_i$, we conclude that~$e$ ends up in the right child of both cuts~$d_i^<<a_i$ and~$d_i^>>b_i$.
Consequently, $e$ ends up in the right leaf of the last cut in~$T'$ which is~$\lneg$, a contradiction.

Second, we consider the case~$b_i<a_i$.
Observe that for all examples~$e\in E[d_i^>>b_i]$ we have~$e[d_i^>] \geq a_i$.
More precisely, only for the $\lneg$~edge examples having one endpoint in~$v_i^{a_i}$ we have~$e[d_i^>]=a_i$ and for all other examples~$e'\in E[d_i^>>b_i]$ we have~$e'[d_i^>]>a_i$.
Since the left child of the cut~$d_i^>>b_i$ is a $\lpos$~leaf, we can replace threshold~$b_i$ by threshold~$a_i$ without increasing the number of errors.
Note that~$d_i^>>a_i$ is a cut in the input tree between all cuts in feature~$d_i^<$ and~$d_{i+1}^<$.

Hence, in the following we can safely assume that for each~$i\in[\kappa]$, the cuts in features~$d_i^<$ and~$d_i^>$ have the form~$d_i^<<a_i$ and~$d_i^>>a_i$ for some~$a_i\in[p]$.

By~$v_i^{a_i}$ we denote the vertex which is selected in color class~$i$ and by~$S$ we denote the set of these vertices.

\emph{Step 3:} We now show that each $\lneg$~edge example corresponding to an edge where at least one vertex is not contained in~$S$ ends up in a $\lpos$~leaf and is thus misclassified.
As a consequence, $S$ has to be a multicolored clique to fulfill the error bound of~$t=M-\binom{\kappa}{2}$.

Let~$e=e(v_i^{a_i},v_j^{a_j})$ be an edge example and assume without loss of generality that~$v_i^{a_i}\notin S$.
Recall that~$e$ uses the default thresholds in all features except~$d_i^<, d_i^>, d_j^<$, and~$d_j^>$.
Let~$v_i^{b_i}$ be the vertex chosen in color class~$i$ and assume without loss of generality that~$a_i<b_i$.
Now, consider the cut in feature~$d_i^<<b_i$: 
since~$a_i<b_i$, example~$e$ ends up in the left child of the cut~$d_i^<<b_i$ which is a $\lpos$~leaf and is thus misclassified.

\textbf{Lower Bound:}
Recall that~$d=2\cdot \kappa=\ell$ and~$\delta_{\max}=6$.
Since \textsc{Multicolored Clique} is \W[1]-hard with respect to~$\kappa$~\cite{CyFoKoLoMaPiPiSa2015}, we obtain that \Rais is \W[1]-hard with respect to~$d+\ell$ even if~$\delta_{\max}=6$.
Furthermore, since \textsc{Multicolored Clique} cannot be solved in $f(\kappa)\cdot n^{o(\kappa)}$~time unless the ETH fails~\cite{CyFoKoLoMaPiPiSa2015}, we observe that \Rais cannot be solved in $f(d+\ell)\cdot |I|^{o(d+\ell)}$~time if the ETH is true, where~$|I|$ is the overall instance size, even if~$\delta_{\max}=6$.

\textbf{Adaptation for Reasonable Trees:}
We do an analog adaption as in \Cref{thm-rais-w-hard-k}:
First, we extend the classification instance.
Basically, we add one example for each leaf in~$T$ which ends up in that specific leaf.
For cut~$d_i^<<a_i$ we add an example~$e$ such that~$e[d_i^<]=a_i-1$, $e[d_z^<]=p+1$ for each~$z\in[\kappa]\setminus\{i\}$, and~$e[d_z^<]=0$ for each~$z\in[\kappa]$.
For cut~$d_i^>>a_i$ we add an example~$e$ such that~$e[d_z^<]=p+1$ for each~$z\in[\kappa]$, $e[d_i^>]=a_i+1$, and~$e[d_z^>]=0$ for each~$z\in[\kappa]\setminus\{e\}$.
Note that all enforcing examples end up in the unique $\lneg$~leaf.

Next, we add a new binary feature~$d^*$ and we add $(t+1)$~new $\lpos$~examples~$e^*$ which have the same thresholds in all features.
More precisely, $e^*$ has threshold~$1$ in~$d^*$ and uses the default threshold in each remaining feature.
All other existing examples have threshold~0 in the new feature~$d^*$.
Example~$e_v$ has value~1 in the feature corresponding to vertex~$v$ and in the new feature~$d^*$; in all other features (corresponding to any other vertex) $e_v$ has value~$0$.
Also, we add a new $\lpos$~example~$e^*$ for which~$e^*[v]=0$ for all~$v\in V(G)$ and~$e^*[d^*]=1$
Furthermore, all existing examples have value~$0$ in~$d^*$.

Now, observe that the newly added cut~$d^*\le 0$ cannot be pruned since otherwise the $(t+1)$~newly added examples would be misclassified.
Afterwards, the correctness can be shown analogously.
Note that this adaption increased~$d$ and~$\ell$ by one and does not change~$\delta_{\text{max}}$.
\end{proof}
%}

\ifjour
Observe that in the problems \Raisct and \MRaisct we have~$c_T\le d$.
Thus, we obtain the following from \Cref{thm-rais-w-hard-d+l}.

\begin{corollary}
Even if~$\delta_{\max}=6$, both \Raisct and \MRaisct are \W[1]-hard for~$c_T+d+\ell$ and, unless the ETH is false, they cannot be solved in $f(c_T+d+\ell)\cdot |I|^{o(c_T+d+\ell)}$~time.
\end{corollary}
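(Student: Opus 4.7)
The plan is to obtain the corollary as a direct consequence of \Cref{thm-rais-w-hard-d+l} by re-interpreting the constructed instance as an instance of \Raisct\ (respectively \MRaisct). The key observation is that for any decision tree, the number of distinct features appearing on any root-to-leaf path is trivially bounded by the total number of features, that is, $c_T \le d$ in general.

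\textbf{Key steps.} First, I would take the reduction from \Cref{thm-rais-w-hard-d+l} verbatim, producing from an instance of \textsc{Multicolored Clique} the (reasonable) decision tree $T$, the example set, and the numbers $\ell = 2\kappa + 1$ and $t = M - \binom{\kappa}{2}$. Second, I would record that in this construction the number of features is $d = 2\kappa + 1$ (after the reasonableness adaptation) and therefore trivially $c_T \le d = 2\kappa + 1$ on every root-to-leaf path of $T$. Thus, setting the $c_T$-budget of the \Raisct/\MRaisct\ instance equal to $d$ yields a valid instance whose YES/NO answer coincides with that of the \Rais/\MRais\ instance, since no constraint is actually imposed. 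Third, I would observe that $c_T + d + \ell = \bigO(\kappa)$, so the W[1]-hardness of \textsc{Multicolored Clique} parameterized by $\kappa$ together with the $f(\kappa)\cdot n^{o(\kappa)}$ ETH lower bound transfer immediately, giving the claimed W[1]-hardness for $c_T + d + \ell$ and the lower bound $f(c_T+d+\ell)\cdot |I|^{o(c_T+d+\ell)}$. Finally, I would note that $\delta_{\max} = 6$ is preserved since we do not modify the example set.

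\textbf{Main obstacle.} There is essentially no technical obstacle; the only subtlety is making sure that the semantics of the $c_T$ constraint in \Raisct/\MRaisct\ permits the choice $c_T = d$ (the constrained problem becomes the unconstrained one in this regime), which follows directly from the definition since $c_T$ upper-bounds a quantity that is itself at most $d$. Hence the reduction from \Cref{thm-rais-w-hard-d+l} carries over without modification, and no new construction or correctness argument is needed.
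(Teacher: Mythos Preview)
Your proposal is correct and takes essentially the same approach as the paper: the paper's entire proof of this corollary is the one-line observation that in \Raisct\ and \MRaisct\ we have $c_T \le d$, so the hardness follows immediately from \Cref{thm-rais-w-hard-d+l}. You have spelled this out in more detail (including the explicit choice $c_T = d$ to make the constraint vacuous and the verification that $c_T + d + \ell = \bigO(\kappa)$), but the underlying idea is identical.
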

\fi

Recall that in \Cref{thm-rais-fpt-k-d} we showed that \Rais is FPT with respect to~$k+d$.
By adapting the proof of \Cref{thm-rais-w-hard-d+l} slightly, we show that this is unlikely for \MRais.

\begin{proposition}%[\appref{prop-mrais-w-hard-k=0}]
\label{prop-mrais-w-hard-k=0}
\MRais is \W[1]-hard for~$d$ even if~$k=0$ and~$\delta_{\max}=6$.
\end{proposition}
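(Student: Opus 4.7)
The plan is to reuse the reduction from the proof of \Cref{thm-rais-w-hard-d+l} essentially verbatim, but set $k = 0$ and drop the bound $\ell$. The resulting instance of \MRais\ has $d = 2\kappa$ and $\delta_{\max} = 6$, and if correctness goes through we obtain \W[1]-hardness by reduction from Multicolored Clique parameterized by~$\kappa$. The forward direction of correctness is unchanged: a multicolored clique yields the same pruned tree with exactly $t = M - \binom{\kappa}{2}$ errors, using $2\kappa \ge 0 = k$ raising operations.

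The main task is to redo the backward direction without the $\ell = 2\kappa$ constraint that was previously used to force exactly one cut per feature. Let $B_i \subseteq [p]$ and $C_i \subseteq [p]$ be the preserved thresholds in features $d_i^<$ and $d_i^>$, and let $b_i = \max B_i$, $c_i = \min C_i$. As in Step~1 of the original proof, if some $B_i$ or $C_i$ is empty, or the unique $\lneg$ leaf is pruned, then at least $M$ forcing or enforcing examples are misclassified, which exceeds $t$ whenever $\kappa \ge 2$. So we may assume $B_i, C_i$ are nonempty for all $i$.

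Set $I_i = [b_i, c_i] \cap \mathds{N}$ and $s_i = |I_i|$ (so $s_i = 0$ if $b_i > c_i$). A direct inspection of classification paths, analogous to Step~2 of the proof of \Cref{thm-rais-w-hard-d+l}, shows that (i)~an edge example $e(v_i^{a_i}, v_j^{a_j})$ reaches the $\lneg$ leaf iff $a_i \in I_i$ and $a_j \in I_j$, and (ii)~a separation example in $B(i, a)$ reaches the $\lneg$ leaf iff $a \in [b_i, c_i - 1]$. Writing $\sigma = \sum_i (s_i - 1)_+$ and letting $E'$ denote the number of correctly classified edge examples, the total number of errors is at least $\sigma M + (M - E')$, so the error bound $t = M - \binom{\kappa}{2}$ forces $E' \geq \sigma M + \binom{\kappa}{2}$.

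The crux---and the main obstacle that was previously sidestepped by the $\ell$ bound---is to show that $\sigma = 0$. Since there are only $M$ edge examples, $E' \le M$; combined with $E' \ge \sigma M + \binom{\kappa}{2}$, any $\sigma \ge 1$ would give $M + \binom{\kappa}{2} \le M$, impossible for $\kappa \ge 2$. Hence $\sigma = 0$, so $s_i \in \{0, 1\}$ for all $i$. If some $s_i = 0$, then no correctly classified edge example can involve color~$i$, giving $E' \le \binom{\kappa-1}{2} < \binom{\kappa}{2}$, again a contradiction. Therefore each color class selects exactly one vertex $v_i^{a_i}$, and $E' \ge \binom{\kappa}{2}$ implies these form a multicolored clique.
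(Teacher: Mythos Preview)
Your proof is correct and takes a genuinely different route from the paper's for the key backward-direction step. The paper handles the absence of the $\ell$-bound by a \emph{normalization} argument: whenever two cuts $d_i^<\!<\!a$ and $d_i^<\!<\!b$ with $a<b$ are both preserved, every example going left at the first would also go left at the second (into another $\lpos$ leaf), so raising the cut at $a$ leaves the error set unchanged; iterating yields a tree with exactly one cut per feature, after which the original Steps~2 and~3 of \Cref{thm-rais-w-hard-d+l} apply verbatim. You instead observe directly that classification depends only on the extreme preserved thresholds $b_i,c_i$, write a closed-form lower bound $\sigma M+(M-E')$ on the error count, and let the separation-example penalty do the work: $\sigma\ge 1$ is impossible since $E'\le M$, and $s_i=0$ is impossible since then $E'\le\binom{\kappa-1}{2}$. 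Your argument is more self-contained and reaches the clique conclusion without re-invoking Steps~2--3; the paper's is more modular, reducing cleanly to the already-established backward direction. One small inaccuracy: with the reasonability modification the construction has $d=2\kappa+1$, not $2\kappa$, though this does not affect the hardness claim.
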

%\appendixproof{prop-mrais-w-hard-k=0}{
\begin{proof}
The proof is almost identical to the proof of \Cref{thm-rais-w-hard-d+l} for both non-reasonable and reasonable trees.
More precisely, we use the same construction.
Moreover, the $(\Rightarrow)$~direction of the correctness works analogously.
The $(\Leftarrow)$~direction of the correctness is shown with the same three steps.
Now, however, Step~1, that is, exactly one cut per feature is preserved, is more involved since we cannot exploit anymore that exactly $2\cdot\kappa$~cuts are preserved.
After we have verified Step~1, the remaining proof works analogously.
Thus, it remains to show that exactly one cut per feature is preserved.

Assume that in the resulting tree~$T'$ at least two cuts in one feature are preserved.
Without loss of generality assume that this is the case in feature~$d_i^<$, that is, $T'$ contains two cuts~$d_i^<<a$ and~$d_i^<<b$ for some~$a<b$.
Furthermore, assume without loss of generality that no other cut in~$d_i^<$ between~$a$ and~$b$ is preserved.
Observe that all examples~$e$ with~$e[d_i^<]<a$ end up in a $\lpos$~leaf and also all examples~$e'$ with~$a\le d_i^<[e']<b$ end up in a $\lpos$~leaf.
Since~$d_i^<<a$ is the parent of~$d_i^<<b$ in~$T'$, raising~$d_i^<<a$ leads to a smaller tree~$T''$ which misclassifies the exact same set of examples.
By applying this argument iteratively, we obtain a tree~$T^*$ with exactly one cut in each feature and thus Step~1 is verified.
\end{proof}
%}

Finally, we show that the combination of~$d$ and~$t$ is unlikely to yield an FPT-algorithm.

\begin{theorem}
%[\appref{thm-rais-w-hard-d-t=0}]
\label{thm-rais-w-hard-d-t=0}
Even if~$t=0$ and~$\delta_{\max}=6$, both \Rais and \MRais are \W[1]-hard for~$d$ and, unless the ETH is false, they cannot be solved in $f(d)\cdot |I|^{o(d)}$~time.
\end{theorem}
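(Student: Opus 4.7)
The plan is to reduce from \textsc{Multicolored Clique} and adapt the construction of \Cref{thm-rais-w-hard-d+l} to the $t = 0$ setting. \textsc{Multicolored Clique} is \W[1]-hard parameterized by $\kappa$ and, unless the ETH fails, cannot be solved in $f(\kappa) \cdot n^{o(\kappa)}$~time~\cite{CyFoKoLoMaPiPiSa2015}, so producing an instance with $d = \Theta(\kappa)$ yields the claimed \W[1]-hardness for $d$ and the matching ETH lower bound. I would reuse the $2\kappa$ features $d_i^<, d_i^>$, the path-shaped input tree (ascending thresholds in $d_i^<$ followed by descending ones in $d_i^>$), and the forcing, enforcing, and separation examples from that proof; by the arguments already given there, these ensure that any pruning with zero misclassifications preserves at least one cut per feature and satisfies the half-integer matching condition between the rightmost kept threshold in $d_i^<$ and the leftmost in $d_i^>$.

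The central modification is to replace the $\lneg$ edge examples---whose role in the original proof was to be partially misclassified up to a budget of $M - \binom{\kappa}{2}$---by $\lpos$ \emph{non-edge examples}: for every pair of color classes $i \neq j$ and every $(v_i^a, v_j^b) \in V_i \times V_j$ with $\{v_i^a, v_j^b\} \notin E(G)$, add a $\lpos$ example with $e[d_i^<] = e[d_i^>] = a$, $e[d_j^<] = e[d_j^>] = b$, and default values elsewhere. Such a non-edge example reaches the unique $\lneg$ leaf exactly when the preserved cuts in features $i$ and $j$ simultaneously ``select'' $v_i^a$ and $v_j^b$; hence $t = 0$ forces the per-class selections to be pairwise adjacent, i.e.\ to form a multicolored clique.

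The main obstacle I expect is ruling out degenerate prunings that trivially achieve $t = 0$ by keeping, in some feature $i$, cuts whose thresholds make the ``selected range'' empty, so that no example reaches the $\lneg$ leaf via feature $i$ and the non-edge constraint becomes vacuous. To exclude these, I would add per-class \emph{occupancy examples}---a constant number of carefully placed $\lneg$ and $\lpos$ examples using one shared auxiliary feature across all classes---that, together with the separation examples, force the selected range in every feature to be a non-empty singleton corresponding to a vertex choice $v_i^{a_i}$. Designing the occupancy gadget so that $\delta_{\max}$ stays at $6$ and $d$ stays in $O(\kappa)$ is the delicate part of the construction and will require the bulk of the verification.

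Once the occupancy gadget is verified, the $(\Rightarrow)$ direction is immediate: from a multicolored clique $\{v_i^{a_i}\}_{i \in [\kappa]}$, prune to keep exactly the cuts $d_i^<<a_i$ and $d_i^>>a_i$. The $(\Leftarrow)$ direction mirrors Steps~1--3 of the proof of \Cref{thm-rais-w-hard-d+l}, with the occupancy gadget replacing the $\ell$-bound to force the Case~1 configuration in every feature, identifying a vertex per class, and the non-edge examples then forcing the selected vertices to form a clique. Reasonability of the input tree is handled by the extra binary-feature construction of \Cref{thm-rais-w-hard-k}, and the statement for \MRais follows from the one for \Rais by setting $k$ so that the resulting pruned tree has the desired size, as in \Cref{thm-rais-w-hard-d+l}.
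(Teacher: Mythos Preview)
Your proposal has a genuine gap: the ``occupancy gadget'' you defer is not a detail but the crux of the argument, and with the tree of \Cref{thm-rais-w-hard-d+l} it does not appear designable under your stated constraints. With the all-$\lpos$-leaves path tree, $\ell=2\kappa$, and $t=0$, the degenerate configuration you worry about (the preserved threshold in $d_i^>$ strictly below that in $d_i^<$, so the ``selected range'' for class~$i$ is empty) produces zero errors on every forcing, enforcing, separation, and non-edge example. To force non-emptiness you would need either a $\lneg$ example that must reach the unique $\lneg$~leaf through class~$i$---but then its value in $d_i^<,d_i^>$ is fixed and pins the selected vertex rather than merely asserting existence---or a $\lpos$ example that reaches the $\lneg$~leaf precisely when the range is empty, which is impossible since an empty range diverts every example at the $d_i^<$ or $d_i^>$ cut. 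Adding one shared auxiliary feature does not help: any cut there acts uniformly across classes and cannot encode the per-class non-emptiness predicate. You have not supplied a gadget, and there is no evident one.

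The paper sidesteps this obstacle with a substantially different construction. It reduces from \textsc{Multicolored Independent Set} rather than \textsc{Clique}; the path tree has leaf labels that \emph{alternate} $\lneg/\lpos$ within each feature block; and for every vertex there is a $\lneg$ choice example $c_i^x$ at value~$x$, interleaved with the $\lpos$ separating examples at half-integers. Because consecutive examples along a block have opposite labels, every adjacent pair must be separated by one of two symmetric cuts (one in $d_i^<$, one in $d_i^>$); setting $\ell=2(p-1)\kappa+1$ (large, not bounded in~$\kappa$) makes it exactly one per pair, and a short no-gaps argument then shows the preserved cuts in $d_i^<$ form an initial segment ending at an integer, pinning a single vertex $v_i^{a_i}$. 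One extra binary feature $d^*$ sends any $\lneg$ edge example whose two endpoints are both selected into a $\lpos$~leaf, so $t=0$ forces the selection to be an independent set. The singleton-selection property is thus a structural consequence of the alternating labels and the $\ell$-count; no separate occupancy gadget is needed.
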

%\appendixproof{thm-rais-w-hard-d-t=0}{
\begin{proof}

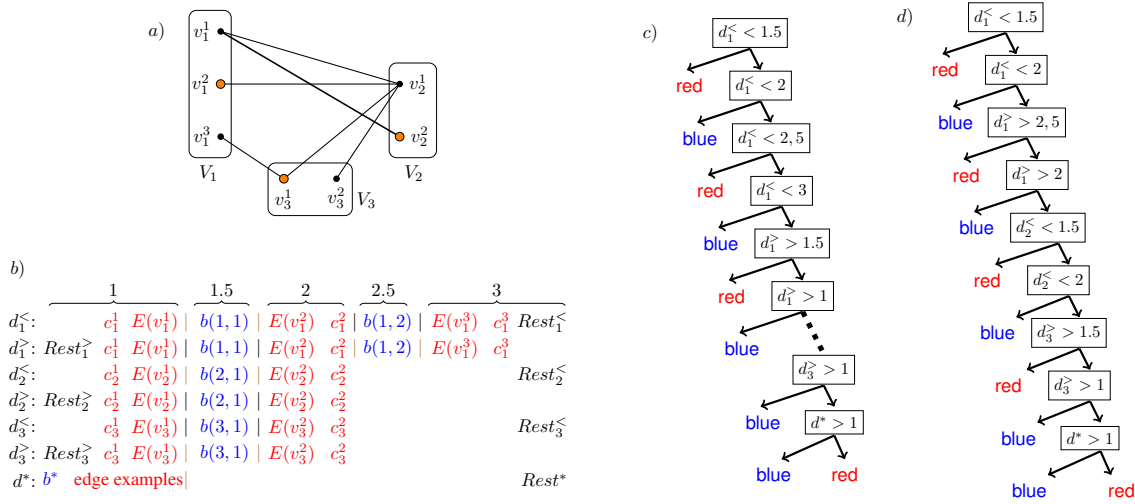
\begin{figure*}[t]

%%%%%%%%%%%%%%%%%%%%%%%%%%%%%%%%%%%%%%%%%%%%%%%%%%%%%%%%%%%%%%%%%%%%%%%
%%% Multicolored Independent Set Instance %%%%%%%%%%%%%%%%%%%%%%%%%%%%%
%%%%%%%%%%%%%%%%%%%%%%%%%%%%%%%%%%%%%%%%%%%%%%%%%%%%%%%%%%%%%%%%%%%%%%%
\begin{minipage}{0.5\textwidth}
\centering
\scalebox{0.7}{
\begin{tikzpicture}%[scale=0.50]
\node[label=left:{$a)$}](start) at (-1, 1) {};

\node[label=left:{$v_1^1$}](v11) at (-0.2, 1) [shape = circle, draw, fill=black, scale=0.07ex]{};
\node[label=left:{$v_1^2$}](v12) at (-0.2, 0) [shape = circle, draw, fill=orange, scale=0.11ex]{};
\node[label=left:{$v_1^3$}](v13) at (-0.2, -1) [shape = circle, draw, fill=black, scale=0.07ex]{};
\draw[rounded corners] (-0.8, -1.4) rectangle (0.,1.4) {};
\node[label=left:{$V_1$}](start) at (-0.0, -1.7) {};

\node[label=right:{$v_2^1$}](v22) at (3.2, 0) [shape = circle, draw, fill=black, scale=0.07ex]{};
\node[label=right:{$v_2^2$}](v23) at (3.2, -1) [shape = circle, draw, fill=orange, scale=0.11ex]{};
\draw[rounded corners] (3.0, -1.4) rectangle (3.9, 0.4) {};
\node[label=left:{$V_2$}](start) at (3.9, -1.7) {};

\node[label=below:{$v_3^1$}](v31) at (1.0, -1.8) [shape = circle, draw, fill=orange, scale=0.11ex]{};
\node[label=below:{$v_3^2$}](v32) at (2.0, -1.8) [shape = circle, draw, fill=black, scale=0.07ex]{};
\draw[rounded corners] (0.7, -2.5) rectangle (2.3, -1.5) {};
\node[label=left:{$V_3$}](start) at (3.0, -2.2) {};

\path [-,line width=0.2mm](v11) edge (v22);
\path [-,line width=0.2mm](v12) edge (v22);
\path [-,line width=0.3mm](v11) edge (v23);
\path [-,line width=0.2mm](v13) edge (v31);
\path [-,line width=0.2mm](v22) edge (v31);
\path [-,line width=0.2mm](v22) edge (v32);
\end{tikzpicture}
}
%\end{minipage}
%%%%%%%%%%%%%%%%%%%%%%%%%%%%%%%%%%%%%%%%%%%%%%%%%%%%%%%%%%%%%%%%%%%%%%%
%%% Visualization of data set %%%%%%%%%%%%%%%%%%%%%%%%%%%%%%%%%%%%%%%%%
%%%%%%%%%%%%%%%%%%%%%%%%%%%%%%%%%%%%%%%%%%%%%%%%%%%%%%%%%%%%%%%%%%%%%%%
%\begin{minipage}{0.36\textwidth}
%\centering
\\[3ex]

\scalebox{0.7}{
\begin{tikzpicture}%[scale=0.50]
\def\abstand{-0.5} 

\node[label=left:{$b)$}](start) at (-1.9, 1) {};

\node[label=left:{$d_1^{<}{:}$}](v11) at (-1.7, 0*\abstand) {};
\node[label=left:{\textcolor{red}{$E(v_1^1)$}}](v11) at (1, 0*\abstand) {};
\node[label=left:{\textcolor{red}{$c_1^1$}}](v11) at (-0.1, 0*\abstand) {};
\node[label=left:{\textcolor{blue}{$b(1,1)$}}](v11) at (2.35, 0*\abstand) {};
\node[label=left:{\textcolor{red}{$E(v_1^2)$}}](v11) at (3.6, 0*\abstand) {};
\node[label=left:{\textcolor{red}{$c_1^2$}}](v11) at (4.2, 0*\abstand) {};
\node[label=left:{\textcolor{blue}{$b(1,2)$}}](v11) at (5.45, 0*\abstand) {};
\node[label=left:{\textcolor{red}{$E(v_1^3)$}}](v11) at (6.7, 0*\abstand) {};
\node[label=left:{\textcolor{red}{$c_1^3$}}](v11) at (7.3, 0*\abstand) {};
\node[label=left:{$Rest_1^<$}](v11) at (8.4, 0*\abstand) {};

\node[label=left:{$d_1^{>}{:}$}](v11) at (-1.7, 1*\abstand) {};
\node[label=left:{$Rest_1^>$}](v11) at (-0.6, 1*\abstand) {};
\node[label=left:{\textcolor{red}{$c_1^1$}}](v11) at (-0.1, 1*\abstand) {};
\node[label=left:{\textcolor{red}{$E(v_1^1)$}}](v11) at (1, 1*\abstand) {};
\node[label=left:{\textcolor{blue}{$b(1,1)$}}](v11) at (2.35, 1*\abstand) {};
\node[label=left:{\textcolor{red}{$E(v_1^2)$}}](v11) at (3.6, 1*\abstand) {};
\node[label=left:{\textcolor{red}{$c_1^2$}}](v11) at (4.2, 1*\abstand) {};
\node[label=left:{\textcolor{blue}{$b(1,2)$}}](v11) at (5.45, 1*\abstand) {};
\node[label=left:{\textcolor{red}{$E(v_1^3)$}}](v11) at (6.7, 1*\abstand) {};
\node[label=left:{\textcolor{red}{$c_1^3$}}](v11) at (7.3, 1*\abstand) {};

\node[label=left:{$d_2^{<}{:}$}](v11) at (-1.7, 2*\abstand) {};
\node[label=left:{\textcolor{red}{$c_2^1$}}](v11) at (-0.1, 2*\abstand) {};
\node[label=left:{\textcolor{red}{$E(v_2^1)$}}](v11) at (1, 2*\abstand) {};
\node[label=left:{\textcolor{blue}{$b(2,1)$}}](v11) at (2.35, 2*\abstand) {};
\node[label=left:{\textcolor{red}{$E(v_2^2)$}}](v11) at (3.6, 2*\abstand) {};
\node[label=left:{\textcolor{red}{$c_2^2$}}](v11) at (4.2, 2*\abstand) {};
%\node[label=left:{\textcolor{blue}{$b(2,2)$}}](v11) at (5.45, 2*\abstand) {};
%\node[label=left:{\textcolor{red}{$E(v_2^3)$}}](v11) at (6.7, 2*\abstand) {};
%\node[label=left:{\textcolor{red}{$c_2^3$}}](v11) at (7.3, 2*\abstand) {};
\node[label=left:{$Rest_2^<$}](v11) at (8.4, 2*\abstand) {};

\node[label=left:{$d_2^{>}{:}$}](v11) at (-1.7, 3*\abstand) {};
\node[label=left:{$Rest_2^>$}](v11) at (-0.6, 3*\abstand) {};
\node[label=left:{\textcolor{red}{$c_2^1$}}](v11) at (-0.1, 3*\abstand) {};
\node[label=left:{\textcolor{red}{$E(v_2^1)$}}](v11) at (1, 3*\abstand) {};
\node[label=left:{\textcolor{blue}{$b(2,1)$}}](v11) at (2.35, 3*\abstand) {};
\node[label=left:{\textcolor{red}{$E(v_2^2)$}}](v11) at (3.6, 3*\abstand) {};
\node[label=left:{\textcolor{red}{$c_2^2$}}](v11) at (4.2, 3*\abstand) {};
%\node[label=left:{\textcolor{blue}{$b(2,2)$}}](v11) at (5.45, 3*\abstand) {};
%\node[label=left:{\textcolor{red}{$E(v_2^3)$}}](v11) at (6.7, 3*\abstand) {};
%\node[label=left:{\textcolor{red}{$c_2^3$}}](v11) at (7.3, 3*\abstand) {};

\node[label=left:{$d_3^{<}{:}$}](v11) at (-1.7, 4*\abstand) {};
\node[label=left:{\textcolor{red}{$c_3^1$}}](v11) at (-0.1, 4*\abstand) {};
\node[label=left:{\textcolor{red}{$E(v_3^1)$}}](v11) at (1, 4*\abstand) {};
\node[label=left:{\textcolor{blue}{$b(3,1)$}}](v11) at (2.35, 4*\abstand) {};
\node[label=left:{\textcolor{red}{$E(v_3^2)$}}](v11) at (3.6, 4*\abstand) {};
\node[label=left:{\textcolor{red}{$c_3^2$}}](v11) at (4.2, 4*\abstand) {};
%\node[label=left:{\textcolor{blue}{$b(3,2)$}}](v11) at (5.45, 4*\abstand) {};
%\node[label=left:{\textcolor{red}{$E(v_3^3)$}}](v11) at (6.7, 4*\abstand) {};
%\node[label=left:{\textcolor{red}{$c_3^3$}}](v11) at (7.3, 4*\abstand) {};
\node[label=left:{$Rest_3^<$}](v11) at (8.4, 4*\abstand) {};

\node[label=left:{$d_3^{>}{:}$}](v11) at (-1.7, 5*\abstand) {};
\node[label=left:{$Rest_3^>$}](v11) at (-0.6, 5*\abstand) {};
\node[label=left:{\textcolor{red}{$c_3^1$}}](v11) at (-0.1, 5*\abstand) {};
\node[label=left:{\textcolor{red}{$E(v_3^1)$}}](v11) at (1, 5*\abstand) {};
\node[label=left:{\textcolor{blue}{$b(3,1)$}}](v11) at (2.35, 5*\abstand) {};
\node[label=left:{\textcolor{red}{$E(v_3^2)$}}](v11) at (3.6, 5*\abstand) {};
\node[label=left:{\textcolor{red}{$c_3^2$}}](v21) at (4.2, 5*\abstand) {};
%\node[label=left:{\textcolor{blue}{$b(3,2)$}}](v11) at (5.45, 5*\abstand) {};
%\node[label=left:{\textcolor{red}{$E(v_3^3)$}}](v11) at (6.7, 5*\abstand) {};
%\node[label=left:{\textcolor{red}{$c_3^3$}}](v11) at (7.3, 5*\abstand) {};

\node[label=left:{$d^*{:}$}](v11) at (-1.7, 6*\abstand) {};
\node[label=left:{\textcolor{blue}{$b^*$}\hspace{0.2cm} \textcolor{red}{edge examples}}](v11) at (1.1, 6*\abstand) {};
\node[label=left:{$Rest^*$}](v11) at (8.4, 6*\abstand) {};

\draw[decorate, decoration={brace}, yshift=2ex]  (-1.7, 0.0) -- node[above=0.4ex] {$1$}  (0.75, 0.0);
\draw[decorate, decoration={brace}, yshift=2ex]  (1.05, 0.0) -- node[above=0.4ex] {$1.5$}  (2.1, 0.0);
\draw[decorate, decoration={brace}, yshift=2ex]  (2.45, 0.0) -- node[above=0.4ex] {$2$}  (3.9, 0.0);
\draw[decorate, decoration={brace}, yshift=2ex]  (4.20, 0.0) -- node[above=0.4ex] {$2.5$}  (5.1, 0.0);
\draw[decorate, decoration={brace}, yshift=2ex]  (5.5, 0.0) -- node[above=0.4ex] {$3$}  (8.1, 0.0);

\node[label=left:{\textbf{ \textcolor{brown}{$|$}}}](v11) at (1.2, 6*\abstand) {};

\node[label=left:{\textbf{ \textcolor{brown}{$|$}}}](v11) at (1.2, 0*\abstand) {};
\node[label=left:{\textbf{ $|$}}](v11) at (1.2, 1*\abstand) {};
\node[label=left:{\textbf{ \textcolor{brown}{$|$}}}](v11) at (1.2, 2*\abstand) {};
\node[label=left:{\textbf{ $|$}}](v11) at (1.2, 3*\abstand) {};
\node[label=left:{\textbf{ $|$}}](v11) at (1.2, 4*\abstand) {};
\node[label=left:{\textbf{ \textcolor{brown}{$|$}}}](v11) at (1.2, 5*\abstand) {};

\node[label=left:{\textbf{ \textcolor{brown}{$|$}}}](v11) at (2.58, 0*\abstand) {};
\node[label=left:{\textbf{ $|$}}](v11) at (2.58, 1*\abstand) {};
\node[label=left:{\textbf{ \textcolor{brown}{$|$}}}](v11) at (2.58, 2*\abstand) {};
\node[label=left:{\textbf{ $|$}}](v11) at (2.58, 3*\abstand) {};
\node[label=left:{\textbf{ $|$}}](v11) at (2.58, 4*\abstand) {};
\node[label=left:{\textbf{ \textcolor{brown}{$|$}}}](v11) at (2.58, 5*\abstand) {};

\node[label=left:{\textbf{ $|$}}](v11) at (4.4, 0*\abstand) {};
\node[label=left:{\textbf{ \textcolor{brown}{$|$}}}](v11) at (4.4, 1*\abstand) {};
%\node[label=left:{\textbf{ \textcolor{brown}{$|$}}}](v11) at (4.4, 2*\abstand) {};
%\node[label=left:{\textbf{ $|$}}](v11) at (4.4, 3*\abstand) {};
%\node[label=left:{\textbf{ $|$}}](v11) at (4.4, 4*\abstand) {};
%\node[label=left:{\textbf{ \textcolor{brown}{$|$}}}](v11) at (4.4, 5*\abstand) {};

\node[label=left:{\textbf{ $|$}}](v11) at (5.65, 0*\abstand) {};
\node[label=left:{\textbf{ \textcolor{brown}{$|$}}}](v11) at (5.65, 1*\abstand) {};
%\node[label=left:{\textbf{ \textcolor{brown}{$|$}}}](v11) at (5.65, 2*\abstand) {};
%\node[label=left:{\textbf{ $|$}}](v11) at (5.65, 3*\abstand) {};
%\node[label=left:{\textbf{ $|$}}](v11) at (5.65, 4*\abstand) {};
%\node[label=left:{\textbf{ \textcolor{brown}{$|$}}}](v11) at (5.65, 5*\abstand) {};
\end{tikzpicture}%loosely dotted, line width=1mm
}
\end{minipage}
%%%%%%%%%%%%%%%%%%%%%%%%%%%%%%%%%%%%%%%%%%%%%%%%%%%%%%%%%%%%%%%%%%%%%%%
%%% Visualization of initial tree %%%%%%%%%%%%%%%%%%%%%%%%%%%%%%%%%%%%%
%%%%%%%%%%%%%%%%%%%%%%%%%%%%%%%%%%%%%%%%%%%%%%%%%%%%%%%%%%%%%%%%%%%%%%%
\begin{minipage}{0.2\textwidth}
\centering
\scalebox{0.7}{
\begin{tikzpicture}%[scale=0.50]
\def\mydistx{0.2}
\def\mydisty{-1}

\node[label=left:{$c)$}](start) at (-1.5, 0) {};

\node[](d1<1) at (0*\mydistx, 0*\mydisty) [shape = rectangle, draw, scale=0.2ex]{$d_1^<<1.5$};
\node[](d1<2) at (1*\mydistx, 1*\mydisty) [shape = rectangle, draw, scale=0.2ex]{$d_1^<<2$};
\node[](d1<25) at (2*\mydistx, 2*\mydisty) [shape = rectangle, draw, scale=0.2ex]{$d_1^<<2,5$};
\node[](d1<3) at (3*\mydistx, 3*\mydisty) [shape = rectangle, draw, scale=0.2ex]{$d_1^<<3$};

\node[](d1>25) at (4*\mydistx, 4*\mydisty) [shape = rectangle, draw, scale=0.2ex]{$d_1^>>1.5$};
\node[](d1>2) at (5*\mydistx, 5*\mydisty) [shape = rectangle, draw, scale=0.2ex]{$d_1^>>1$};

\node[](d3>>1) at (6*\mydistx + 0.2, 6*\mydisty -0.4) [shape = rectangle, draw, scale=0.2ex]{$d_3^>>1$};
\node[](dstar) at (7*\mydistx + 0.2, 7*\mydisty -0.4) [shape = rectangle, draw, scale=0.2ex]{$d^*>1$};

\draw [->,very thick](0*\mydistx, -0.3 + 0*\mydisty) -- (1*\mydistx, -0.7 + 0*\mydisty);
\draw [->,very thick](1*\mydistx, -0.3 + 1*\mydisty) -- (2*\mydistx, -0.7 + 1*\mydisty);
\draw [->,very thick](2*\mydistx, -0.3 + 2*\mydisty) -- (3*\mydistx, -0.7 + 2*\mydisty);
\draw [->,very thick](3*\mydistx, -0.3 + 3*\mydisty) -- (4*\mydistx, -0.7 + 3*\mydisty);
\draw [->,very thick](4*\mydistx, -0.3 + 4*\mydisty) -- (5*\mydistx, -0.7 + 4*\mydisty);
\draw [loosely dotted, line width=1mm](5*\mydistx, -0.3 + 5*\mydisty) -- (6*\mydistx + 0.2, -0.7 + 5*\mydisty - 0.4);
\draw [->,very thick](7*\mydistx, -0.3 + 6*\mydisty -0.4) -- (8*\mydistx, -0.7 + 6*\mydisty - 0.4);
\draw [->,very thick](8*\mydistx, -0.3 + 7*\mydisty -0.35) -- (9*\mydistx, -0.7 + 7*\mydisty - 0.4);

\draw [->,very thick](0*\mydistx, -0.3 + 0*\mydisty) -- (-1*\mydistx -1, -0.7 + 0*\mydisty);
\draw [->,very thick](1*\mydistx, -0.3 + 1*\mydisty) -- (0*\mydistx -1, -0.7 + 1*\mydisty);
\draw [->,very thick](2*\mydistx, -0.3 + 2*\mydisty) -- (1*\mydistx -1, -0.7 + 2*\mydisty);
\draw [->,very thick](3*\mydistx, -0.3 + 3*\mydisty) -- (2*\mydistx -1, -0.7 + 3*\mydisty);
\draw [->,very thick](4*\mydistx, -0.3 + 4*\mydisty) -- (3*\mydistx -1, -0.7 + 4*\mydisty);
\draw [->,very thick](5*\mydistx, -0.3 + 5*\mydisty) -- (4*\mydistx -1, -0.7 + 5*\mydisty);
\draw [->,very thick](7*\mydistx, -0.3 + 6*\mydisty -0.4) -- (6*\mydistx -1, -0.7 + 6*\mydisty - 0.4);
\draw [->,very thick](8*\mydistx, -0.3 + 7*\mydisty -0.35) -- (8*\mydistx -1, -0.7 + 7*\mydisty - 0.4);

\node[](d1<1) at (0*\mydistx-1.15, 1*\mydisty) {\textcolor{red}{$\lneg$}};
\node[](d1<2) at (1*\mydistx-1.15, 2*\mydisty) {\textcolor{blue}{$\lpos$}};
\node[](d2<1) at (2*\mydistx-1.15, 3*\mydisty) {\textcolor{red}{$\lneg$}};
\node[](d2>1) at (3*\mydistx-1.15, 4*\mydisty) {\textcolor{blue}{$\lpos$}};
\node[](d2<1) at (4*\mydistx-1.15, 5*\mydisty) {\textcolor{red}{$\lneg$}};
\node[](d2>1) at (5*\mydistx-1.15, 6*\mydisty) {\textcolor{blue}{$\lpos$}};
\node[](d3>2) at (7*\mydistx-1.15, 7*\mydisty - 0.4) {\textcolor{blue}{$\lpos$}};
\node[](d3>2) at (8*\mydistx-1.15, 8*\mydisty - 0.4) {\textcolor{blue}{$\lpos$}};
\node[](d3>2) at (8*\mydistx+0.2, 8*\mydisty - 0.4) {\textcolor{red}{$\lneg$}};
\end{tikzpicture}
}
\end{minipage}
%%%%%%%%%%%%%%%%%%%%%%%%%%%%%%%%%%%%%%%%%%%%%%%%%%%%%%%%%%%%%%%%%%%%%%%
%%% Visualization of optimal resulting tree %%%%%%%%%%%%%%%%%%%%%%%%%%%
%%%%%%%%%%%%%%%%%%%%%%%%%%%%%%%%%%%%%%%%%%%%%%%%%%%%%%%%%%%%%%%%%%%%%%%
\begin{minipage}{0.2\textwidth}
\centering
\scalebox{0.7}{
\begin{tikzpicture}%[scale=0.50]
\def\mydistx{0.2}
\def\mydisty{-1}

\node[label=left:{$d)$}](start) at (-1.5, 0) {};

\node[](d1<1) at (0*\mydistx, 0*\mydisty) [shape = rectangle, draw, scale=0.2ex]{$d_1^<<1.5$};
\node[](d1<2) at (1*\mydistx, 1*\mydisty) [shape = rectangle, draw, scale=0.2ex]{$d_1^<<2$};
\node[](d1<25) at (2*\mydistx, 2*\mydisty) [shape = rectangle, draw, scale=0.2ex]{$d_1^>>2,5$};
\node[](d1<3) at (3*\mydistx, 3*\mydisty) [shape = rectangle, draw, scale=0.2ex]{$d_1^>>2$};
\node[](d3>>1) at (4*\mydistx, 4*\mydisty) [shape = rectangle, draw, scale=0.2ex]{$d_2^<<1.5$};
\node[](d3>>1) at (5*\mydistx, 5*\mydisty) [shape = rectangle, draw, scale=0.2ex]{$d_2^<<2$};
\node[](d3>>1) at (6*\mydistx, 6*\mydisty) [shape = rectangle, draw, scale=0.2ex]{$d_3^>>1.5$};
\node[](d3>>1) at (7*\mydistx, 7*\mydisty) [shape = rectangle, draw, scale=0.2ex]{$d_3^>>1$};
\node[](dstar) at (8*\mydistx, 8*\mydisty) [shape = rectangle, draw, scale=0.2ex]{$d^*>1$};

\draw [->,very thick](0*\mydistx, -0.3 + 0*\mydisty) -- (1*\mydistx, -0.7 + 0*\mydisty);
\draw [->,very thick](1*\mydistx, -0.3 + 1*\mydisty) -- (2*\mydistx, -0.7 + 1*\mydisty);
\draw [->,very thick](2*\mydistx, -0.3 + 2*\mydisty) -- (3*\mydistx, -0.7 + 2*\mydisty);
\draw [->,very thick](3*\mydistx, -0.3 + 3*\mydisty) -- (4*\mydistx, -0.7 + 3*\mydisty);
\draw [->,very thick](5*\mydistx, -0.3 + 4*\mydisty) -- (6*\mydistx, -0.7 + 4*\mydisty);
\draw [->,very thick](6*\mydistx, -0.3 + 5*\mydisty) -- (7*\mydistx, -0.7 + 5*\mydisty);
\draw [->,very thick](7*\mydistx, -0.3 + 6*\mydisty) -- (8*\mydistx, -0.7 + 6*\mydisty);
\draw [->,very thick](8*\mydistx, -0.3 + 7*\mydisty) -- (9*\mydistx, -0.7 + 7*\mydisty);
\draw [->,very thick](9*\mydistx, -0.3 + 8*\mydisty + 0.05) -- (10*\mydistx, -0.7 + 8*\mydisty);

\draw [->,very thick](0*\mydistx, -0.3 + 0*\mydisty) -- (-1*\mydistx -1, -0.7 + 0*\mydisty);
\draw [->,very thick](1*\mydistx, -0.3 + 1*\mydisty) -- (0*\mydistx -1, -0.7 + 1*\mydisty);
\draw [->,very thick](2*\mydistx, -0.3 + 2*\mydisty) -- (1*\mydistx -1, -0.7 + 2*\mydisty);
\draw [->,very thick](3*\mydistx, -0.3 + 3*\mydisty) -- (2*\mydistx -1, -0.7 + 3*\mydisty);
\draw [->,very thick](5*\mydistx, -0.3 + 4*\mydisty) -- (4*\mydistx -1, -0.7 + 4*\mydisty);
\draw [->,very thick](6*\mydistx, -0.3 + 5*\mydisty) -- (6*\mydistx -1, -0.7 + 5*\mydisty);
\draw [->,very thick](7*\mydistx, -0.3 + 6*\mydisty) -- (7*\mydistx -1, -0.7 + 6*\mydisty);
\draw [->,very thick](8*\mydistx, -0.3 + 7*\mydisty) -- (8*\mydistx -1, -0.7 + 7*\mydisty);
\draw [->,very thick](9*\mydistx, -0.3 + 8*\mydisty + 0.05) -- (9*\mydistx -1, -0.7 + 8*\mydisty);

\node[](d1<1) at (0*\mydistx-1.15, 1*\mydisty) {\textcolor{red}{$\lneg$}};
\node[](d1<2) at (1*\mydistx-1.15, 2*\mydisty) {\textcolor{blue}{$\lpos$}};
\node[](d2<1) at (2*\mydistx-1.15, 3*\mydisty) {\textcolor{red}{$\lneg$}};
\node[](d2>1) at (3*\mydistx-1.15, 4*\mydisty) {\textcolor{blue}{$\lpos$}};;
\node[](d3>2) at (4*\mydistx-1.15, 5*\mydisty) {\textcolor{red}{$\lneg$}};
\node[](d3>2) at (5*\mydistx-1.15, 6*\mydisty) {\textcolor{blue}{$\lpos$}};
\node[](d3>2) at (6*\mydistx-1.15, 7*\mydisty) {\textcolor{red}{$\lneg$}};
\node[](d3>2) at (7*\mydistx-1.15, 8*\mydisty) {\textcolor{blue}{$\lpos$}};
\node[](d3>2) at (8*\mydistx-1.15, 9*\mydisty) {\textcolor{blue}{$\lpos$}};
\node[](d3>2) at (10*\mydistx+0.2, 9*\mydisty) {\textcolor{red}{$\lneg$}};
\end{tikzpicture}
}
\end{minipage}
\caption{
A visualization of the reduction from the proof of \Cref{thm-rais-w-hard-d-t=0}. 
$a)$ shows a \textsc{Multicolored Independent Set} instance.
For the sake of the illustration, the property that all partite sets have the same size is dropped. 
A multicolored independent set is depicted in orange. 
$b)$ shows the corresponding classification instance. 
Here, $E(v_i^j)$ is the set of all edges incident with vertex~$v_i^j$. $Rest_i^<$, $Rest_i^>$, and~$Rest^*$ refers to all other examples not shown in that feature (the precise set differs in each feature and is always a subset of all examples having the default threshold of that feature). Cuts of the input tree~$T$ are shown by~``$|$'' and cuts that remain in the optimal raised tree~$T'$ are depicted in brown. 
$c)$ shows parts of the input tree~$T$. $d)$ shows the optimal raised tree~$T'$ except the non-pruned cuts in features~$d_2^<$ and~$d_2^>$.}
\label{fig-lift-hard-d-t=0}
\end{figure*}

We only show the statement for \Rais.
The statement for \MRais then follows since no more than $k$~inner nodes can be pruned without having at least~1 error.

We reduce from \textsc{Multicolored Independent Set} where each color class has the same number~$p$ of vertices.
Formally, the input is a graph~$G$, and~$\kappa\in\mathds{N}$, where the vertex set~$V(G)$ of $N$ vertices is partitioned into~$V_1,\ldots, V_\kappa$ and~$|V_i|=p$ for each~$i\in[\kappa]$.
More precisely, $V_i\coloneqq\{v_i^1,v_i^2,\ldots, v_i^p\}$ and~$p\cdot \kappa= N$.
The question is whether~$G$ contains an independent set consisting of exactly one vertex per class~$V_i$.
\textsc{Multicolored Independent Set} is \W[1]-hard parameterized by~$\kappa$ and cannot be solved in $f(\kappa)\cdot n^{o(\kappa)}$~time unless the ETH fails~\cite{CyFoKoLoMaPiPiSa2015}.

As in the proof of \Cref{thm-rais-w-hard-d+l}, the property that all color classes have the same number of vertices is only used to simplify the proof.

\textbf{Outline:}
The idea is to create two features~$d_i^<$ and~$d_i^>$ per color class~$i$ such that the preserved cuts in the pruned tree~$T'$ correspond to a vertex selection in~$V_i$.
We achieve this as follows:
For each pair~$d_i^<$ and~$d_i^>$ of features we create examples which can only be separated in these two features and which have labels~$\lpos$ (\emph{separating examples}) and~$\lneg$ (\emph{choice examples}) alternatingly.
Hence, for each possible threshold~$x$ in features~$d_i^<$ and~$d_i^>$, we either need to preserve cut~$(d_i^<,x)$ or cut~$(d_i^>,x)$.
Furthermore, for each edge we create a~$\lneg$ \emph{edge example}.
If vertex~$v_i^j\in V_i$ is selected, then all edge examples corresponding to edges having an endpoint in color class~$i$ which is \emph{not}~$v_i$ will then be correctly classified by the pruned tree~$T'$.
Thus, we can only correctly classify an edge example if we \emph{do not} select at least one endpoint of the corresponding edge.
Finally, we have another feature~$d^*$ with only~2 thresholds to ensure that all $\lneg$~choice examples corresponding to selected vertices are correctly classified by the pruned tree~$T'$ and that an edge example gets misclassified as~$\lpos$ if we select both endpoints of the corresponding edge.

\textbf{Construction:}
We first show the statement for non-reasonable decision trees and afterwards we argue how the construction has to be adapted such that the input decision tree is reasonable.

\emph{Description of the data set:}
A visualization is shown in part~$b)$ of \Cref{fig-lift-hard-d-t=0}.

\begin{itemize}

\item For each edge~$\{v_i^x,v_j^z\}\in E(G)$ we add an \emph{edge example}~$e(v_i^x,v_j^z)$.
To all these examples we assign label~$\lneg$.

\item For each~$i\in[\kappa]$ and each~$x\in[p-1]$ we add a $\lpos$~\emph{separating example}~$b(i,x)$.

\item For vertex~$v_i^x\in V_i$ we create a $\lneg$~\emph{choice example}~$c_i^x$.

\item we create a $\lpos$~forcing example~$b^*$ and a $\lneg$~enforcing example~$r^*$.
\end{itemize}

Note that we add $M \coloneqq |E(G)|$~edge examples, $N$~choice examples, $N-\kappa=(p-1)\cdot\kappa$~separating examples, and 2~further examples.
Thus, the number of examples is polynomial in the input size.

For each~$i\in[\kappa]$, we add two features~$d_i^<$ and~$d_i^>$.
We also add another feature~$d^*$.
Thus, we have $2\cdot \kappa+1$~features.

It remains to describe the coordinates of the examples in the features.
Initially, we declare a \emph{default threshold}~$\default(d')$ for each feature~$d'$.
Then, each example~$e$ has the default threshold in each feature, unless we assign~$e$ a different threshold in that feature.

For each feature~$d_i^<$, we set~$\default(d_i^<)=p$, for each feature~$d_i^>$, we set~$\default(d_i^>)=1$, and for feature~$d^*$, we set~$\default(d^*)=2$.

\begin{itemize}
\item For each edge example~$e=e(v_i^x,v_j^z)$ we set~$e[d_i^<]=e[d_i^>]=x$, $e[d_j^<]=e[d_j^>]=z$, and~$e[d^*]=1$.
In each other features~$e$ is set to the default threshold.

\item For the separating example~$e=b(i,x)$ we set~$e[d_i^<]=e[d_i^>]=x+1/2$.
In each other features~$e$ is set to the default threshold.

\item For the choice example~$e=c_i^x$ we set~$e[d_i^<]=e[d_i^>]=x$.
In each other features~$e$ is set to the default threshold.

\item The $\lneg$~enforcing example~$r^*$ has the default threshold in every feature.
For the $\lpos$~forcing example~$b^*$, we set~$b^*[d^*]=1$, and in each other feature we use the default threshold.
\end{itemize}

Note that each feature has at most~$2p-1$ different thresholds.

\emph{Description of the input tree~$T$:}
Intuitively, the input tree~$T$ is a path and we first have the cuts in~$d_1^<$ in ascending order, then the cuts in~$d_1^>$ in descending order, the cuts in~$d_2^<$ in ascending order, and so on, until the cut~$d^*>1$.
A visualization of~$T$ is shown in part~$c)$ of \Cref{fig-lift-hard-d-t=0}.
Since~$T$ is a path it is sufficient to present the order of the cuts starting at the root:~$(d_1^<<1.5, d_1^<<2, d_1^<<2.5, \ldots, d_1^<<p,d_1^>>p-1/2,d_1^>>p-1,\ldots, d_1^>>1, d_2^<<1.5,\ldots, d_\kappa^>>1, d^*>1)$.
Let~$x\in[p-1]$.
For each~$i\in[\kappa]$, the left child of the cut~$d_i^<<x+1/2$ and the left child of the cut~$d_i^>>x+1/2$ is a $\lneg$~leaf.
Also, the right child of the cut~$d^*>1$ is a $\lneg$~leaf.
All remaining leaves are~$\lpos$.

Observe that~$T$ consists of $(p-1)\cdot 4\kappa+1$~inner nodes.

\emph{Error bound and~$\ell$:}
Finally, we set~$t\coloneqq 0$, and~$\ell\coloneqq (p-1)\cdot 2\kappa+1$.
Thus, $k=(p-1)\cdot 2\kappa$.
This completes our construction.

\emph{Calculation of~$\delta_{\max}$:}
Note that each edge example differs at most 4~times from the default thresholds, that each separating and each choice example differs exactly 2~times from the default thresholds, that~$b^*$ differs exactly one from the default thresholds, and that~$r^*$ always has the default thresholds.
Thus, $\delta_{\max}=6$.

\textbf{Correctness:}
We show that~$G$ has a multicolored independent set if and only if~$T$ can be raised to a tree~$T'$ having exactly $\ell$~inner nodes making at most $t=0$~errors.

$(\Rightarrow)$
Let~$S=\{v_i^{a_i}: i\in[\kappa],a_i\in[p]\}$ be a multicolored independent set in~$G$ (for example: see part~$a)$ of \Cref{fig-lift-hard-d-t=0}).
In the pruned tree~$T'$, for each feature~$d_i^<$, we preserve all cuts at thresholds~$x\in\{1.5, 2, 2.5, 3, \ldots, p\}$ for which~$x\le a_i$. Similarly, for each feature~$d_i^>$, we preserve all cuts at thresholds~$x\in\{1, 1.5 ,2 ,2.5 ,\ldots, p-1/2\}$ for which~$a_i\le x$. Further, we preserve the unique cut in feature~$d^*$.
In other words, in~$T'$ the cuts~$\{d_1^<<1.5, d_1^<<2, \ldots, d_1^<<a_i,d_1^>>p-1/2, \ldots, d_i^>>a_i, \ldots, d_\kappa^>>a_\kappa, d^*>1\}$ are preserved in that specific order.
A visualization of~$T'$ is shown in part~$d)$ of \Cref{fig-lift-hard-d-t=0}.
Clearly, $T'$ consists of $\ell=(p-1)\cdot 2\kappa+1$~inner nodes.
Thus, it remains to verify that~$T'$ makes no errors.

\emph{Outline:}
First, we make an observation for examples using the default threshold in a feature and second we use this observation to show that all examples are correctly classified by the pruned tree~$T'$.

\emph{Step 1:}
Observe that if any example~$e$ lands at some inner node of~$T$ corresponding to a cut in feature~$d'$ and~$e$ has the default threshold in that feature~$d'$, that is, $e[d']=\default(d')$, then~$e$ will always go to the right subtree of that node.
Since the pruned tree~$T'$ is a path, an example~$e$ which has the default threshold in each feature will be contained in the $\lneg$~leaf of the cut~$d^*>1$.
Also, in order for an example~$e$ to land in a different leaf, we only need to consider cuts of~$T'$ in features where~$e$ has a different threshold than the default threshold.

\emph{Step 2:} We distinguish the different example types.

\emph{Step 2.1:} 
By construction, the $\lneg$~enforcing example~$r^*$ always has the default threshold. 
Thus~$r^*$ ends up in the right child of the last cut of~$T'$ which is a $\lneg$~leaf.
Furthermore, the unique feature in which the $\lpos$~forcing example~$b^*$ does not have the default threshold is~$d^*$.
Thus, $b^*$ ends up in the left child of the last cut of~$T'$ which is a $\lpos$~leaf.

Thus, examples~$r^*$ and~$b^*$ are correctly classified by~$T'$.

\emph{Step 2.2:} 
Consider a $\lpos$~separating example~$e=b(i,z)$.
Recall that~$z=x+1/2$ and~$x\in[p-1]$ and recall that~$a_i\in\mathds{N}$ is the index of the selected vertex of color class~$i$.
Without loss of generality, assume that~$z<a_i$.
By Step~1, $e$ will end up in the cut~$d_i^<<1.5$ of~$T'$.
Also, recall that the next cuts in~$T'$ are~$d_i^<<2, \ldots, d_i^<<a_i$ in that specific order.
Consequently, $e$ goes to the left subtree of the cut~$d_i^<<z+1/2$, which by construction is a $\lpos$~leaf.
Thus, $e$ is correctly classified as~$\lpos$ by~$T'$.

\emph{Step 2.3:}
Consider a $\lneg$~choice example~$e=c_i^x$ where~$x\in[p]$.

First, consider the case that~$x\ne a_i$.
Then the argumentation is almost identical to the $\lpos$~separating examples:
Without loss of generality, assume that~$x<a_i$.
By Step~1, $e$ will end up in the cut~$d_i^<<1.5$ of~$T'$.
Also, recall that the next cuts in~$T'$ are~$d_i^<<2, \ldots, d_i^<<a_i$ in that specific order.
Consequently, $e$ goes to the left subtree of the cut~$d_i^<<x+1/2$, which by construction is a $\lneg$~leaf.

Second, consider the case that~$x=a_i$.
Observe that in all cuts of~$T'$ in features~$d_i^<$ and~$d_i^>$, example~$e$ will always go to the right subtree.
Since~$e$ has the default threshold in each features different from~$d_i^<$ and~$d_i^>$, example~$e$ ends up in the right leaf of the last cut~$d^*>1$ of~$T'$ which is a $\lneg$~leaf.

Thus, in both cases $e$ is correctly classified as~$\lneg$ by~$T'$.

\emph{Step 2.4:}
Consider a $\lneg$~edge example~$e=(v_i^x, v_j^z)$.
By assumption, $S$ is a multicolored independent set.
Hence, at least one of the two endpoints~$v_i^x$ and~$v_j^z$ is not contained in~$S$.
Without loss of generality, assume that~$v_i^x\notin S$ and that~$i<j$.
The argumentation is analog to the $\lneg$~choice examples~$c_i^x$ where~$x\ne a_i$:
Without loss of generality, assume that~$x<a_i$.
By Step~1, $e$ will end up in the cut~$d_i^<<1.5$ of~$T'$.
Also, recall that the next cuts in~$T'$ are~$d_i^<<2, \ldots, d_i^<<a_i$ in that specific order.
Consequently, $e$ goes to the left subtree of the cut~$d_i^<<x+1/2$, which by construction is a $\lneg$~leaf.
Thus, $e$ is correctly classified as~$\lneg$ by~$T'$.

Consequently, the raised tree~$T'$ has no classification errors.

$(\Leftarrow)$ 
Let~$T'$ be a solution for the raising problem, that is, $T'$ has $\ell=(p-1)\cdot 2\kappa+1$~inner nodes and has no classification errors.

\emph{Outline:}
We first show that the unique cut in feature~$d^*$ has to be preserved.
Second, we show that at least on of the two cuts~$d_i^<<x$ and~$d_i^>>x-1/2$ for each~$i$ and each~$x$ has to be preserved in~$T'$.
Third, because of our choice of~$\ell$ we then conclude that for each~$i$ and each~$x$ exactly one of the cuts~$d_i^<<x$ and~$d_i^>>x-1/2$ has to be preserved.
Fourth, we show that cuts preserved in a feature~$d_i^<$ (or~$d_i^>$) do not have \emph{gaps}, that is, if~$x$ is the largest (smallest) threshold, such that the cut~$d_i^<<x$ ($d_i^>>x$) is preserved in~$T'$, then also all cuts~$d_i^<<z$ for each~$z<x$ ($d_i^>>z$ for each~$x<z$) have to be preserved in~$T'$.
Fifth, we use this solution structure to identify a selected vertex of each color class.
Let~$S$ be the corresponding vertex set.
Finally, we show that~$S$ has to be a multicolored independent set.

\emph{Step 1:}
Note that the $\lpos$~forcing example~$b^*$ and that the $\lneg$~enforcing example~$r^*$ only differ in feature~$d^*$.
Since~$T$ has only one cut in feature~$d^*$, in the solution~$T'$ the cut~$d^*>1$ has to be preserved.

\emph{Step 2:}
Our aim is to show that at least one of the cuts~$d_i^<<x$ and~$d_i^>>x-1/2$ for any~$i\in[\kappa]$ and~$x\in\{1.5, 2, 2.5, \ldots, p\}$ has to be preserved in~$T'$.
Without loss of generality assume that~$x$ is an integer.
Note that~$x\ge 2$.
By construction, for the $\lpos$~separating example~$e=b(i,x-1)$ we have~$e[d_i^<]=e[d_i^>]=x-1/2$ and for the $\lneg$~choice example~$e=c_i^x$ we have~$e[d_i^<]=e[d_i^>]=x$.
Furthermore, note that~$b(i,x-1)$ and~$c_i^x$ have the default threshold in each other feature.
Consequently, only the cuts~$d_i^<<x$ and~$d_i^>>x-1/2$ of~$T$ can separate~$b(i,x-1)$ and~$c_i^x$.
Since~$T'$ has no classification errors, we thus conclude that at least one of the cuts~$d_i^<<x$ and~$d_i^>>x-1/2$ has to be preserved in~$T'$.

\emph{Step 3:}
Recall that~$\ell=(p-1)\cdot 2\kappa+1$.
By Step~1, in~$T'$ the cut~$d^*>1$ has to be preserved.
By Step~2, at least one of the cuts~$d_i^<<x$ and~$d_i^>>x-1/2$ for any~$i\in[\kappa]$ and~$x\in\{1.5, 2, 2.5, \ldots, p\}$ has to be preserved in~$T'$.
Note that these are exactly $(p-1)\cdot 2\kappa$~pairs of distinct cuts.
Consequently, in~$T'$ exactly one of the cuts~$d_i^<<x$ and~$d_i^>>x-1/2$ has to be preserved.

\emph{Step 4:}
Consider all preserved cuts in feature~$d_i^<$.
Let~$d_i^<<x$ be the rightmost preserved cut in feature~$d_i^<$, that is, for each~$z>x$, the cut~$d_i^<<z$ is pruned.
We claim that in~$T'$ all cuts~$d_i^<<z$ for any~$z\le x$ have to be preserved.
Without loss of generality, we assume that~$x$ is an integer.
Since all $\lpos$~separating examples and $\lneg$~choice examples with threshold~$w\le x$ in feature~$d_i^<$ have the default feature in all features except~$d_i^<$ and~$d_i^>$ and since all cuts in feature~$d_i^<$ appear before the cuts in feature~$d_i^>$ in~$T$ and thus also in~$T'$, we conclude that all these examples end up in a left leaf of one of the cuts~$d_i^<<z$ for some~$z\le x$.
Now, observe that in feature~$d_i^<$ the examples~$c_i^1, b(i,1), c_i^2, \ldots, c_i^{x-1}, b(i,x-1)$ have strictly monotone increasing thresholds and have alternating labels~$\lneg$ and~$\lpos$.
Consequently, all cuts of the form~$d_i^<<z$ for each~$z\le x$ need to be preserved in~$T'$.

By the above argumentation and Step~3, we conclude that in feature~$d_i^>$ all cuts of the from~$d_i^>>z$ for each~$z>x$ are preserved in~$T'$.

\emph{Step 5:}
Consider one fixed~$i\in[\kappa]$.
Let~$x_i$ be the largest threshold such that the cut~$d_i^<<x_i$ is preserved in~$T'$.
Recall that according to Step~4 all cuts~$d_i^<<z$ for any~$z\le x_i$ are preserved in~$T'$.
Assume towards a contradiction that~$x$ is no integer, that is, $x_i=q+1/2$ for some integer~$q\in[p-1]$.
Now, observe that the $\lpos$~separating example~$b(i,q)$ is put in the right subtree of each cut~$d_i^<<z$ for each~$z\le x_i$ and for each cut~$d_i^>>z$ for each~$x_i\le z$.
Also, since~$b(i,q)$ has the default threshold in each other feature, we conclude that in~$T'$ this example~$b(i,q)$ ends up in the right leaf of the last cut~$d^*>1$ of~$T'$ which is a $\lneg$~leaf.
Thus, $b(i,q)$ is misclassified, a contradiction.
Hence, $x_i$ is an integer.

We let~$v_i^{x_i}$ be the selected vertex of color class~$i$.
Furthermore, let~$S\coloneqq \{v_i^{x_i}:i\in[\kappa]\}$.

\emph{Step 6:}
It remains to verify that~$S$ is a multicolored independent set.
By definition, $S$ contains exactly one vertex of each color class.
Hence, it remains to show that~$S$ is an independent set.

Observe that since~$T'$ has no classification errors, it is sufficient to show that an edge example~$e=(v_i^x,v_j^z)$ gets misclassified by~$T'$ if both endpoints~$v_i^x$ and~$v_j^z$ are contained in~$S$.

Let~$e=e(v_i^x,v_j^z)$ be an edge example where~$v_i^x, v_j^z\in S$.
Note that analog to Step~5, the $\lneg$~edge example~$e$ ends up in the right subtree of each preserved cut in features~$d_i^<$, $d_i^>$, $d_j^<$, and~$d_j^>$.
The same is true for each feature~$d_q^<$ and~$d_q^>$ for each~$q\in[\kappa]\setminus\{i,j\}$ since in these features~$e$ has the default threshold.
But in feature~$d^*$ example~$e$ has not the default threshold, and thus~$e$ ends up in the left leaf of the cut~$d^*<1$ which is a $\lpos$~leaf, a contradiction.

\textbf{Lower Bound:}
Recall that~$d=2\cdot \kappa, \delta_{\max}=6$, and~$t=0$.
Since \textsc{Multicolored Independent Set} is \W[1]-hard with respect to~$\kappa$~\cite{CyFoKoLoMaPiPiSa2015}, we obtain that \Rais is \W[1]-hard with respect to~$d$ even if~$\delta_{\max}=6$ and~$t=0$.
Furthermore, since \textsc{Multicolored Independent Set} cannot be solved in $f(\kappa)\cdot n^{o(\kappa)}$~time unless the ETH fails~\cite{CyFoKoLoMaPiPiSa2015}, we observe that \Rais cannot be solved in $f(d)\cdot |I|^{o(d)}$~time  if the ETH is true, where~$|I|$ is the overall instance size, even if~$\delta_{\max}=6$ and~$t=0$.

\textbf{Adaption for Reasonable Trees.}
Note that some leafs corresponding to cuts in feature~$d_i^>$ are not reached by any example.
To make the tree~$T$ reasonable we do a similar adaption as in \Cref{thm-rais-w-hard-k}: 
For each leaf~$L$ of a cut in feature~$d_i^<$ (or~$d_i^>$) we add a new example which~$e$ which uses the default threshold in all features except (a) in $d_i^<$ (or~$d_i^>$) and (b) in~$d^*$, where $e[d^*]=0$, if leaf~$L$ is $\lpos$ and otherwise if~$L$ is $\lneg$, then~$e[d^*]=1$.
Note that correctness can be shown analogously to the non-reasonable case.
\end{proof}
%}

\ifjour
Again, since in the problems \Raisct and \MRaisct we have~$c_T\le d$, we obtain the following from \Cref{thm-rais-w-hard-d-t=0,thm-rais-w-hard-d+l}.

\begin{corollary}
Even if~$\delta_{\max}=6$ and~$t=0$, both \Raisct and \MRaisct are \W[1]-hard for~$c_T+d+\ell$ and, unless the ETH is false, they cannot be solved in $f(c_T+d)\cdot |I|^{o(c_T+d)}$~time.
\end{corollary}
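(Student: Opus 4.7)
My plan is to lift the hardness results of \Cref{thm-rais-w-hard-d+l,thm-rais-w-hard-d-t=0} to the constrained variants \Raisct and \MRaisct, using the general bound $c_T \le d$ that holds on every instance by definition.

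For the \W[1]-hardness of $c_T + d + \ell$, I would reuse the reduction of \Cref{thm-rais-w-hard-d+l}, which starts from \textsc{Multicolored Clique} with parameter $\kappa$ and outputs instances of \Rais and \MRais with $\delta_{\max} = 6$, $d = O(\kappa)$, and $\ell = O(\kappa)$. Since $c_T \le d$ on any constructed instance, the combined parameter satisfies $c_T + d + \ell = O(\kappa)$, so the very same instances witness \W[1]-hardness of \Raisct and \MRaisct parameterized by $c_T + d + \ell$ when $\delta_{\max} = 6$.

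For the ETH lower bound $f(c_T + d) \cdot |I|^{o(c_T + d)}$ under $t = 0$, I would invoke \Cref{thm-rais-w-hard-d-t=0}, which already establishes, for $t = 0$ and $\delta_{\max} = 6$, that no $f(d) \cdot |I|^{o(d)}$-time algorithm exists for \Rais or \MRais. Using $c_T \le d$, any $f(c_T + d) \cdot |I|^{o(c_T + d)}$-time algorithm for \Raisct or \MRaisct would yield an $f'(d) \cdot |I|^{o(d)}$-time algorithm on the same instances interpreted as instances of \Rais or \MRais, contradicting ETH.

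The only point of care, which is more of a bookkeeping step than a real obstacle, is to confirm that the two previously constructed families of instances are legitimate inputs to \Raisct and \MRaisct. This is immediate: one may choose $c_T$ to equal the actual maximum number of distinct features appearing on any root-to-leaf path in the constructed tree, which is at most $d$, so no structural modification of the original reductions is required.
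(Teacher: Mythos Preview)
Your approach is correct and matches the paper's own justification, which simply notes that $c_T \le d$ holds by definition in \Raisct and \MRaisct and then invokes \Cref{thm-rais-w-hard-d+l,thm-rais-w-hard-d-t=0}. Your more explicit split—using \Cref{thm-rais-w-hard-d+l} for the \W[1]-hardness in $c_T+d+\ell$ and \Cref{thm-rais-w-hard-d-t=0} for the ETH bound in $c_T+d$ with $t=0$—is exactly how the two cited results combine, and your remark about instantiating $c_T$ so that the constructed instances are valid inputs to the constrained variants is the only additional bookkeeping needed.
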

\fi

\section{Experiments}
\label{sec:experiments}

\newcommand{\numinstances}{40}

%\toappendix{
Our empirical study aimed to assess whether common decision-tree pruning heuristics achieve optimal tradeoffs between pruned nodes and classification errors.
This is made feasible by using the algorithmic results and complexity analysis from the preceding sections.
To this end, we selected benchmark instances used for computing minimum-size trees, as they would likely be suitable for exact algorithms for pruning trees as well.
We computed unpruned and pruned trees using the WEKA library.
We then took the unpruned trees and computed the whole Pareto front that contains for each number $k$ of pruned nodes, the minimum-possible classification error of the resulting pruned tree.
Then, we compared the Pareto front to the trade-offs chosen by the heuristics.

We used \numinstances\ datasets from the Penn Machine Learning Benchmarks library~\cite{romano2021pmlb}.
35 of the datasets were used before for computing minimum-size trees~\cite{BessiereHO09,NarodytskaIPM18} and since the number of examples was usually small we added further larger datasets.
Overall, the datasets range from 72 to 5404 examples (mean 674.88, median 302); for the full details, see \cref{tab:dataset-statistics} in the Appendix.
To meet the requirements of \Rais\ inputs, we transformed the data sets as follows (similarly to \citet{JanotaM20}):
First, we replaced each categorical feature by a set of new binary features indicating whether an example is in the category.
Second, we converted each instance into a binary classification problem by making two classes, one of which contains all examples of the largest original class and one which contains all remaining examples.
Finally, if two examples of different classes had the same value in all features, we removed one of them arbitrarily.

We computed unpruned and pruned trees using the C4.5 heuristic for decision-tree computation \cite{Quinlan93} implemented WEKA 3.8.5 \cite{FrankHHKP05}.
The unpruned trees were obtained by running WEKA's J48 classifier with the flags \texttt{-no-cv -B -M 0 -J -U}.
We obtained two types of pruned trees:
Those obtained by the \emph{replacement heuristic} implemented in J48 when run with the flags \texttt{-no-cv -B -M 0 -J -S} and those obtained by the \emph{raising heuristic}, corresponding to the flags \texttt{-no-cv -B -M 0 -J}.
Overall, the tree size $s$ ranges from 3 to 607 (mean 60.57, median 26); the number of features $d$ ranges from 2 to 88 (mean 12.00, median 9); the domain size $D$ ranges from 1 to 321 (mean 15.55, median 6); the maximum number of features $d_R$ on a root-to-leaf path ranges from 2 to 34 (mean 7.88, median 8)% ; the maximum number $D_T$ of thresholds on a path ranges from 1 to 42 (mean 9.95, median 8)
; the number of classification errors ranges from 0 to 302 (mean 25.17, median 4).
These ranges show that, indeed, parameters $d, D, d_T$ are suitable for designing fixed-parameter algorithms.
For the full details, see \cref{tab:tree-analysis-compact} in the Appendix.

\begin{figure}[t]
	\centering
	\includegraphics[width=\linewidth]{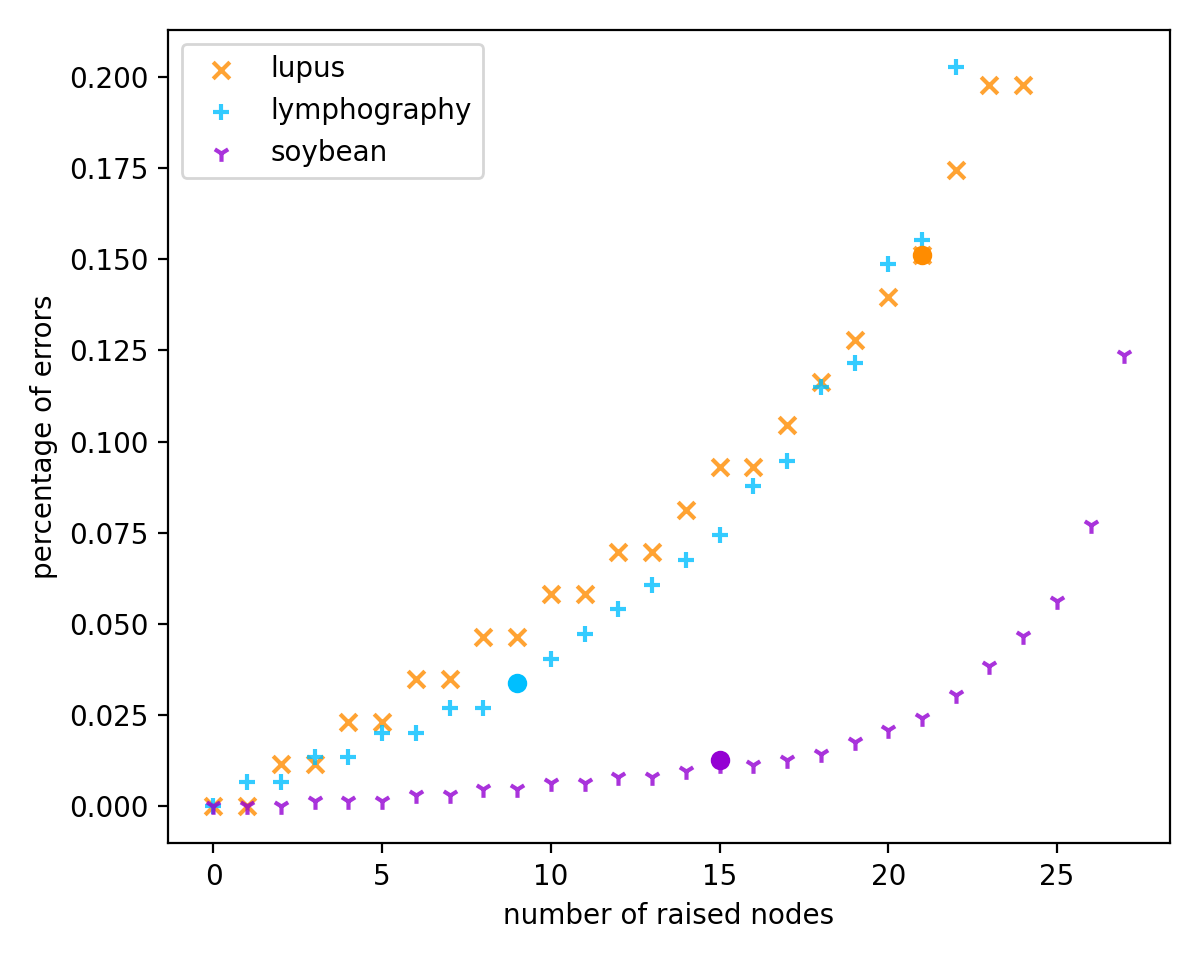}
	\caption{Visualization of the trade-off of number of raising operations and classification error for three instances.
	The thin points are the optimal values computed by our algorithm and the thick circles are the values achieved by the raising heuristic of WEKA.}
	\label{fig-pareto}
\end{figure}

We implemented a dynamic-programming algorithm for solving \Rais\ based on \Cref{thm-rais-xp-d-T} in Python, tested with versions 3.6.9 and 3.10.12.
We ran the implementation under Ubuntu Linux 18.04 on a compute cluster with Intel Xeon E5-2640 processors, setting a maximum RAM limit of 64GB.
We ran the algorithm for each dataset together with its unpruned tree to obtain, for each number $k$ of pruned nodes, the least number of classification errors.
After at most 24h of running time, 26 of the \numinstances\ datasets (65\,\%) could be solved.

For a visualization of the Pareto front for three instances, see \Cref{fig-pareto}.
Soybean is one of the few instances where the number of errors of the heuristic is larger than the optimum for that number of raising operations.
One can see that in these instances more nodes can be pruned than the heuristics do without worsening the accuracy much.

\begin{table}[t]
  \centering
  \caption{Datasets with improvable heuristic results: $s$ is the initial unpruned tree size, $k_{\text{rais}}$ the number of pruned nodes by the raising heuristic, $k_{\text{repl}}$ the number of pruned nodes by the replacement heuristic, $k^*$ the maximum number of nodes that can be pruned by raising operations while maintaining at most $t_{\text{rais}}$ errors. Column $t_{\text{rais}}$ contains the number of errors obtained by the raising heuristic, $t_{\text{repl}}$ the number of errors obtained by the replacement heuristic, and $t^*$ the minimum number of errors obtainable by pruning at least $k_{\text{rais}}$ nodes with raising operations.}
\label{tab:pruning-results}
\vskip 0.15in
  \small
  \scalebox{0.93}{
\begin{tabular}{@{}lrrrrrrrr@{}}
\toprule
Dataset & $s$ & $k_{\text{rais}}$ & $k_{\text{repl}}$ & $k^*$ & $t_{\text{rais}}$ & $t_{\text{repl}}$ & $t^*$ \\ 
\midrule
soybean & 28 & 15 & 15 & 17 & 8 & 8 & 7 \\
cleveland-nominal & 46 & 38 & 38 & 39 & 23 & 23 & 22 \\
haberman & 92 & 74 & 71 & 75 & 39 & 38 & 38 \\
postoperative-patient & 23 & 3 & 2 & 3 & 1 & 1 & 1 \\
heart-statlog & 54 & 31 & 27 & 31 & 17 & 15 & 17 \\
\bottomrule
\end{tabular}
}
\vskip -0.1in
\end{table}

The concrete questions we wanted to answer with the obtained data are as follows.
First, do the heuristics achieve close to minimum-possible numbers of errors for their chosen number of pruned nodes?
The answer is mostly yes: For all but four instances, one cannot achieve less errors.
\Cref{tab:pruning-results} shows the instances for which the optimal solution is better than the heuristics.
Second, do the heuristics achieve close to maximum-possible numbers of pruned nodes for their chosen number of errors?
Again, the answer is mostly yes, as it is possible to improve on only four instances, see \cref{tab:pruning-results}.

%}

\section{Outlook}

We provided a comprehensive analysis of the parameterized complexity of optimal pruning with subtree replacement and subtree raising, presenting algorithmic results and complexity-theoretic lower bounds for both operations. 
Further, we performed a small-scale experiment, showing the surprising result that pruning heuristics are almost optimal despite the hardness of the problem. 
Our algorithms were crucial for discovering this, since without them we could not compare the heuristics against the optimum.

While we managed to determine the parameterized complexity of most parameter combinations, some combinations of at least three parameters some remain open, such as whether \Rais\ or \MRais\ is FPT with respect to $d + t + \ell$.
Parameterized complexity of optimally pruning ensembles remains also open, since even though we showed it to be NP-hard already for two trees, it does not rule out fixed-parameter tractability.
This could be a potentially valuable research direction, since ensembles are typically more accurate than a single decision tree of the same size.

More generally, a natural follow-up question is whether there are stronger pruning operations that would beat the heuristics more clearly. 
A promising candidate in this direction could be an operation that can arbitrarily reconstruct parts of the decision tree \citep{SchidlerSzeider24}, thus enabling local changes without removing entire subtrees.

\section*{Acknowledgments}
Juha Harviainen was supported by the Research Council of Finland, Grant 351156.
Frank Sommer was supported by the Alexander von Humboldt Foundation.
Stefan Szeider was supported by the Austrian Science Fund (FWF) within the projects  10.55776/COE12 and 10.55776/P36420.

%\bibliography{tree-pruning} 
\bibliographystyle{icml2025}

\cleardoublepage
\newpage
\appendix
\onecolumn
\section*{Appendix}
%\appendixProofText

\setcounter{table}{0}
\renewcommand{\thetable}{A\arabic{table}}

\section{Description of Experiments}

\cref{tab:dataset-statistics} and \cref{tab:tree-analysis-compact} contain the descriptions of the datasets and the decision trees used in the experiments, respectively.

\begin{table*}[ht]
\centering
\small
\caption{Dataset statistics.}
\label{tab:dataset-statistics}
\vskip 0.15in
\begin{tabular}{lrrrrr}
\toprule
Dataset & \# Examples $n$ & \# Features $d$ & Class 0 & Class 1 & Class Ratio \\
\midrule
appendicitis & 106 & 7 & 85 & 21 & 0.25 \\
australian & 690 & 18 & 383 & 307 & 0.8 \\
backache & 180 & 55 & 155 & 25 & 0.16 \\
banana & 5300 & 2 & 2924 & 2376 & 0.81 \\
biomed & 209 & 14 & 75 & 134 & 1.79 \\
breast-cancer & 266 & 31 & 188 & 78 & 0.41 \\
bupa & 341 & 5 & 168 & 173 & 1.03 \\
cars & 392 & 12 & 147 & 245 & 1.67 \\
cleve & 302 & 27 & 164 & 138 & 0.84 \\
cleveland & 303 & 27 & 139 & 164 & 1.18 \\
cleveland-nominal & 130 & 17 & 61 & 69 & 1.13 \\
colic & 357 & 75 & 134 & 223 & 1.66 \\
contraceptive & 1358 & 21 & 764 & 594 & 0.78 \\
dermatology & 366 & 129 & 254 & 112 & 0.44 \\
diabetes & 768 & 8 & 268 & 500 & 1.87 \\
ecoli & 327 & 7 & 184 & 143 & 0.78 \\
flare & 1066 & 10 & 884 & 182 & 0.21 \\
glass & 204 & 9 & 128 & 76 & 0.59 \\
glass2 & 162 & 9 & 86 & 76 & 0.88 \\
haberman & 283 & 3 & 73 & 210 & 2.88 \\
hayes-roth & 84 & 15 & 59 & 25 & 0.42 \\
heart-c & 302 & 27 & 138 & 164 & 1.19 \\
heart-h & 293 & 29 & 106 & 187 & 1.76 \\
heart-statlog & 270 & 25 & 150 & 120 & 0.8 \\
hepatitis & 155 & 39 & 123 & 32 & 0.26 \\
Hill\_Valley\_without\_noise & 1212 & 100 & 600 & 612 & 1.02 \\
hungarian & 293 & 29 & 187 & 106 & 0.57 \\
ionosphere & 351 & 34 & 126 & 225 & 1.79 \\
lupus & 86 & 3 & 52 & 34 & 0.65 \\
lymphography & 148 & 50 & 67 & 81 & 1.21 \\
molecular\_biology\_promoters & 106 & 228 & 53 & 53 & 1.0 \\
new-thyroid & 215 & 5 & 65 & 150 & 2.31 \\
phoneme & 5404 & 5 & 3818 & 1586 & 0.42 \\
pima & 768 & 8 & 500 & 268 & 0.54 \\
postoperative-patient-data & 72 & 22 & 50 & 22 & 0.44 \\
schizo & 340 & 14 & 140 & 200 & 1.43 \\
soybean & 622 & 133 & 545 & 77 & 0.14 \\
tae & 106 & 5 & 71 & 35 & 0.49 \\
titanic & 2099 & 8 & 1418 & 681 & 0.48 \\
tokyo1 & 959 & 44 & 346 & 613 & 1.77 \\
\bottomrule
\end{tabular}
\vskip -0.1in
\end{table*}

\begin{table*}[ht]
\centering
\caption{Decision trees used in our experiments: The first entry is for the unpruned tree, the second for the tree computed by the replacement heuristic and the third for the raising heuristic.}
\label{tab:tree-analysis-compact}
\vskip 0.15in
\small
\begin{tabular}{lrrrrr}
\toprule
Dataset & Size $s$ & \# Features $d$ & \# Feat.\ $d_T$ on Path & Domain $D$ & Errors $t$ \\
\midrule
appendicitis & 15 / 10 / 10 & 6 / 6 / 6 & 5 / 5 / 5 & 5 / 3 / 3 & 0 / 2 / 2 \\
australian & 90 / 46 / 44 & 13 / 11 / 11 & 10 / 10 / 9 & 29 / 11 / 11 & 0 / 22 / 23 \\
backache & 26 / 13 / 13 & 13 / 9 / 9 & 9 / 6 / 6 & 7 / 2 / 2 & 0 / 7 / 7 \\
banana & 607 / 186 / 188 & 2 / 2 / 2 & 2 / 2 / 2 & 321 / 107 / 108 & 1 / 249 / 246 \\
biomed & 21 / 3 / 13 & 6 / 3 / 6 & 6 / 3 / 6 & 6 / 1 / 4 & 0 / 15 / 3 \\
breast-cancer & 95 / 31 / 24 & 25 / 21 / 17 & 14 / 10 / 9 & 8 / 6 / 5 & 2 / 31 / 33 \\
bupa & 111 / 72 / 65 & 5 / 5 / 5 & 5 / 5 / 5 & 21 / 18 / 14 & 0 / 25 / 28 \\
cars & 22 / 15 / 15 & 7 / 7 / 7 & 5 / 5 / 5 & 7 / 5 / 5 & 0 / 3 / 3 \\
cleve & 57 / 29 / 29 & 15 / 13 / 13 & 9 / 8 / 8 & 12 / 5 / 5 & 0 / 15 / 15 \\
cleveland & 55 / 31 / 31 & 16 / 13 / 13 & 8 / 8 / 8 & 10 / 6 / 6 & 0 / 13 / 13 \\
cleveland-nominal & 46 / 8 / 8 & 15 / 8 / 8 & 10 / 5 / 5 & 1 / 1 / 1 & 6 / 23 / 23 \\
colic & 51 / 28 / 28 & 27 / 18 / 18 & 9 / 8 / 8 & 6 / 4 / 4 & 0 / 15 / 15 \\
contraceptive & 486 / 120 / 106 & 21 / 21 / 21 & 13 / 11 / 11 & 33 / 21 / 21 & 10 / 217 / 224 \\
dermatology & 5 / 3 / 3 & 4 / 3 / 3 & 3 / 3 / 3 & 1 / 1 / 1 & 0 / 2 / 2 \\
diabetes & 137 / 96 / 87 & 8 / 8 / 8 & 8 / 8 / 8 & 24 / 16 / 14 & 0 / 24 / 30 \\
ecoli & 25 / 5 / 5 & 5 / 3 / 3 & 4 / 3 / 3 & 11 / 2 / 2 & 0 / 10 / 10 \\
flare & 93 / 15 / 12 & 8 / 7 / 6 & 8 / 7 / 6 & 5 / 4 / 2 & 125 / 159 / 160 \\
glass & 28 / 26 / 24 & 7 / 7 / 7 & 7 / 7 / 7 & 7 / 7 / 6 & 0 / 1 / 2 \\
glass2 & 22 / 16 / 14 & 6 / 5 / 5 & 5 / 4 / 4 & 6 / 5 / 4 & 0 / 4 / 5 \\
haberman & 92 / 21 / 18 & 3 / 3 / 3 & 3 / 3 / 3 & 30 / 9 / 8 & 2 / 38 / 39 \\
hayes-roth & 14 / 12 / 12 & 11 / 10 / 10 & 10 / 10 / 10 & 1 / 1 / 1 & 0 / 1 / 1 \\
heart-c & 57 / 29 / 29 & 15 / 13 / 13 & 9 / 8 / 8 & 12 / 5 / 5 & 0 / 15 / 15 \\
heart-h & 57 / 32 / 30 & 20 / 18 / 18 & 13 / 11 / 10 & 13 / 6 / 6 & 0 / 14 / 14 \\
heart-statlog & 54 / 27 / 23 & 17 / 13 / 12 & 10 / 10 / 9 & 13 / 7 / 6 & 0 / 15 / 17 \\
hepatitis & 18 / 12 / 11 & 10 / 9 / 8 & 7 / 7 / 6 & 3 / 2 / 2 & 0 / 3 / 4 \\
Hill\_Valley\_without\_noise & 250 / 228 / 224 & 88 / 85 / 84 & 34 / 34 / 33 & 63 / 47 / 47 & 0 / 11 / 13 \\
hungarian & 57 / 32 / 30 & 19 / 19 / 18 & 13 / 11 / 10 & 13 / 6 / 6 & 0 / 14 / 14 \\
ionosphere & 21 / 19 / 19 & 12 / 11 / 11 & 9 / 9 / 9 & 4 / 4 / 4 & 0 / 1 / 1 \\
lupus & 25 / 4 / 4 & 2 / 2 / 2 & 2 / 2 / 2 & 20 / 2 / 2 & 0 / 13 / 13 \\
lymphography & 23 / 14 / 14 & 18 / 11 / 11 & 10 / 6 / 6 & 1 / 1 / 1 & 0 / 5 / 5 \\
molecular\_biology\_promoters & 12 / 10 / 10 & 11 / 9 / 9 & 6 / 5 / 5 & 1 / 1 / 1 & 0 / 1 / 1 \\
new-thyroid & 13 / 9 / 9 & 5 / 5 / 5 & 5 / 4 / 4 & 4 / 4 / 4 & 0 / 2 / 2 \\
phoneme & 504 / 341 / 343 & 5 / 5 / 5 & 5 / 5 / 5 & 159 / 99 / 100 & 0 / 98 / 95 \\
pima & 137 / 96 / 87 & 8 / 8 / 8 & 8 / 8 / 8 & 24 / 16 / 14 & 0 / 24 / 30 \\
postoperative-patient-data & 23 / 21 / 20 & 13 / 13 / 12 & 10 / 10 / 9 & 1 / 1 / 1 & 0 / 1 / 1 \\
schizo & 83 / 69 / 66 & 12 / 12 / 12 & 10 / 10 / 10 & 15 / 10 / 10 & 0 / 8 / 9 \\
soybean & 28 / 13 / 13 & 22 / 12 / 12 & 10 / 7 / 7 & 1 / 1 / 1 & 0 / 8 / 8 \\
tae & 41 / 21 / 21 & 5 / 5 / 5 & 5 / 5 / 5 & 13 / 7 / 7 & 0 / 11 / 11 \\
titanic & 336 / 61 / 63 & 8 / 8 / 8 & 8 / 7 / 7 & 61 / 20 / 19 & 157 / 302 / 296 \\
tokyo1 & 46 / 34 / 33 & 24 / 22 / 21 & 16 / 16 / 15 & 10 / 5 / 5 & 0 / 6 / 7 \\
\bottomrule
\end{tabular}
\vskip -0.1in
\end{table*}

\end{document}

%%% Local Variables:
%%% mode: latex
%%% TeX-master: t
%%% End: